\setlist{noitemsep}
\newcommand{\PA}[1]{P\!A_{#1}}
\newcommand{\CH}[1]{C\!H_{#1}}
\newcommand{\I}[0]{\mathcal{I}}
\newcommand{\IS}[0]{\textls[-5]{IS}}
\newcommand{\CCS}[0]{\textls[-5]{CCS}} 
\newcommand{\CS}[0]{\textls[-5]{CS}}
\newcommand{\E}[0]{\mathbb{E}}
\newcommand{\X}[0]{\mathbf{X}}
\newcommand{\Xl}[0]{\mathbf{X}_{EV}}
\newcommand{\funcform}[1]{\mathsf{#1}}
\newcommand{\M}[0]{\mathcal{M}}
\newcommand{\spaceX}[0]{\mathcal{X}}
\newcommand{\spaceXl}[0]{\mathcal{X}_{EV}}
\newcommand{\setG}[0]{{\Phi}}
\newcommand{\setS}[0]{\mathcal{S}}
\newcommand{\setA}[0]{\mathcal{A}}
\newcommand{\Bet}[0]{\mathbf{\bm\beta}}
\newcommand{\xselt}[1][t]{g_{\text{sel, #1}}}
\newcommand{\xsel}[1][t]{g_{\text{sel}}}
\newcommand{\graph}[0]{\mathscr{G}}
\newcommand{\hierarchy}[0]{\mathscr{H}}
\newcommand{\hs}[0]{\mathcal{H}}
\newcommand{\Levelf}[0]{\mathcal{L}}
\newcommand{\summarygraph}[0]{\mathsf{G}}
\newcommand{\Qg}[1]{Q^{(#1)}_{\text{G}}}
\newcommand{\Qng}[1]{Q^{(#1)}_{\text{NG}}}
\newcommand{\ANS}[0]{\text{ANC}_{n}}
\newcommand{\HRCB}[0]{HRC$_{\text{b }}$}
\newcommand{\HRCH}[0]{HRC$_{\text{h }}$}
\newcommand{\HRCC}[0]{HRC$_{\text{c }}$}
\newcommand{\HRCS}[0]{HRC$_{\text{s }}$}
\newcommand{\RETURN}{\STATE \textbf{return} }
\DeclareMathOperator*{\argmax}{arg\,max}
\DeclareMathOperator*{\argmin}{arg\,min}
\setlist[itemize]{itemsep=1pt, topsep=1pt}
\definecolor{lightblue}{rgb}{0.5, 0.7, 0.95}
\theoremstyle{plain}
\newtheorem{theorem}{Theorem}[section]
\newtheorem{proposition}[theorem]{Proposition}
\newtheorem{lemma}[theorem]{Lemma}
\theoremstyle{definition}
\newtheorem{definition}[theorem]{Definition}
\newtheorem{assumption}[theorem]{Assumption}
\theoremstyle{remark}
\newtheorem{remark}[theorem]{Remark}
\icmltitlerunning{HRL with Targeted Causal Interventions}
\begin{document}

\twocolumn[
\icmltitle{Hierarchical Reinforcement Learning with Targeted Causal Interventions}




\begin{icmlauthorlist}
\icmlauthor{Sadegh Khorasani}{yyy}
\icmlauthor{Saber Salehkaleybar}{sss}
\icmlauthor{Negar Kiyavash}{nnn}
\icmlauthor{Matthias Grossglauser}{yyy}
\end{icmlauthorlist}

\icmlaffiliation{yyy}{School of Computer and Communication Sciences, EPFL, Lausanne, Switzerland}
\icmlaffiliation{nnn}{College of Management of Technology, EPFL, Lausanne, Switzerland}
\icmlaffiliation{sss}{Leiden Institute of Advanced Computer Science (LIACS), Leiden University, Leiden, The Netherlands}


\icmlcorrespondingauthor{Sadegh Khorasani}{sadegh.khorasani@epfl.ch}
\icmlcorrespondingauthor{Saber Salehkaleybar}{s.salehkaleybar@liacs.leidenuniv.nl}
\icmlcorrespondingauthor{Negar Kiyavash}{negar.kiyavash@epfl.ch}
\icmlcorrespondingauthor{Matthias Grossglauser}{matthias.grossglauser@epfl.ch}

\icmlkeywords{Machine Learning, ICML}

\vskip 0.3in
]



\printAffiliationsAndNotice{}  
\begin{abstract}

Hierarchical reinforcement learning (HRL) improves the efficiency of long-horizon reinforcement-learning tasks with sparse rewards by decomposing the  task into a hierarchy of subgoals. The main challenge of HRL is efficient discovery of the hierarchical structure among subgoals and utilizing this structure to achieve the final goal. We address this challenge by modeling the subgoal structure as a causal graph and propose a causal discovery algorithm to learn it. Additionally, rather than intervening on the subgoals at random during exploration, we harness the discovered causal model to prioritize subgoal interventions based on their importance in attaining the final goal. These targeted interventions result in a significantly more efficient policy in terms of the training cost. Unlike previous work on causal HRL, which lacked theoretical analysis, we provide a formal analysis of the problem. Specifically, for tree structures and, for a variant of Erdős-Rényi random graphs, our approach results in remarkable improvements. Our experimental results on HRL tasks also illustrate that our proposed framework outperforms existing work in terms of training cost.

\end{abstract}
\section{Introduction}
\label{introduction}

In traditional reinforcement learning (RL), an agent is typically required to solve a specific task based on immediate feedback from the environment \citep{sutton2018reinforcement}.
However, in many real-world applications, the agent faces tasks where rewards are sparse and delayed. This poses a significant challenge to traditional RL since the agent must take many actions without receiving an immediate reward. For instance, in maze-solving tasks, the agent might only receive a reward once it finds a path to a specific exit or a location within the maze. Hierarchical reinforcement learning (HRL) allows the agent to break down the problem into subtasks, which is helpful in environments where achieving goals requires a long horizon \citep{barto2003recent,eysenbach2019search, le2018hierarchical,bacon2017option}.


Hierarchical policies are widely used in HRL to manage the complexity of long-horizon and sparse-reward tasks. However, learning hierarchical policies introduces significant challenges. To address these issues, many methods are proposed to improve learning efficiency
\citep{kulkarni2016hierarchical, vezhnevets2017feudal, levy2017learning}. The high-level policy does not deal with primitive actions but instead sets subgoals or options \citep{sutton1999between} that the lower-level policies must achieve. Lower-level policies are trained to achieve subgoals assigned by the high-level policy. 
In order to discover meaningful subgoals and their relationships, it is essential to ensure that both high-level and low-level policies explore the environment efficiently. However, naive exploration of the entire goal space is not sample efficient in high-dimensional goal spaces. Some studies in the literature aim to improve sample efficiency and scalability by employing off-policy techniques \citep{nachum2018data}, utilizing representation learning \citep{nachum2018near}, or restricting the high-level policy to explore only a subset of the goal space \citep{zhang2020generating}.

In order to improve exploration efficiency, recently, \citet{hu2022causality, nguyen2024variable} proposed using an off-the-shelf causal discovery algorithm \citep{ke2019learning} to infer the causal structure among subgoals. This learned causal structure is subsequently used to select a subset of subgoals, from which one is chosen \textit{uniformly at random} for exploration. In \citet{nguyen2024variable}, the agent initially takes random actions until it serendipitously achieves the final goal a few times--a scenario that is highly unlikely in long-horizon tasks with a huge state space. After that, the agent infers the causal structure among state-action pairs which is more challenging than recovering the causal structure among subgoals when the state space is very large.
It is noteworthy that
the causal discovery algorithm used in both works was applied without any adaptation to the HRL setting, and no theoretical guarantee was provided regarding the quality of its output.
Additionally, they did not include theoretical analysis for their proposed framework.


In this paper, we introduce the Hierarchical Reinforcement Learning via Causality (HRC) framework where achieving a subgoal by a multi-level policy is formulated as an intervention in the environment. The HRC framework enables us to discover the causal relationships among subgoals (so-called subgoal structure) through targeted interventions. The agent is fully guided by the recovered subgoal structure in the sense that the acquired causal knowledge is utilized in both designing targeted interventions and also in training the multi-level policy.
Our main contributions are as follows:
\begin{itemize}
    \item We introduce a general framework called HRC (Section \ref{sec:HRC}) that enables the agent to exploit the subgoal structure of the problem to achieve the final goal more efficiently. In particular, we design three causally-guided rules (sections \ref{sc:subgoal_discovery} and \ref{heuristic3})  to prioritize the subgoals in order to explore those with higher causal impact on achieving the final goal (see Section \ref{sc:subgoal_discovery}).

     \item Although subgoal discovery in the HRC framework could be carried out using off-the-shelf causal discovery algorithms,  we propose a new causal discovery algorithm (Section \ref{sc:causal_discovery}) specifically tailored to subgoal discovery in the HRL setting, and we provide theoretical guarantees for it. 
    
    \item Our HRC framework allows us to formulate the cost of training for any choice of exploration strategy (Section \ref{sec:Formulating_cost}). In particular, we theoretically bound the training cost for our proposed causally-guided rules, which are derived harnessing the subgoal structure, compared to an agent that performs random subgoal exploration (Section \ref{sc:costanalysis}). 

    \item The experimental results in Section \ref{sc:experiments} demonstrate the clear advantage of the proposed approach compared to the state of the art
    both in synthetic datasets and in a real-world scenario (the Minecraft environment). 

\end{itemize}
\section{Related Work}
\label{sc:related-work}

Hierarchical policies enable multilevel decision-making (see Appendix \ref{apdx:sc:related-work} for related work). Moreover, causal relationships can be utilized to improve exploration, generalization, and explanation (see Appendix \ref{apdx:sc:related-work}).
In the following, we review the related work at the intersection of HRL and causality.

\textbf{Causal hierarchical reinforcement learning}: 
In \citet{corcoll2020disentangling}, a notion of subgoal is defined based on the causal effect of an action on the changes in the state of the environment. A high-level policy is trained to select from the defined subgoals. However, in the exploration phase, the subgoals are chosen uniformly at random without exploiting the causal relationships among subgoals.  
Recent work \citet{hu2022causality, nguyen2024variable} used the causal discovery algorithm in \citet{ke2019learning} to infer causal structures among subgoals. In \citet{hu2022causality}, this learned causal structure is subsequently used to select
a subset of subgoals, from which one is chosen uniformly at random (without any prioritization among the selected subgoals)  for exploration. As a result, it made limited use of the causal structure during exploration.  In \citet{nguyen2024variable}, the agent initially takes random actions until it accidently achieves the final subgoal a few times (which is highly improbable in long-horizon tasks with a large state space). Following this, the agent infers the causal structure among state-action pairs, which is more challenging than inferring the causal structure among subgoals when the state space is large.
Moreover, both work directly applied the causal discovery algorithm in \citet{ke2019learning} without providing any theoretical guarantee on its performance.



\section{Problem Statement and Notations }
\label{sc:preliminaries}

In this section, we review the concepts and define the notations necessary to formulate our proposed work on HRL. 
In many real-world applications, 
an agent must perform a sequence of actions before receiving any reward signal from the environment. 
We focus on this specific setting, where the agent must achieve intermediate objectives before receiving a reward. Figure \ref{fig:mini-craft} shows a simple example where a craftsman must obtain wood and stone to build a pickaxe. The craftsman receives a reward only if he builds a pickaxe; otherwise, he receives no reward. That is, while obtaining both wood and stone are intermediate subgoals that the craftsman needs to plan for, it does not result in a reward.
The action space (denoted by $\setA$) for this game includes moving ``right'', ``left'', ``up'', and ``down'', `` pick'' and ``craft'', which enables the craftsman to interact with the environment. 
\begin{wrapfigure}{r}{0.4\linewidth}
\vspace{-.4cm}
    \centering    \includegraphics[width=0.99\linewidth]{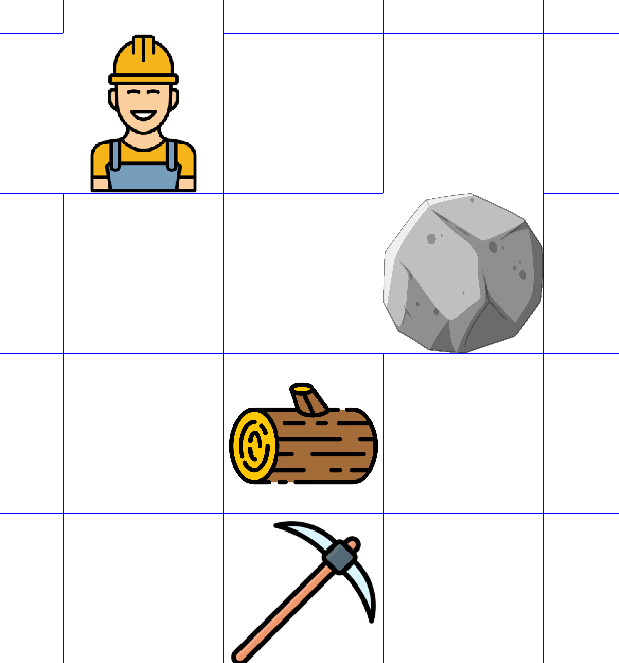}
    \caption{Craftsman in the mini-craft game.}
    \label{fig:mini-craft}
\vspace{-.5cm}
\end{wrapfigure}

In our setup, the agent perceives the environment in the form of ``disentangled factors'' \citep{hu2022causality}. For instance, in a visual observation of a robotic environment, the disentangled factors may include elements such as object quantity, object position, and velocity. We define Environment Variables (EVs) as the disentangled factors of the environment observations. We assume that the agent has access to all EVs. We denote the set of EVs by \(\spaceXl=\{X_1, \cdots, X_l\}\). The vector of these EVs at time step $t$, denoted  by \(\Xl^t=(X_1^t, \cdots, X_l^t)\),  is the \textit{state of the system}.
In the craftsman game of Figure \ref{fig:mini-craft}, the craftsman observes his position on the map and the quantity of each object (wood or stone) in his backpack.  
Therefore, in this game, $\spaceXl = \{X_1, X_2, X_3, X_4\}$ = \{``number of wood'', `` number of stone'', `` number of pickaxe'', ``craftsman position''\}. 

We define a subset \(\spaceX \subseteq \spaceXl\), where each variable \(X_i \in \spaceX\) represents a resource environment variable. Resource environment variables refer to items, tools, or abilities that the agent attains throughout the game and are key factors in achieving the final objective.       
Without loss of generality, we assume that the first \( n \) EVs are resource EVs, hence \(\spaceX=\{X_1, \cdots, X_n\}\), where $n\leq l$. We denote the vector of $\spaceX$ at time step $t$ by \(\X^t=(X_1^t, \cdots, X_n^t)\).
In our example, the objects wood, stone, and pickaxe are resource EVs. Therefore, $\spaceX = \{X_1, X_2, X_3\}$.

Furthermore, we define a set of subgoals \(\setG=\{g_1, \cdots, g_n\}\), associated with a set of values \(e=\{e_1, \cdots, e_n\}\), where each $g_i$ corresponds to resource EV $X_i$. We say subgoal $g_i$ is achieved at time $t$ if and only if \(X_i^t = e_i\).
Without loss of generality, we consider $g_n$ as the only desired main goal that the agent is supposed to achieve and refer to it as the ``final subgoal'' or ``final goal''.
In our example, for instance, the subgoals $g_1$, $g_2$, and $g_3$ with the corresponding values $e_1=1$, $e_2=1$, and $e_3=1$ correspond to obtaining one wood, one stone, and one pickaxe, respectively
\footnote{ Our framework assumes discretized resource EVs, which is a reasonable assumption in domains where acquiring specific resources is required. For environments with continuous variables, one can apply disentangled representation learning methods to high-dimensional continuous observations in order to extract categorical latent variables—such as using categorical VAEs. We view this as a natural extension of our current work.  }. 

In many real-world applications, resource variables (such as a skill, tool, or access to certain objects) only need to be attained once. For instance, once a skill is attained by an agent in an environment, it remains permanently available. Motivated by this observation but mainly for ease of presentation, we suppose that the domain of each resource EV is $\{0,1\}$, and a subgoal $X_i$ is achieved when $e_i =1$ \footnote{ Our proposed solutions can be extended to the settings with non-binary domains for EVs at the cost of more cumbersome notation. {The assumption of binary variables for resource EVs is made solely to facilitate the cost analysis in Section \ref{sc:costanalysis} and to provide theoretical guarantees for our causal discovery method in Section \ref{sc:causal_discovery}.} Indeed, in our experiments, we evaluate our proposed methods in the non-binary settings.}
. 

 {We model the HRL problem of our interest through a subgoal-based Markov decision process. 
The subgoal-based MDP is formalized as a tuple $(\setS, \setG, \setA, T, R,  \gamma)$, where
     $\setS$ is the set of all possible states (in our setup, all possible values of $\Xl^t$),
     $\setG$ is the subgoal space, which contains intermediary objectives that guide the learning of the policy,
     $\setA$ is the set of actions available to the agent,
     $T: \setS \times \setA \times \setS \to [0, 1]$ is the transition probability function that denotes the probability of the transition from state $s$ to state $s'$ after taking an action $a$, and
     $R: \setS \times \setA \times \setG \to \mathbb{R}$ is the reward indicator function of the agent pertaining to whether a subgoal $g$ was achieved in state $s$ after taking action $a$. More specifically,
    $R(s, a, g) =  1, \text{if the subgoal } g \text{ is achieved, otherwise it is zero}$. $\gamma$ is the discount factor that quantifies the diminishing value of future rewards.
Given a state $s \in \setS$ and a subgoal $g \in \setG$, the objective in a subgoal-based MDP is to learn a subgoal-based policy $\pi(a|s, g):\setS \times \setG \to  \setA $ that maximizes value function defined as $V^\pi(s, g) = \E_{\pi} \left[ \sum_{t=0}^\infty \gamma^t R(s_t, a_t, g) \mid s_0 = s\right]$.
}
 

We use the term \textit{system probes} to refer to the observed state-action pairs.
For a given time horizon $H$, the agent observes a sequence of state-action pairs $\tau=(s_0,a_0,\cdots,s_{H-1},a_{H-1})$, called a trajectory. The action variable at time step $t$ is denoted by $A^t \in \setA$. We assume that the process \(\{A^t, \Xl^t\}_{t\in\mathbb{Z}^+}\) can be described by a structural causal model (SCM) (see the definition of SCM \ref{def:scm}) in the following form:
\begin{equation}
\begin{cases}
    X_{i}^{t+1} = f_{i}(\text{pa}(X_{i}^{t+1}), A^{t}, \epsilon_{i}^{t+1}), &  \forall t\geq 1 \text{ and } 1\leq i \leq l,\\
    A^{t+1} = f_{0}(\Xl^{t+1}, \epsilon_{0}^{t+1}), &  \forall t\geq 0,
\end{cases}
\label{eq:num_scm}
\end{equation}
where \( \text{pa}(X_{i}^{t+1}) \) represents the parent of $X_i^{t+1}$ in \( \Xl^{t} \), and $f_i$ is the causal mechanism showing how \( \text{pa}(X_{i}^{t+1}) \) influences $X_i$ at time $t+1$. Furthermore, $\epsilon^{t+1}_i$ and $\epsilon_{0}^{t+1}$ represent the corresponding exogenous noises of $X_i$ and $A$ at time $t+1$, respectively. 


The summary graph $\summarygraph$ is used to graphically represent the causal relationships in the SCM where there is a node for each variable in $\spaceXl\cup\{A\}$. Furthermore, an edge from \(X_j\) to $X_i$ is drawn in $\summarygraph$ if $X_j^t \in \text{pa}(X_i^{t+1})$ for any $t$. Additionally, there exist directed edges from the action variable \(A\) to each environment variable \(X_i\) and from \(X_i\) back to \(A\).
\begin{wrapfigure}{r}{0.3\linewidth}
    \hspace{-.4cm}
    \centering    \includegraphics[width=1.1\linewidth]{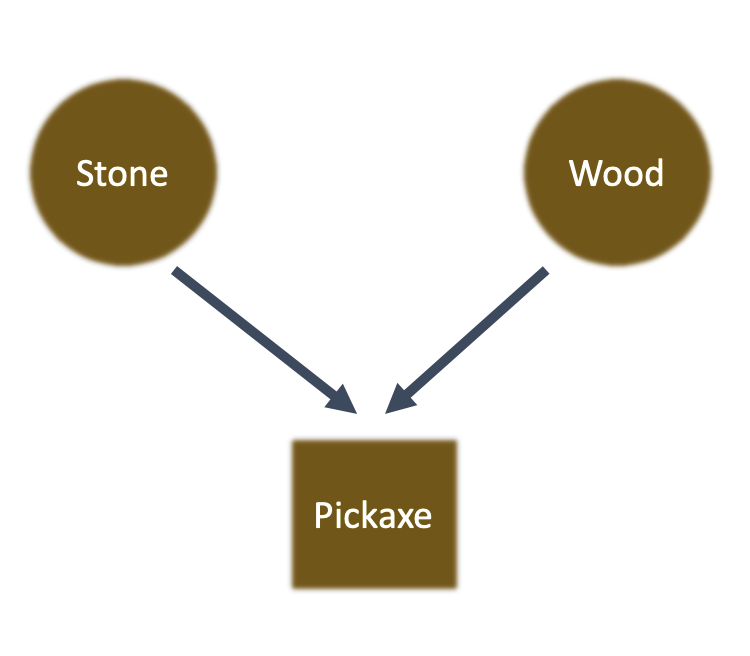}
    \vspace{-.7cm}
    \caption{Subgoal structure $\graph$ of the mini-craft game.}
    \label{fig:mini-craft-graph}
    \vspace{-.5cm}
\end{wrapfigure}
\begin{definition}[Subgoal Structure]
The subgoal structure, denoted as \(\graph\), is a directed graph (not necessarily acyclic), where the nodes represent subgoals in the set \(\setG\).
In \(\graph\), a directed edge from subgoal \(g_j\) to subgoal \(g_i\) exists if and only if there is at least one path in the summary graph \(\summarygraph\) from \(X_j\) to \(X_i\) such that all intermediate nodes along this path belong to the set \((\spaceXl\setminus \spaceX)\). 
These edges in \(\graph\) represent a one-step causal relationship from one subgoal to another by ignoring intermediate environmental variables not designated as subgoals. 
\end{definition}

The subgoal structure of the above mini-craft game is depicted in Figure \ref{fig:mini-craft-graph}.
We use the terms `subgoal' and `node' interchangeably. In \(\graph\), the parent and children sets of a subgoal \(g_i\) are denoted by \(\PA{g_i}\) and \(\CH{g_i}\), respectively. 

In our setup, the subgoals are further categorized into two types, based on their corresponding causal mechanisms.
$g_i$ is an ``AND'' subgoal if it can be achieved at time step $t$ if and only if all its parents in the set $\PA{g_i}$ have been achieved prior to time step $t$,  indicating a strict conjunctive requirement for the achievement of $g_i$. $g_i$ is an ``OR'' subgoal if it can be achieved at time step $t$ if at least one of its parents in the set $\PA{g_i}$ has been achieved before time step $t$. This indicates a disjunctive requirement for the activation of $g_i$, where the achievement of any single parent is sufficient to achieve $g_i$.
In the rest of the paper, we represent an AND subgoal with a square and an OR subgoal with a circle in the figures.

We denote the estimate of $\graph$ by $\hat{\graph}$. Additionally, the parent and children sets of a subgoal \(g_i\) in the graph \(\hat{\graph}\) are denoted by \(\PA{g_i}^{\hat{\graph}}\) and \(\CH{g_i}^{\hat{\graph}}\), respectively. For a given set of subgoals \( L \), $    \CH{L}^{\hat{\graph}} = \bigcup_{g_i \in L} \CH{g_i}^{\hat{\graph}}.$

\begin{definition}[Hierarchical Structure]
\label{def:hierarichial-structure}
The hierarchical structure is denoted by $\hs = (\hierarchy, \Levelf)$, where $\hierarchy$ is a directed acyclic graph (DAG) which is a subgraph of \(\hat{\graph}\), and \(\Levelf: \setG \to \mathbb{N}\) is a level assignment function, where \(\Levelf(g_i)\) denotes the hierarchy level of subgoal \(g_i\). 
In \(\hs\), each node is assigned to a hierarchy level such that for each subgoal \(g_i\),
$\Levelf(g_i) \geq \max_{g_j \in \PA{g_i}^{\hat{\graph}}} \Levelf(g_j) + 1$ (See an example in \ref{apdx:preliminary-example}).

\end{definition}

\begin{definition}[Intervention, Explorational Data]
\label{def:intervention}
An intervention on a subgoal \( g_i \) at time ${t^*}$ is considered as replacing the structural assignment \eqref{eq:num_scm} of the corresponding environment variable $X_i$ with $X_i^t=\alpha$ for $t\geq t^*$, where \( \alpha \in \{0, 1\} \). This intervention is denoted by \(\text{do}(X^{{t^*}}_i = \alpha)\). Moreover, the explorational data of the subgoal $g_i$ is shown by $D_{\text{do}(X^{{t^*}}_i = 1)}$ and consists of the state-action pairs that the agent collects (the actions are taken randomly) from the time step $t^*$ until $\min(H, t^*+\Delta)$, where $\Delta$ is a positive integer and $H$ is the horizon. 

\end{definition}



Furthermore, for a given set of subgoals \( B \) and any time step $t$, $X^{t}_B=a$ implies that $X^t_i=a$ for all $g_i \in B$ .

\begin{definition}
\label{def:ECE}
Let $A$ and $B$ be two subsets of subgoals $\setG$, such that $g_n \notin (A \cup B)$. The expected causal effect ($ECE$) of $A$ on the final goal $g_n$ at some time step $t^*+\Delta$ (where $\Delta$ is a positive integer)  conditioned on $\text{do}(X^{t^*}_B = 0)$ is defined as:
\begin{align*}
ECE_{t^*}^{\Delta}(A, B,g_n)&=\E[X_n^{t^*+\Delta} \mid \text{do}(X^{t^*}_A = 1), \text{do}(X^{t^*}_B = 0)]\\ &- \E[X_n^{t^*+\Delta} \mid \text{do}(X^{t^*}_A = 0), \text{do}(X^{t^*}_B = 0)].
\end{align*}



\end{definition}

\section{Hierarchical Reinforcement Learning via Causality (HRC)}
\label{sec:HRC}
In this section, we introduce Hierarchical Reinforcement Learning via Causality (HRC) framework which guides the learning policy by incorporating information pertaining to 
hierarchical subgoal structure (see Definition \ref{def:hierarichial-structure}). In order to describe the HRC framework, we first define the notion of ``controllable subgoal.'' 

\begin{definition}
\label{def:controllable}
    For a given $\epsilon>0$, a subgoal $g_i$ is \textbf{($1-\epsilon$)-controllable} by the subgoal-based policy $\pi(a \mid s, g_i)$, if there exists a time step $t^* < H$  such that $g_i$ is achieved with probability at least $1 - \epsilon$ (i.e., $\mathbb{P}(X_i^{t^*} = 1) \ge 1 - \epsilon$), when actions are taken according to $\pi$. 
\end{definition}

\begin{assumption}
    \label{assump:const-subgoal}
    We assume that if a subgoal $g_i$ is achieved at a time step $t^*$ (i.e., $X_i^{t^*} = 1$), it will remain achieved for all future time steps. Formally, if $X_i^{t^*} = 1$, then $X_i^{t} = 1$ for all $t \geq t^*$.
\end{assumption}

\begin{remark}
 \label{rm:control-result}
    Suppose a subgoal $g_i$ is $(1-\epsilon)$-controllable. In this case, based on Definition \ref{def:controllable} and Assumption \ref{assump:const-subgoal}, \(X_i^{t} = 1\) for all $t \geq t^*$. According to Definition \ref{def:intervention}, this is equivalent to an intervention $\text{do}(X_i^{t^*} = 1)$ with probability at least $1-\epsilon$, where the intervention can be performed by taking actions based on the subgoal-based policy $\pi(a \mid s, g_i)$. For the sake of brevity, from hereon we say a subgoal is controllable and drop $(1-\epsilon)$.
\end{remark}
Before introducing the algorithm, we give a high-level overview using the example \ref{fig:mini-craft}, explained in Section \ref{sc:preliminaries}. The craftsman has no knowledge of the subgoal structure (i.e., Figure \ref{fig:mini-craft-graph}) but he knows which objects exist on the map (call them resources). 
Initially, he considers obtaining each resource as a subgoal and divides subgoals into two sets:
controllable ($\CS$) and intervention ($\IS$) as shown in the figures with blue and green, respectively. Controllable Subgoals set are those that he has learnt how to achieve (see Definition \ref{def:controllable}). 
Initially, both sets are empty (Figure \ref{fig:minicraft_example}(a)). The craftsman plays the game several times and learns how to gather  ``stone'' and ``wood''. Thus, he can add ``obtaining stone'' and ``obtaining wood'' (call them root subgoals) to the controllable set, meaning he can from now on acquire them whenever needed (Figure \ref{fig:minicraft_example}(b)). At this stage, he can still be unsure how to craft a pickaxe. 
Now, the craftsman starts a loop and at each iteration of this loop, he chooses one of the items in the controllable set and adds it to the intervention set to explore whether it is a requirement for obtaining other resources (i.e., subgoals). 
Let us assume he chooses ``obtaining stone'' (denote it as $\xsel$) and adds it to the intervention set (Figure \ref{fig:minicraft_example}(c)). 
Then he plays the game multiple times, and observes all the trajectories (so-called interventional data) with the goal of discovering all reachable subgoals, which are the direct children of the subgoals in the intervention set.
This is done using a causal discovery subroutine in HRC algorithm (will be discussed in Algorithm \ref{alg:HRC}).
In the example in Figure \ref{fig:mini-craft}, ``obtaining pickaxe'' is an AND subgoal (see Figure \ref{fig:mini-craft-graph}), therefore it is highly likely that the causal discovery algorithm would not detect it as the child of ``obtaining stone'' alone. However, in the next iteration of the loop, when the craftsman also adds the subgoal ``obtaining wood'' to the intervention set (Figure \ref{fig:minicraft_example}(d)), he will discover that the subgoal ``obtaining pickaxe'' is the child of ``obtaining stone'' and ``obtaining wood''. He can now play the game multiple times to hone in on how to efficiently obtain a pickaxe (Figure \ref{fig:minicraft_example}(e)). 
Note that the main phases of the algorithm are learning the subgoal structure and then exploiting it to train the policy. 
To learn the subgoal structure, the algorithm can benefit from purposeful ``interventions'' on the resources instead of naive random play (see Section \ref{sc:subgoal_discovery} for details).

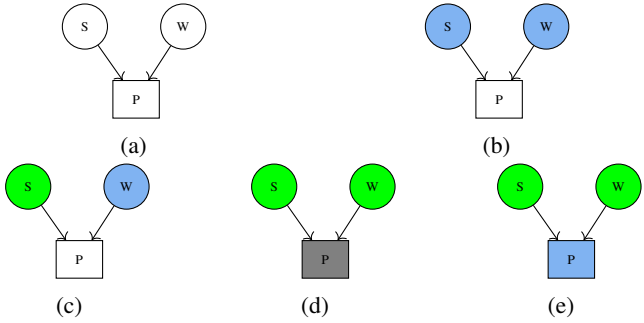
\begin{figure}[htp]
\begin{subfigure}{.2\textwidth} 
\centering
\begin{tikzpicture}[node distance=0.5cm and 0.1cm, scale=0.4, every node/.style={scale=0.5}] 
\tikzset{round/.style={circle, draw, inner sep=0pt, minimum width=1.2cm}}
\tikzstyle{block} = [rectangle, draw, text width=1cm, text centered, minimum height=1cm, ]
\node[ round] (S) {S};
\node[right=.7cm of S, round] (W) {W};
\node[below left=of W, block] (P) {P};
\draw[->] (S) -- (P);
\draw[->] (W) -- (P);
\end{tikzpicture}
\caption{}
\end{subfigure}
\hfill
\begin{subfigure}{.2\textwidth} 
\centering
\begin{tikzpicture}[node distance=0.5cm and 0.1cm, scale=0.4, every node/.style={scale=0.5}] 
\tikzset{round/.style={circle, draw, inner sep=0pt, minimum width=1.2cm}}
\tikzstyle{block} = [rectangle, draw, text width=1cm, text centered, minimum height=1cm, ]
\node[ round, fill=lightblue,] (S) {S};
\node[right=.7cm of S, fill=lightblue,  round] (W) {W};
\node[below left=of W, block] (P) {P};
\draw[->] (S) -- (P);
\draw[->] (W) -- (P);
\end{tikzpicture}
\caption{}
\end{subfigure}
\hfill
\begin{subfigure}{.1\textwidth} 
\centering
\begin{tikzpicture}[node distance=0.5cm and 0.1cm, scale=0.4, every node/.style={scale=0.5}] 
\tikzset{round/.style={circle, draw, inner sep=0pt, minimum width=1.2cm}}
\tikzstyle{block} = [rectangle, draw, text width=1cm, text centered, minimum height=1cm, ]
\node[ round, fill=green,] (S) {S};
\node[right=.7cm of S, round, fill=lightblue,] (W) {W};
\node[below left=of W, block] (P) {P};
\draw[->] (S) -- (P);
\draw[->] (W) -- (P);
\end{tikzpicture}
\caption{}
\end{subfigure}
\hfill
\begin{subfigure}{.1\textwidth} 
\centering
\begin{tikzpicture}[node distance=0.5cm and 0.1cm, scale=0.4, every node/.style={scale=0.5}] 
\tikzset{round/.style={circle, draw, inner sep=0pt, minimum width=1.2cm}}
\tikzstyle{block} = [rectangle, draw, text width=1cm, text centered, minimum height=1cm, ]
\node[ round, fill=green,] (S) {S};
\node[right=.7cm of S,fill=green, round] (W) {W};
\node[below left=of W,fill=gray, block] (P) {P};
\draw[->] (S) -- (P);
\draw[->] (W) -- (P);
\end{tikzpicture}
\caption{}
\end{subfigure}
\hfill
\begin{subfigure}{.1\textwidth} 
\centering
\begin{tikzpicture}[node distance=0.5cm and 0.1cm, scale=0.4, every node/.style={scale=0.5}] 
\tikzset{round/.style={circle, draw, inner sep=0pt, minimum width=1.2cm}}
\tikzstyle{block} = [rectangle, draw, text width=1cm, text centered, minimum height=1cm, ]
\node[ round, fill=green,] (S) {S};
\node[right=.7cm of S,fill=green, round] (W) {W};
\node[below left=of W,fill=lightblue, block] (P) {P};
\draw[->] (S) -- (P);
\draw[->] (W) -- (P);
\end{tikzpicture}
\caption{}
\end{subfigure}

\caption{Discovering the subgoal structure in the mini-craft (Figure \ref{fig:mini-craft}). S, W, and P represent stone, wood, and pickaxe. The sets $\IS$, and  $\CS$,  are illustrated with green, and blue respectively.}
\label{fig:minicraft_example}
\end{figure}

\subsection{HRC Framework}

The pseudocode of HRC is given in Algorithm \ref{alg:HRC}. HRC algorithm at each iteration, chooses a subgoal from the controllable set (line 4) and adds it to the intervention set (line 5), then it collects interventional data (line 6) and uses causal discovery (line 7) to identify the reachable subgoals (line 8) from the current intervention set. These reachable subgoals are added to the hierarchy, assigned levels (line 9), and trained (line 10). Successfully trained subgoals are added to the controllable set (line 11), and the process repeats until the final subgoal is added to the intervention set or no further controllable subgoals remain. In particular, HRC is comprised of the following key steps:
\begin{algorithm*}[htp!]
\caption{ Hierarchical Reinforcement Learning via Causality (HRC)}
\begin{algorithmic}[1]
\label{alg:HRC}
\STATE Initialize subgoal-based policy $\pi$; intervention set $\IS_0 = \{\}$; controllable set $\CS_0 =\{\}$; hierarchical structure \(\hs=(\hierarchy, \Levelf)\); iteration counter $t=1$.
\STATE $\CS_0 \gets$ SubgoalTraining($\pi, \hs, \IS_0, \setG $) \COMMENT{Train the root subgoals}
\WHILE{$g_n \notin \IS_{t-1}$ and $\CS_{t-1} \neq \emptyset$}
    \STATE $\xselt \gets$ Choose a controllable subgoal from \(\CS_{t-1}\) 
    \rlap{\smash{$\bigg.\begin{array}{@{}c@{}}\\{}\\{\bigg\}}\\{}\end{array}
          \begin{tabular}{l}  Expand the intervention set by adding \\ a potentially required subgoal\end{tabular}$}}
    \STATE $\IS_t \gets \IS_{t-1} \cup \{\xselt\}$, $\CS_t \gets \CS_{t-1} \setminus \{\xselt\}$
    \STATE $D_I \gets$ InterventionSampling($\pi, \IS_t$)
    \rlap{\smash{$\left.\begin{array}{@{}c@{}c@{}c@{}}{\hspace{8.5cm}}\vspace{1.2cm}\\{}\\{}\end{array}\right\}
    \color{red}\begin{tabular}{l} Stage 1\\ (line 4 to 8)\end{tabular}$}}
    \STATE $\hat{\graph}_t \gets$ CausalDiscovery($D_I, \IS_t$)
    \rlap{\smash{$\left.\begin{array}{@{}c@{}c@{}c@{}}{\hspace{4.5cm}}\vspace{0.05cm}\\{}\\{}\end{array}\right\}
          \begin{tabular}{l} Find reachable subgoals\end{tabular}$}}
    \STATE $\CCS_t \gets \{g_i \mid g_i \notin (\IS_t \cup \CS_t) \text{ and } g_i \in \CH{\IS_t}^{\hat{\graph}_t} \text{ and } \PA{g_i}^{\hat{\graph}_t} \subset \IS_t\}$   
    \STATE Update-$\mathcal{H}$( $\hat{\graph}_t, \IS_t, \CCS_t$)
    \STATE $\CCS_t \gets$ SubgoalTraining($\pi, \hs, \IS_t, \CCS_t $)   
    \rlap{\smash{$\left.\begin{array}{@{}c@{}c@{}c@{}}{\hspace{.5cm}}\\{}\\{}\end{array}\right\}
    \begin{tabular}{l} Train reachable subgoals \\ and add them to the controllable set\end{tabular}$}}
    \STATE $\CS_t \gets \CS_t \cup \CCS_t$
    \rlap{\smash{$\left.\begin{array}{@{}c@{}c@{}c@{}}{\hspace{10.75cm}}\vspace{0.1cm}\\{}\\{}\\{}\end{array}\right\}
    \color{red}\begin{tabular}{l} Stage 2\\ (line 9 to 13)\end{tabular}$}}
    \STATE $t \gets t + 1$
\ENDWHILE
\end{algorithmic}
\end{algorithm*}
\vspace{-.2cm}
\begin{enumerate}
    \item \textbf{Initialization:} We initialize intervention  ($\IS_0$) and  controllable sets ($\CS_0$) with empty sets. The hierarchical structure, $\hs=(\hierarchy, \Levelf)$, initially contains a graph $\hierarchy$ with the subgoals as nodes, but with no edges between them, and all the entries in  $\Levelf$ are set to 0. 
    \item \textbf{Pre-train:}
    In line 2, we call SubgoalTraining on all the subgoals ($\setG$) to train the ones with no parent in $\graph$. These subgoals are called root subgoals (see  Appendix \ref{apdx:ssc:subgoaltraining} for more details about the SubgoalTraining subroutine).  Controllable set ($\CS_0$), is populated with root subgoals using SubgoalTraining subroutine.
    
    \item \textbf{Expand intervention set:} In this step, a subgoal from $\CS_{t-1}$ is selected and added to the current intervention set $\IS_{t-1}$ to form the new intervention set $\IS_t$, and then removed from the controllable set (lines 4-5).
    
    \item \textbf{Intervention Sampling:} InterventionSampling subroutine collects $T$ trajectories. In each trajectory, it randomly intervenes on the subgoals in $\IS_t$ until the horizon $H$ is reached or all the subgoals in $\IS_t$ are achieved. We call the entire data collected in this subroutine ``interventional data'' and denote it as $D_I$ (see Appendix \ref{apdx:interventionsampling} for more details).
    
    \item \textbf{Causal Discovery:} After collecting interventional data ($D_I$), the algorithm estimates the causal model and  $\hat{\graph}_t$ using a causal discovery method. We propose our causal discovery method in Section \ref{sc:causal_discovery}. 
    
    \item \textbf{Candidate Controllabe Set:} From $\hat{\graph}_t$, we identify the set of subgoals that are not in $\IS_t \cup \CS_t$, but are children of subgoals in $\IS_t$, and all their parents are in $\IS_t$. We call this set the candidate controllable set $\CCS_t$ and its elements reachable subgoals.

    \item \textbf{Update-$\hs$:} To update the graph $\hierarchy$, for each subgoal \(g_i \in \CCS_t\), we add an edge from each parent \(g_j \in \PA{g_i}^{\hat{\graph}_t} \) to \(g_i\). Additionally, the level of \(g_i\) is updated to: \[
    \Levelf(g_i) =
        \begin{cases}
        0, & \text{if } |\PA{g_i}^{\hat{\graph}_t}| = 0, \\
        \max_{g_j \in \PA{g_i}^{\hat{\graph}_t}} \Levelf(g_j) + 1, & \text{otherwise}.
        \end{cases}\]

    \item \textbf{Subgoal Training:} SubgoalTraining subroutine trains the subgoal-based policy for achieving the subgoals in $\CCS_t$ \footnote{If a subgoal becomes reachable, it becomes controllable after Subgoal Training step. However, in practice, this may not always be the case.  Therefore, we consider a threshold and remove the ones that are not trained well both from the set $\CCS_t$ and hierarchical structure $\hs$ (see Appendix \ref{apdx:ssc:subgoaltraining} for more details).} (see Appendix \ref{apdx:ssc:subgoaltraining} for more details).
    \item \textbf{Update Controllable Set:} This step adds the trained subgoals to the controllable set $\CS_t$. 
\end{enumerate}
Steps 3 to 6,  aim of which is to find the reachable subgoals, form \textbf{Stage 1}, and steps 7 to 9,  which train the reachable subgoals to make them controllable, form \textbf{Stage 2}. HRC can use different strategies to select a controllable subgoal from \(\CS_{t-1}\) in line 4
. If HRC randomly selects a controllable subgoal from \(\CS_{t-1}\), we call it ``random strategy''.

{
Figure \ref{fig:CDHRL_example_short} illustrates a simple example of how strategically selecting controllable subgoals can enhance the performance of HRC. 
Figure \ref{fig:CDHRL_example_short} shows different stages of the underlying subgoal structure. In this example, we assume that every subgoal is of OR type. The final subgoal is node $F$. 
Reachable subgoals are depicted in gray. 
Assume we have reached the stage where $S$ and $W$ are controllable (Figure \ref{fig:CDHRL_example_short}(a)). HRC will terminate earlier if it selects subgoal \(S\) as \( \xselt[1] \) (Figure \ref{fig:CDHRL_example_short}(c)), rather than \(W\) (Figure \ref{fig:CDHRL_example_short}(b)). Therefore, strategic selection of controllable variables from \(\CS\) in line 4 of Algorithm \ref{alg:HRC} can significantly improve the performance. 
In Section \ref{sc:subgoal_discovery}, we propose causally-guided selection mechanisms to pick $\xselt$ in order to improve the performance. 
Before introducing these mechanisms, we formulate the cost of HRC algorithm in the next section.
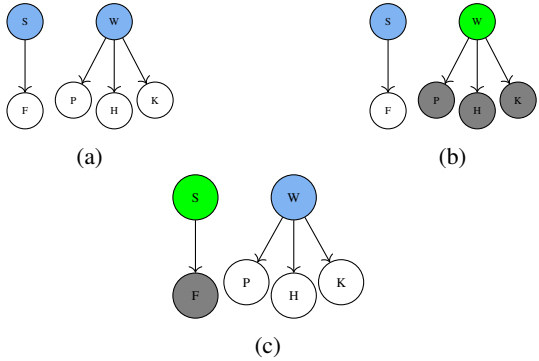
\begin{figure}[htp]
\begin{subfigure}{.2\textwidth} 
\centering
\begin{tikzpicture}[node distance=0.7cm and 0.2cm, scale=0.5, every node/.style={scale=0.4}] 
\tikzset{round/.style={circle, draw,  inner sep=0pt, minimum width=1.2cm}}
\node[ round, fill=lightblue] (S) {S};
\node[right=.7cm of S, round, fill=lightblue] (W) {W};
\node[below left=of W, round] (P) {P};
\node[below right=of W, round] (K) {K};
\node[below =of W, round] (H) {H};
\node[below=of S, round] (G) {F};
\draw[->] (S) -- (G);
\draw[->] (W) -- (P);
\draw[->] (W) -- (K);
\draw[->] (W) -- (H);
\end{tikzpicture}
\caption{}
\end{subfigure}
\hfill
\begin{subfigure}{.2\textwidth} 
\centering
\begin{tikzpicture}[node distance=0.7 and 0.2cm, scale=0.5, every node/.style={scale=0.4}] 
\tikzset{round/.style={circle, draw,  inner sep=0pt, minimum width=1.2cm}}
\node[ round,fill=lightblue] (S) {S};
\node[right=.7cm of S, round, fill=green] (W) {W};
\node[below left=of W, round,fill=gray] (P) {P};
\node[below right=of W, round, fill=gray] (K) {K};
\node[below =of W, round, fill=gray] (H) {H};
\node[below=of S, round] (G) {F};
\draw[->] (S) -- (G);
\draw[->] (W) -- (P);
\draw[->] (W) -- (K);
\draw[->] (W) -- (H);
\end{tikzpicture}
\caption{}
\end{subfigure}
\hfill
\centering
\begin{subfigure}{.2\textwidth} 
\centering
\begin{tikzpicture}[node distance=0.7cm and 0.2cm, scale=0.4, every node/.style={scale=0.5}] 
\tikzset{round/.style={circle, draw,  inner sep=0pt, minimum width=1.2cm}}
\node[ round, fill=lightblue, fill=green] (S) {S};
\node[right=.7cm of S, round, fill=lightblue] (W) {W};
\node[below left=of W, round] (P) {P};
\node[below right=of W, round] (K) {K};
\node[below =of W, round] (H) {H};
\node[below=of S, round, fill=gray ] (G) {F};
\draw[->] (S) -- (G);
\draw[->] (W) -- (P);
\draw[->] (W) -- (K);
\draw[->] (W) -- (H);
\end{tikzpicture}
\caption{}
\end{subfigure}
\hfill
\caption{A showcase of HRC performing more efficiently under a better selection of controllable subgoal. The sets $\IS, \CS$, and $\CCS$ are illustrated with green, blue, and gray colors, respectively.}
\label{fig:CDHRL_example_short}
\end{figure}

}

\section{Formulating the Cost}
\label{sec:Formulating_cost}

In order to measure the effectiveness of any strategy used for selecting \(\xselt\) in line 4 of Algorithm \ref{alg:HRC}, we need to formalize the training cost of the HRC algorithm, which is the expected system probes during Intervention Sampling and Subgoal Training accumulated in all iterations.
To compute the expected total cost of Algorithm \ref{alg:HRC}, we model its execution 
as an MDP, denoted as $\mathcal{M}$. The state space of $\mathcal{M}$ is the power set of $\setG$. A state in this MDP, denoted by \( \I \), represents the intervention set \( \IS \) in Algorithm \ref{alg:HRC}. The action space is \( \setG \), and the action is determined by \( \xsel \). If we are in state $\I$ and choose $\xsel$, then we transit to state $\I\cup \{\xsel\}$ with a probability determined by the strategy in line 4 of Algorithm \ref{alg:HRC}.
Let $C_{g_n}(\I)$ be the expected cost of adding the final subgoal $g_n$ to the intervention set when the current state is $\I$. Then, we have:
\begin{align}
    C_{g_n}(\I) =&  \sum_{\xsel \in \setG } p_{\I,\I\cup\{\xsel\}} \bigg[ C_{trans}(\I, \I\cup{\{\xsel\}})\nonumber\\ &+ C_{g_n}(\I\cup{\{\xsel\}})\bigg],
\end{align}

where the transition cost $C_{trans}(\I, \I\cup \{\xsel\})$ is the cost of transition from the current intervention set $\I$ to a new set by adding $\xsel$ to it ($\I \cup \{\xsel\}$). Moreover, the cost-to-go,  $C_{g_n}(\I\cup \{\xsel\})$, accounts for the future costs from the new state onwards. Both terms are weighted by the transition probability $p_{\I,\I\cup \{\xsel\}}$, the probability of transition from state $\I$ to $\I\cup \{\xsel\}$ in $\mathcal{M}$. This transition probability is determined by the strategy in line 4 of Algorithm \ref{alg:HRC}. 
Based on the above definitions, the total expected cost is equal to $C_{g_n}(\{\})$. 
See the detailed derivation of $C_{trans}(\I, \I\cup \{\xsel\})$ and $C_{g_n}(\{\})$ in Appendix \ref{apdx:cost-formulation-proof}.

\section{Subgoal Discovery with Targeted Strategy }
\label{sc:subgoal_discovery}
               
In this section, we propose two causally-guided ranking rules (a third rule, which is a combination of these two rules is discussed in Appendix \ref{heuristic3})
that utilize $\hat{\graph}$ and the estimated causal model to prioritize among controllable subgoals candidates for picking \(\xselt\) at iteration $t$ in line 4 of Algorithm \ref{alg:HRC}. 

\paragraph{Causal Effect Ranking Rule}
\label{heuristic1}

    An intuitive rule is to estimate the expected causal effect of every \(g_i \in \CS_{t-1}\) on the final subgoal \(g_n\) and select the subgoal with the largest effect. Specifically,
    \[
    \xselt = \argmax_{g_i \in \CS_{t-1}} \left( \widehat{ECE}_{t^*}^{\Delta}(\{g_i\}, \{\}, g_n) \right), 
    \]
    where $\widehat{ECE}_{t^*}^{\Delta}$ is the estimate of the expected causal effect defined in \ref{def:ECE} (See Appendix \ref{apsx:ece} for details on how to estimate the expected causal effect from the recovered causal model and also a discussion on the values of $\Delta$ and $t^*$).
     When all subgoals are of AND type, 
     under the conditions explained in Appendix \ref{apdx:sec:graph-search}, this rule yields the minimum training cost as only subgoals with a path to final subgoal are added to $\IS$. 
    \paragraph{Shortest Path Ranking Rule}
     In this section, we utilize A\(^*\)-based search approach in designing a shortest path ranking rule. A\(^*\) algorithm finds a weighted shortest path in a graph by using a heuristic function \(\funcform{h}(g_i)\) to estimate the cost from a node $g_i$ to the final subgoal~\citep{hart1968formal}. 
     Additionally, A\(^*\) uses a cost function $\funcform{g}(g_i)$ that determines the actual cost from the start or the root node to $g_i$.
     At each iteration, it selects the node with the lowest combined cost
    $\funcform{f}(g_i) = \funcform{g}(g_i) + \funcform{h}(g_i)$ until it reaches the final subgoal (see Appendix \ref{apdx:A*-search} for details of A\(^*\) algorithm).

    Inspired by A\(^*\) search, at each iteration of Algorithm \ref{alg:HRC}, we choose $\xselt$ as the subgoal with the lowest combined cost $\funcform{f}(g_i)$, that is,
    $\xselt = \argmin_{g_i \in \CS_{t-1}} \funcform{f}(g_i)$.
    
    In Appendix \ref{apdx:ssc:heuristic2}, we explain how to calculate functions $\funcform{g}$ and $\funcform{h}$ in our setting.
    When the subgoal structure is a DAG and every subgoal is of the OR type \footnote{There are some further mild conditions given in Appendix \ref{apdx:sec:graph-search}}, this rule  adds precisely the subgoals that are on the shortest path to the final subgoal to  \( \IS \) and hence incurs a minimum training cost.
    In Appendix \ref{heuristic3}, we also propose a \textbf{Hybrid ranking rule} that is a combination of the two ranking rules presented here.

\section{Cost Analysis}
\label{sc:costanalysis}

In this section, we analyze the cost of Algorithm \ref{alg:HRC} for a random strategy, denoted by \HRCB (baseline), and targeted strategy, denoted by \HRCH, which uses any of the ranking rules in Section \ref{sc:subgoal_discovery}. We consider two subgoal structures: a tree graph \( G(n, b) \), where \(b\) is the branching factor; and a semi-Erdős–Rényi graph \( G(n, p) \), with \( p = \frac{c \log(n)}{n-1} \) and \( 0 < c < 1 \)  (see  definition in Appendix \ref{apdx:ssc:random-graph}).


{

Recall that a subgoal becomes reachable once it is added to the candidate controllable set $\CCS$, which is defined based on $\hat{\graph}$ (see line 8 of Algorithm \ref{alg:HRC}). If there is an estimation error in the causal discovery part, it is possible that a subgoal $g_i$ is not added to $\CCS$ despite all its parents being in the intervention set $\IS$ for an AND subgoal (resp. at least one of its parents has been added to $\IS$ in the case of OR subgoal). For the ease of analysis, we exclude this error event by making the next assumption and add two more assumptions.

\begin{assumption}
    \label{ass:no-HRC-error} In the Causal Discovery part,
    we assume that for every $ g \in  \IS$, and $g_i \in \CH{g}$, $g_i$ is also in $\CH{g}^{\hat{\graph}}$ iff $g_i$ is an OR subgoal or all $\PA{g_i} \in \IS$.    
    Moreover, we assume if a subgoal becomes reachable in line 8, it will become controllable after Subgoal Training step in line 10. 
    
\end{assumption}

}

\begin{assumption}
    \label{ass:no-heuristic-error}
    We assume that the estimated causal effect of a subgoal $g_i$ on the final subgoal $g_n$ is not zero if $g_i$ is an ancestor of $g_n$ in $\graph$.
\end{assumption}
\begin{assumption}
    \label{ass:equal-systemprobes}
    {We assume that the expected system probes between achieving any two consecutive subgoals in $\IS_t$ are equal (see the exact statement in \ref{apdx:cost-formulation-proof}).}
\end{assumption}

\begin{theorem}
\label{thm:complexity}
Let \( G(n, b) \) be a tree structure with branching factor \( b>1 \), and \( G(n, p) \) a semi-Erdős–Rényi graph where \( p = \frac{c \log(n)}{n-1} \), \( 0 < c < 1 \) and $n>4$. Under Assumptions \ref{ass:no-HRC-error}, \ref{ass:no-heuristic-error}, and \ref{ass:equal-systemprobes}, the cost complexities of the \HRCB and \HRCH are given in Table \ref{tb:analysis}.\footnote{All proofs are provided in the Appendix.}

\begin{table}[t]
\centering
\caption{Comparison of the cost between \HRCH and \HRCB in tree and semi-Erdős–Rényi graphs.}
\vskip 0.15in
\begin{center}
\begin{small}
\begin{sc}
\begin{tabular}{c|c|c}
\hline
\textbf{} & \textbf{Tree $G(n, b)$} & \textbf{semi-Erdős–Rényi $G(n, p)$} \\
\hline
\textnormal{\HRCH} &  $O(\log^2(n) b)$ &  $O(n^{\frac{4}{3}+\frac{2}{3}c}\log(n))$ \\
\hline
\textnormal{\HRCB} &  $\Omega(n^2 b)$ &  $\Omega(n^2)$ \\
\hline
\end{tabular}
\label{tb:analysis}
\end{sc}
\end{small}
\end{center}
\vskip -0.1in
\end{table}
\end{theorem}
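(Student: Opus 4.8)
The plan is to analyze the four table entries separately, treating each as an upper or lower bound on the total expected cost $C_{g_n}(\{\})$ of Algorithm~\ref{alg:HRC} under the MDP model $\mathcal{M}$ of Section~\ref{sec:Formulating_cost}. In all four cases, by Assumption~\ref{ass:equal-systemprobes} the transition cost $C_{trans}(\I, \I\cup\{\xsel\})$ is (up to constants) proportional to the number of subgoals in $\IS$ at that step, since the expected system probes between consecutive subgoals are equal; hence the total cost scales like $\sum_{k} k \cdot (\text{cost per probe at step } k)$, and the combinatorial heart of the argument is bounding \emph{how many subgoals get added to $\IS$ before $g_n$ does}. So the whole theorem reduces to counting, for each strategy and each graph family, the size of the terminal intervention set and the per-step costs along the way.

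\textbf{Upper bounds for \HRCH (tree).} For the tree $G(n,b)$, I would invoke the guarantee from Section~\ref{sc:subgoal_discovery}: under Assumptions~\ref{ass:no-HRC-error} and~\ref{ass:no-heuristic-error}, the targeted ranking rule (Causal Effect for AND-trees, Shortest Path for OR-trees) only ever adds to $\IS$ subgoals that lie on a path toward $g_n$. In a tree this path is unique, so $|\IS|$ at termination is at most the depth of $g_n$, which is $O(\log_b n)$. Summing the per-step transition costs — each of order $O(b)$ times the current depth, because reaching the next subgoal on the path from the current frontier costs a constant number of trajectories each of length tied to the branch it explores — gives $\sum_{k=1}^{O(\log n)} k\cdot O(b) = O(\log^2(n)\, b)$. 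The Update-$\hs$ and SubgoalTraining costs are absorbed by Assumption~\ref{ass:no-HRC-error} (reachable $\Rightarrow$ controllable after training), so they do not dominate.

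\textbf{Upper bound for \HRCH (semi-Erdős–Rényi).} This is the step I expect to be the main obstacle. Here I would use structural facts about $G(n,p)$ with $p=\frac{c\log n}{n-1}$, $c<1$: this is below the connectivity threshold, so with high probability the component structure is controlled — the component containing $g_n$ has size $O(n^{?})$ in terms of $c$, and crucially the relevant subgraph the algorithm can ``walk through'' toward $g_n$ has bounded depth/width whp. The exponent $\frac{4}{3}+\frac{2}{3}c$ strongly suggests the bound comes from (i) the number of subgoals added before $g_n$ being roughly $n^{\frac{1}{3}+\frac{1}{3}c}$-ish per step times a $\log n$ factor, combined with (ii) a per-step cost that also grows polynomially because each intervention-sampling round must explore up to all currently-controllable subgoals. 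I would need a concentration argument (first/second moment or a branching-process comparison) bounding the number of iterations until $g_n\in\IS$, and then multiply by the worst-case per-iteration cost $O(n^{?}\log n)$; reconciling the two polynomial contributions to land exactly on $n^{4/3+2c/3}\log n$ is the delicate bookkeeping. I would defer the precise component-size and diameter estimates for $G(n,p)$ to Appendix~\ref{apdx:ssc:random-graph}.

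\textbf{Lower bounds for \HRCB.} For the random strategy, the idea is that with no guidance the algorithm explores ``blindly'': at each step it picks a uniformly random controllable subgoal, so it will, in expectation, add a constant fraction of all $n$ subgoals to $\IS$ before $g_n$ happens to be chosen. I would make this precise with a coupon-collector / hitting-time argument: the expected number of iterations until the final subgoal enters $\IS$ is $\Omega(n)$, and since the transition cost at iteration $k$ is $\Omega(k)$ (respectively $\Omega(b)$ per probe for the tree because each new subgoal requires exploring $b$ children), summing gives $\sum_{k=1}^{\Omega(n)} \Omega(k) = \Omega(n^2)$ for the semi-Erdős–Rényi graph and the extra branching factor $b$ for the tree, yielding $\Omega(n^2 b)$. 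The only care needed is a lower bound (not just expectation) argument showing the random walk over $\mathcal{M}$ genuinely visits many states before absorbing at a set containing $g_n$; a Markov-chain first-passage estimate, plus Assumption~\ref{ass:equal-systemprobes} to convert iteration count to cost, suffices. The routine parts — the explicit recursion solving for $C_{g_n}(\{\})$ and the arithmetic of the geometric/harmonic sums — I would relegate to the appendix as indicated by the theorem's footnote.
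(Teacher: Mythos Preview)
Your tree analysis is essentially correct and matches the paper's approach: for \HRCH the intervention set stays on the unique root-to-$g_n$ path of length $O(\log_b n)$, and summing the per-step cost $(k+2)w(T+bT')$ over $k\le \log_b n$ gives $O(\log^2(n)b)$. For the tree \HRCB lower bound your coupon-collector framing is loose but points the right way; the paper makes it precise by showing that after $k$ interventions the controllable set has size $f(k)=k(b-1)+1$, so the probability of ever having followed the exact path to $g_n$ by step $k$ is $\sum_{d=D}^{k}\prod_{h=1}^{d}1/f(h)$, which is bounded above by a constant strictly less than $1$. Thus $\Qng{k}\ge\alpha>0$ uniformly in $k$, and summing $\alpha(k+2)w(T+bT')$ over $k\le n-1$ yields $\Omega(n^2b)$.

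There are two genuine gaps in your semi-Erd\H{o}s--R\'enyi arguments, and neither is filled by what you wrote.

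\textbf{\HRCB lower bound.} Coupon-collector intuition does not give you $\sum_{j:g_n\notin\I_j}\E[q_{1j}^{(k)}]\ge 1-k/n$, which is the inequality that drives the $\Omega(n^2)$ bound. The paper obtains it by a swapping argument (Lemma in Appendix~\ref{apdx:ssc:random-graph}): for any state $\I_j$ with $g_n\notin\I_j$, replacing a node $g_i\in\I_j$ by $g_n$ to form $\I_{j_i}$ can only \emph{decrease} the expected visitation probability, i.e.\ $\E[q_{1j_i}^{(k)}]\le\E[q_{1j}^{(k)}]$. Summing these inequalities over all such swaps and using that each $\I_{j_i}$ arises from exactly $n-k$ distinct $\I_j$'s gives $(n-k)\sum_{g_n\in\I_j}\E[q_{1j}^{(k)}]\le k\sum_{g_n\notin\I_j}\E[q_{1j}^{(k)}]$, hence the claimed $1-k/n$ bound. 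A hitting-time estimate alone will not produce this, because the graph is random and the set of controllable subgoals is not the full ground set.

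\textbf{\HRCH upper bound.} Your component-size and branching-process heuristics are not how the exponent $\tfrac{4}{3}+\tfrac{2}{3}c$ arises. The paper's route is: (i) set up a recurrence for $P(A_i)=\Pr(g_i\text{ is an ancestor of }g_n)$ and show $\E[\mathrm{ANC}_n(G)]\le 2n^c$; (ii) bound the cost by $\E\bigl[\sum_{k\le\mathrm{ANC}_n(G)}(k+2)\bigr]\asymp\E[\mathrm{ANC}_n(G)^2]$; (iii) control the second moment by splitting on the event $\{\mathrm{ANC}_n(G)\ge h\,\E[\mathrm{ANC}_n(G)]\}$, applying Markov's inequality on the tail piece and the trivial bound $\mathrm{ANC}_n\le n$ on the other, then \emph{optimizing over $h$} to get $\E[\mathrm{ANC}_n^2]\le 3n^{4/3+2c/3}$; (iv) handle the per-step branching factor via a Chernoff bound on the maximum out-degree $N(G)$, choosing $\delta$ so that the tail term $n\cdot n^2\cdot n^{1-\delta^2 c/(2+\delta)}$ is also $O(n^{4/3+2c/3})$. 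The specific exponent comes from the optimization in step (iii), where the minimizer is $h=(n^{2-2c}/8)^{1/3}$; nothing in your outline would generate that balance. You should replace the component-size discussion with this ancestor-count-plus-second-moment argument.
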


Thus, the worst-case cost of targeted strategy \HRCH, is significantly lower than the lower bound on the cost of random strategy \HRCB. In fact, in semi-Erdős–Rényi graphs, the gap widens for sparser subgoal structures.

\section{Causal Subgoal Structure Discovery  (SSD)}
\label{sc:causal_discovery}

In this section, we focus on the Causal Discovery part of Algorithm~\ref{alg:HRC} by proposing a new causal discovery algorithm that improves the efficiency of learning the subgoal structure. 
First, we need the following definitions below:

\begin{definition} [One-sided valid assignment] Let $\X \in \{0,1\}^n$. We say $\X$ is a \textit{one-sided valid assignment} if it satisfies the following conditions for every subgoal indexed by $i$: (1) For an OR subgoal $g_i$, $(X_i = 1) \implies \exists g_j \in \PA{g_i}, X_j = 1$; (2) For an AND subgoal $g_i$, $(X_i = 1) \implies \forall g_j \in \PA{g_i}, X_j = 1$. \end{definition}

In general, the algorithm cannot discover all parents of a goal, but only the so-called discoverable parents as defined below. 
\begin{figure*}[htp!]
    \centering
        \includegraphics[width=.7\linewidth]{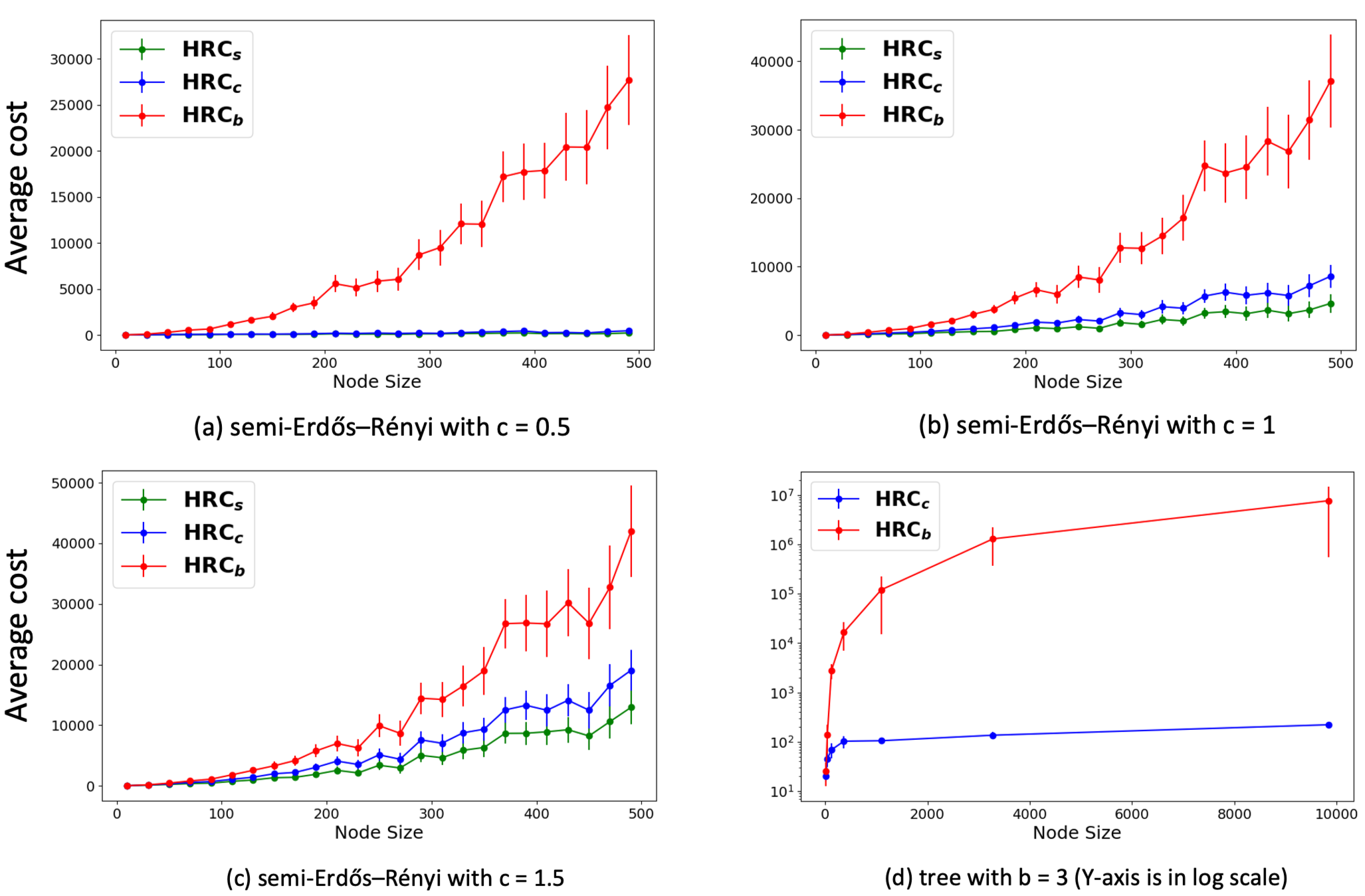}
        \caption{Training cost in semi-Erdős–Rényi and tree graphs.}
        \label{fig:semis}
\end{figure*}

\begin{definition} [Discoverable Parent] For any subgoal $g_i$, a parent $g_j \in \PA{g_i}$ is called \textit{discoverable} if the following holds: (1) For an OR subgoal, there exist one-sided valid assignments $\X, \X' \in \{0,1\}^n$ such that $X_j = 1$ and $X'_j = 0$, and $\forall g_k \in \PA{g_i} \setminus {g_j}, X_k = X'_k = 0$, implying $X_i$ changes from 0 to 1 due to $X_j$; (2) For an AND subgoal, there exist one-sided valid assignments $\X, \X' \in \{0,1\}^n$ such that $X_j = 1$ and $X'_j = 0$, and $\forall g_k \in \PA{g_i} \setminus {g_j}, X_k = X'_k = 1$, indicating $X_i$ becomes 1 only when $X_j = 1$ and all other parents are equal to one. 
\label{def:discoverable}
\end{definition}

\begin{proposition}
    \label{prop:identifiablity}
     { Under Assumption \ref{assump:const-subgoal}, the subgoal structure is identifiable up to discoverable parents. }
\end{proposition}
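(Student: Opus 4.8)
The plan is to establish the claim in two directions. First, I would show an \emph{upper bound on identifiability}: no causal discovery algorithm — regardless of how much interventional data it collects under the HRC intervention model of Definition~\ref{def:intervention} — can distinguish a non-discoverable parent edge $g_j \to g_i$ from its absence. The key observation is that, by Assumption~\ref{assump:const-subgoal}, once a subgoal is achieved it stays achieved, so any reachable configuration of the vector $\X$ is a one-sided valid assignment in the sense of the first definition: an OR subgoal can only turn on if some parent is on, an AND subgoal can only turn on if all parents are on. Consequently, the \emph{only} way the data can witness that $g_j$ is a parent of $g_i$ is to observe two one-sided valid configurations that differ precisely in the status of $g_j$ (with the remaining parents held at the value required to "prime" $g_i$ — all zero for OR, all one for AND) and in which $X_i$ flips accordingly. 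If no such pair of configurations is reachable, then the restriction of the SCM mechanism $f_i$ to the set of reachable inputs is identical whether or not $g_j \to g_i$ is present, so two SCMs — one with the edge, one without — induce the same distribution over all (interventional) trajectories. Hence non-discoverable parents are fundamentally unidentifiable, which gives the "up to" part of the statement.

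Second, I would show the matching \emph{lower bound}: every discoverable parent \emph{can} be recovered, i.e., the subgoal structure restricted to discoverable edges is identifiable. Here I would argue that for a discoverable parent $g_j$ of $g_i$, Definition~\ref{def:discoverable} guarantees the existence of the required witnessing assignments $\X, \X'$; I then need to argue these assignments are actually realizable by the HRC intervention mechanism. Given a target assignment, one intervenes (via the subgoal-based policy, per Remark~\ref{rm:control-result}) to set the appropriate subset of subgoals to $1$ and, because of Assumption~\ref{assump:const-subgoal} and the structure of the SCM~\eqref{eq:num_scm}, the induced stationary configuration over the descendants is exactly the one-sided valid assignment consistent with those interventions; comparing $X_i$ under $\X$ versus $\X'$ reveals the edge. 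Combining the two directions: the set of edges that are \emph{consistent with all interventional distributions} is exactly the set of discoverable edges, which is the precise meaning of "identifiable up to discoverable parents."

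In more detail, the argument structure I would write is: (i) define the equivalence class of SCMs that agree on the reachable support and show, via Assumption~\ref{assump:const-subgoal}, that all members induce identical interventional trajectory distributions — this handles soundness (we never falsely claim a non-discoverable edge, and we cannot be blamed for missing one); (ii) for completeness, take any discoverable edge and exhibit an intervention (a sequence of $\text{do}(X^{t^*}_\bullet = 1)$ operations realizing the priming set) under which the interventional distribution differs from that of the edge-free model — concretely, $\E[X_i^{t^*+\Delta}\mid \text{do}(\cdot)]$ takes different values, so the edge is detectable from data. The bookkeeping of "which subgoals to intervene on to reach a prescribed one-sided valid assignment" and verifying that the OR/AND mechanisms then produce precisely that assignment on the relevant subtree is the routine part.

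The main obstacle I anticipate is making precise the claim that an arbitrary one-sided valid assignment required by Definition~\ref{def:discoverable} is actually \emph{reachable} under the restricted intervention vocabulary (one can only force variables to $1$, not to $0$, and only via the learned policy which may itself route through other subgoals). In particular one must ensure that intervening to set the "priming" parents does not inadvertently also activate $g_j$ (for the $\X'$ witness) or some other parent of $g_i$, which would collapse the two witnesses; this is exactly why the definition insists the other parents be held at a common value and why Assumption~\ref{assump:const-subgoal} (monotonicity) is doing real work — it prevents spurious activations from being undone and lets us build up the desired configuration incrementally. Handling the cyclic case of $\graph$ (the subgoal structure need not be acyclic) cleanly within this reachability argument is the subtle point I would spend the most care on.
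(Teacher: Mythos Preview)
Your upper-bound direction (part~(i)) is essentially the paper's proof. The paper proceeds by writing the SCM under Assumption~\ref{assump:const-subgoal} in an explicit gated form and then, for each undiscoverable parent $g_j$ of $g_i$, performs a case analysis (two cases for OR, two for AND, depending on which of the two witness assignments in Definition~\ref{def:discoverable} fails to exist) and exhibits an alternative SCM---either with $g_j$ removed from $\PA{g_i}$ or with $g_i$ made independent of all parents---that agrees with the original on every one-sided valid configuration and is therefore observationally indistinguishable. Your more abstract ``equivalence class of SCMs agreeing on the reachable support'' phrasing compresses this into a single argument; the content is the same.

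Where you diverge from the paper is scope. The paper's proof \emph{only} establishes the negative direction: non-discoverable parents are unidentifiable. It does not attempt your part~(ii), the claim that every discoverable parent can actually be recovered from interventional data collected by the HRC mechanism. The paper appears to read ``identifiable up to discoverable parents'' as an impossibility statement, and defers the constructive side to the causal-discovery algorithm of Theorem~\ref{th:causal_graph} (which operates on the A-SCM abstraction rather than the raw intervention model). So your proposal is more ambitious than what the proposition's proof actually delivers.

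The obstacle you flag for part~(ii) is genuine and is precisely why the paper does not attempt it here: the intervention vocabulary of Definition~\ref{def:intervention} can only force variables to $1$, and achieving one subgoal via the policy may inadvertently activate others through the mechanisms $h_k$, so there is no guarantee that an arbitrary one-sided valid assignment---in particular the $\X'$ witness with $X_j=0$---is realizable. Resolving this would require either an additional controllability assumption or restricting the class of mechanisms $h_k$, neither of which the paper provides at this point.
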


In Appendix \ref{apdx:faithfullness-example}, we give an example where all the parents of a subgoal cannot be recovered from the interventional data collected in Algorithm \ref{alg:HRC}. 
    

To recover the relationship between subgoals, we model the changes in the EVs corresponding to subgoals by an abstracted structural causal model (A-SCM) between the variables in $\spaceX$. This abstraction enables us to focus on the relationships between subgoals while skipping intermediate environment variables ($\spaceXl\setminus \spaceX$), operating on a different time scale compared to the original SCM. 
In A-SCM, the value of $X^{t+1}_i$ is determined as follows:
\begin{equation}
X^{t+1}_i =  \theta_i(\X^t) \oplus \epsilon^{t+1}_i, \qquad  1\leq i \leq n,
\label{eq:a-SCM}
\end{equation}
where \[\theta_i(\X^t)=\begin{cases}
\bigwedge_{g_j \in \PA{g_i}} X^t_j \quad \text{if \(g_i\) is AND subgoal},\\
\bigvee_{g_j \in \PA{g_i}} X^t_j \quad \text{if \(g_i\) is OR subgoal},   
\end{cases}\] and \(\oplus\) denotes the XOR operation. Moreover, the error term \(\epsilon^{t+1}_i\) is Bernoulli with parameter $\rho <1/2$.

\begin{theorem}
\label{th:causal_graph}
Given the A-SCM defined in \eqref{eq:a-SCM}, for a vector $\Bet\in \mathbb{R}^n$, consider \[S(\X^t,\Bet) = \sum_{j} \beta_j X^t_j +\beta_0.\]
Let $\hat{X}^{t+1}_i=\mathbbm{1}\left\{S(\X^t,\Bet) > 0\right\}$ \footnote{$\mathbbm{1}\{A\} = 1$ if $A$ is true, and $\mathbbm{1}\{A\} = 0$ otherwise.} be an estimate of $X_i^{t+1}$. Define the loss function:
\begin{equation}
    \mathcal{L}(\Bet) = \E[(\hat{X}_i^{t+1} - X_i^{t+1})^2] + \lambda \| \Bet \|_0.
    \label{eq: loss}
\end{equation}

There exists a $\lambda >0$ such that for any optimal solution \(\Bet^*\) minimizing the loss function in \eqref{eq: loss}, the positive coefficients in \(\Bet^*\) correspond to the parents of $X_i$.

\end{theorem}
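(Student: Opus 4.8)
The plan is to analyze the population risk $\mathcal L(\Bet)$ as a function of which coordinates of $\Bet$ are nonzero (the support), and to show that among all supports, the one that exactly matches $\PA{g_i}$ — with appropriately chosen coefficient signs and a suitable threshold $\beta_0$ — achieves strictly smaller squared-error term than any support that omits a parent or includes a non-parent, so that for $\lambda$ small enough the $\ell_0$ penalty tips the balance in favor of exactly $\PA{g_i}$. First I would fix the subgoal $g_i$ and distinguish the two cases (AND vs.\ OR subgoal); by symmetry (De Morgan) it suffices to treat, say, the AND case and then note the OR case follows by the obvious dual choice of coefficients. In the AND case, $\theta_i(\X^t)=\bigwedge_{g_j\in\PA{g_i}}X^t_j$ is the indicator that all parents are $1$; this is exactly reproducible by a linear threshold: take $\beta_j=1$ for $g_j\in\PA{g_i}$, $\beta_j=0$ otherwise, and $\beta_0=-(|\PA{g_i}|-\tfrac12)$, so that $S(\X^t,\Bet)>0 \iff \sum_{g_j\in\PA{g_i}}X^t_j = |\PA{g_i}| \iff \theta_i(\X^t)=1$. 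Hence with this $\Bet$ we have $\hat X_i^{t+1}=\theta_i(\X^t)$, and since $X_i^{t+1}=\theta_i(\X^t)\oplus\epsilon_i^{t+1}$ with $\epsilon_i^{t+1}\sim\mathrm{Bernoulli}(\rho)$ independent, the squared-error term equals $\E[(\hat X_i^{t+1}-X_i^{t+1})^2]=\Pr[\epsilon_i^{t+1}=1]=\rho$. So the Bayes-optimal predictor of $X_i^{t+1}$ given $\X^t$ is $\theta_i(\X^t)$ itself (because $\rho<1/2$), and its risk is exactly $\rho$; this is the global minimum of the squared-error part over \emph{all} measurable functions, a fortiori over all linear-threshold functions.

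Next I would argue the two separation claims. (i) \emph{No non-parent can appear:} consider any support $U$ with $U\setminus\PA{g_i}\neq\emptyset$. The squared-error term of any $\Bet$ supported on $U$ is at least $\rho$ (it cannot beat Bayes), so the squared-error part is $\ge\rho$, while the $\ell_0$ term is $\ge\lambda|U|\ge\lambda(|\PA{g_i}|+1)$, strictly larger than the value $\rho+\lambda|\PA{g_i}|$ achieved by the canonical parent-support solution — so no optimizer includes a spurious coordinate, regardless of $\lambda>0$. Wait — this argument needs the interventional/exploration distribution over $\X^t$ to have enough support that extra coordinates genuinely cost nothing; more carefully, I would show that adding a non-parent cannot \emph{reduce} the squared error below $\rho$ (Bayes bound) while it strictly increases the penalty, hence is never part of an optimum. (ii) \emph{Every discoverable parent must appear:} here I use Definition~\ref{def:discoverable}. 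If $g_j\in\PA{g_i}$ is omitted from the support, then the predictor $\hat X_i^{t+1}$ cannot depend on $X_j^t$; but by discoverability there are two (one-sided valid) assignments of $\X^t$, agreeing on all other parents of $g_i$, for which $\theta_i$ differs (equivalently $\Pr[X_i^{t+1}=1\mid\X^t]$ flips between $\rho$ and $1-\rho$). Provided the exploration distribution assigns positive probability to both configurations — which is exactly what Intervention Sampling guarantees once the relevant parents are in $\IS$ — any predictor ignoring $X_j^t$ incurs strictly more than $\rho$ squared error on this event, say $\rho+\delta_j$ for some $\delta_j>0$ depending only on the SCM and the exploration distribution. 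Choosing $0<\lambda<\min_j\delta_j$ (the min over parents of $g_i$) then makes it strictly cheaper to include all discoverable parents than to drop any of them. Combining (i) and (ii), the unique-support optimum has support exactly $\PA{g_i}$ (restricted to discoverable parents, consistently with Proposition~\ref{prop:identifiablity}), and on that support the risk-minimizing coefficients are the canonical ones above, which are positive precisely on $\PA{g_i}$; this is the claimed conclusion, with the threshold constant $\beta_0$ absorbing the offset.

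The main obstacle I anticipate is making precise the claim that the relevant configurations of $\X^t$ occur with positive probability under the expectation defining $\mathcal L$ — i.e., tying the abstract A-SCM and the ``one-sided valid assignments'' of Definition~\ref{def:discoverable} to the actual distribution induced by Intervention Sampling on $\IS_t$ (this is presumably where Assumption~\ref{ass:no-HRC-error} and the structure of the exploration data enter). A secondary subtlety is that when there are both discoverable and non-discoverable parents, or when the loss has multiple minimizers, one must be careful that \emph{every} optimal $\Bet^*$ (not just some optimal one) has its positive coefficients matching the parents; I would handle this by noting that the squared-error term is uniquely minimized (given a support containing all discoverable parents) by the Bayes-type threshold rule, pinning down the sign pattern, while the $\ell_0$ term pins down the support. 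Everything else — verifying the threshold construction reproduces AND/OR, and the elementary computation that the Bayes risk is $\rho$ — is routine. A clean way to organize the write-up is: Lemma (threshold realizes $\theta_i$, risk $=\rho$); Lemma (omitting a discoverable parent costs an extra $\delta_j>0$); Lemma (including a non-parent never helps); then assemble with $\lambda=\tfrac12\min_j\delta_j$.
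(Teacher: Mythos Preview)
Your outline is essentially the paper's argument: exhibit a threshold rule supported exactly on $\PA{g_i}$ achieving the Bayes risk $\rho$, then show every $\Bet$ whose support or sign pattern differs has strictly larger loss once $\lambda$ is small enough. The paper organizes this as an explicit three-case analysis (separately for OR and AND) in which any offending $\Bet$ is compared to a tailored modification $\Bet^*(\Bet)$; your Bayes-optimality framing is a cleaner packaging of the same idea, and your identification of the positivity assumption on the exploration distribution is exactly what the paper uses (it requires $P(\X)>0$ on the witnessing configurations).

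Two small fixes are needed. In (i) the inequality $|U|\ge|\PA{g_i}|+1$ is false for a support that contains a non-parent but also drops a parent; the correct ordering is to run (ii) first---forcing any optimal support to contain every discoverable parent---and only then invoke the Bayes bound plus the strict penalty increase to rule out extras (this is also how the paper's Case~3 is structured: first repair the parent coefficients, then zero out non-parents). Second, your threshold $\lambda<\min_j\delta_j$ is not tight enough: a support that omits parents saves up to $\lambda|\PA{g_i}|$ in penalty, so you need $\lambda<\min_j\delta_j/|\PA{g_i}|$ (the paper avoids this factor by comparing $\Bet$ to a modification that changes the $\ell_0$ count by at most $|Z_y|$ and bounding the error gain by a sum over $|Z_y|$ terms). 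Finally, the minimizer on support $\PA{g_i}$ is not the unique ``canonical'' vector---any $\Bet$ realizing the same threshold function works---but positivity of each $\beta_j^*$ is indeed forced by the threshold constraints: for AND, subtract $\sum_{k\ne j}\beta_k^*+\beta_0\le 0$ from $\sum_k\beta_k^*+\beta_0>0$ to get $\beta_j^*>0$, and dually for OR.
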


\begin{remark}
    In our experiments, for the causal discovery algorithm, we use the $l_1$ norm instead of the $l_0$ norm in the above loss function. 
    
\end{remark}



\section{Experimental Results}
\label{sc:experiments}
\begin{figure*}[htp!]
    \centering
    \includegraphics[width=.8\linewidth]{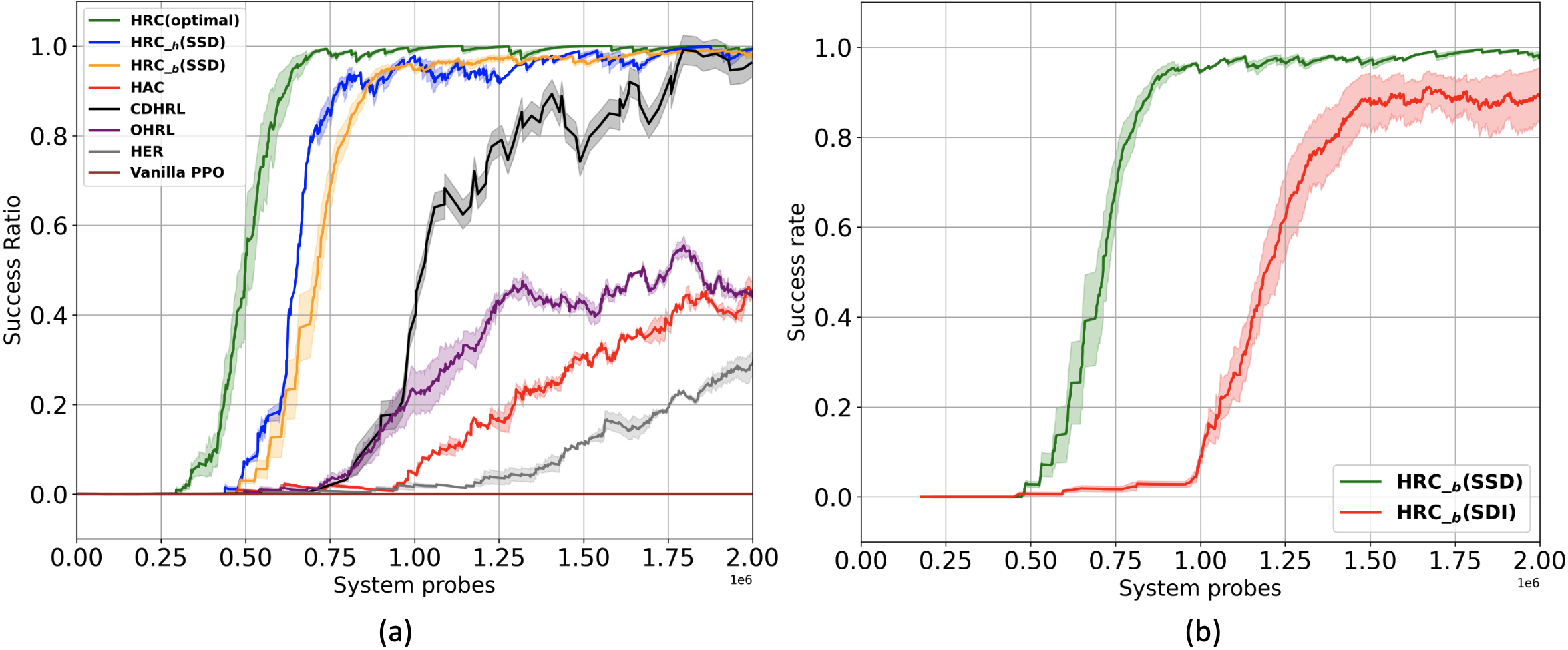}
    \caption{Comparison of versions of HRC algorithm (ours) with other methods in a complex environment of 2D-Minecraft.  }
    \label{fig:success-mc}
\end{figure*}
In this section, we compare our proposed methods with previous work \footnote{The code for all experiments is available at \url{https://github.com/sadegh16/HRC}.}. First, we compare our proposed ranking rules with a random strategy using synthetic data. 
Herein, \HRCC and \HRCS represent the versions of HRC algorithm using  the causal effect ranking rule and the shortest path ranking rule, respectively. When HRC algorithm uses the random strategy, it is denoted as a baseline or \HRCB. 

The cost depicted in Figures \ref{fig:semis}(a), \ref{fig:semis}(b), \ref{fig:semis}(c), and \ref{fig:semis}(d) represents the training cost of the HRC algorithm, as formulated in \eqref{eq:cost_tot}, as a function of the number of nodes. Specifically, Figures \ref{fig:semis}(a), \ref{fig:semis}(b), and \ref{fig:semis}(c) illustrate the costs for semi-Erdős–Rényi graph \( G(n, p) \) for different values of \( c \) (see Section \ref{sc:costanalysis} for the definition of $c$). 
Figure \ref{fig:semis}(d) illustrates the cost of \HRCC for a tree \( G(n, b) \) with the branching factor  \(3 \). In tree structures, there is no difference between using the shortest path versus the causal effect rule, as there is only one path to the final subgoal. In all the cases, the proposed ranking rules outperform the baseline, especially in sparse graphs (See Appendix \ref{apdx:sc:experimentdetails} for more details).

To evaluate the effectiveness of our proposed methods in a realistic setting, we conducted experiments using a complex long-horizon game called \textbf{2D-Minecraft}~\citep{sohn2018hierarchical}. 
This environment is a simplified two-dimensional adaptation of the well-known game Minecraft \citep{guss2019minerl}, where the agent must accomplish various hierarchical subgoals (such as collecting resources and crafting items) to reach the final objective. 
Note that in this game the agent receives a positive reward only upon achieving the final subgoal
\footnote{For  implementation details, see Section \ref{apdx:sc:experimentdetails} in the Appendix.}. We compared our algorithm with several HRL algorithms, namely CDHRL, HAC, HER, OHRL, and Vanilla PPO. See Appendix \ref{apdx:sc:experimentdetails} for more details about these algorithms.


Figure \ref{fig:success-mc}(a) compares three versions of HRC algorithm with previous work. In Figure \ref{fig:success-mc}(a), \HRCH(SSD) denotes the version of HRC algorithm that uses the causal effect ranking rule as the targeted strategy, and the proposed causal discovery method (SSD). \HRCB(SSD) represents the version of HRC algorithm which uses the same causal discovery method (SSD) but with a random strategy instead. HRC(optimal) denotes the HRC algorithm where both the causal discovery algorithm and the ranking rules operate without error, representing an upper bound on the performance that cannot be surpassed.  The y-axis represents the success ratio, indicating the percentage of trajectories in which the final subgoal was achieved, and the x-axis shows system probes. As can be seen, all versions of our HRC algorithm learn faster than the previous methods. This indicates that both our causal discovery method and targeted strategies provide remarkable improvements in achieving the final subgoal (see sensitivity analysis in Appendix \ref{apdx:sec:sensitivity_analysis}). 

In Figure \ref{fig:success-mc}(b)  we compare the success ratio of two versions of the HRC algorithm which are only different in the causal discovery part. In both versions, we use random strategy. In particular, \HRCB(SSD) uses our proposed causal discovery method while \HRCB$(\text{SDI})$ uses the one in \citet{ke2019learning}, which was used in \citet{hu2022causality}. 
This figure shows that our proposed causal discovery method is more effective in having faster learning by providing more accurate causal knowledge. 

We compare the recovered causal graphs in both methods and show their accuracy in Table \ref{tab:shd_comparison}, measured in terms of SHD \citep{tsamardinos2006max}. 
As can be seen in \cref{tab:shd_comparison}, our causal discovery algorithm results in much fewer extra edges, while having a few more missing edges on average.
\begin{table}[t]
\caption{Comparison of SHD for causal discovery methods}
\begin{center}
\begin{small}
\begin{sc}
\begin{tabular}{c|c|c|c}
\hline
\textbf{Method} & \textbf{SHD} & \textbf{Missing}& \textbf{Extra} \\ \hline
SSD (ours) & 12.3 & 6.0 &   6.3\\ 
SDI \citep{ke2019learning} & 19.8 &4.2 & 15.6\\ \hline
\end{tabular}
\label{tab:shd_comparison}
\end{sc}
\end{small}
\end{center}
\vskip -0.1in
\end{table}

\section{Conclusion}
We studied the problem of discovering and utilizing hierarchical structures among subgoals in HRL. By utilizing our causal discovery algorithm, the framework enables targeted interventions that prioritize impactful subgoals for achieving the final goal. Experimental results validate its superiority over existing methods in training cost.

\newpage
\section*{Acknowledgements}
This research was supported by the Swiss National Science Foundation through the NCCR Automation program under grant agreement no. 51NF40\_180545.

\section*{Impact Statement}
This paper presents work whose goal is to advance the field
of Machine Learning. There are many potential societal
consequences of our work, none which we feel must be
specifically highlighted here.

\bibliography{camera-ready}
\bibliographystyle{icml2025}

\newpage
\appendix
\onecolumn

\section{Additional Related Work}
\label{apdx:sc:related-work}
 
\textbf{Temporal abstractions and hierarchical policy:}  
In many applications, such as strategy games \citet{vinyals2019grandmaster} or robot locomotion \citet{stone2005reinforcement}, decision-making involves operating across different time scales. Temporal abstraction in RL addresses this by considering abstract actions that represent sequences of primitive actions. One way to implement temporal abstraction is through options, where each option defines a policy that the agent follows until a specific termination condition is met.
The option framework allows agents to choose between executing a primitive action or an option.
Further studies have been made to generalize the standard reward function or the value function for this framework \citep{schaul2015universal, sorg2010linear}. Upon the concept of temporal abstractions, researchers have developed several frameworks to learn hierarchical policies \citep{dietterich2000hierarchical, mcgovern2001automatic, kulkarni2016hierarchical, nachum2018data, levy2017learning}. A recent work by \citep{kulkarni2016hierarchical} proposed a framework that takes decisions in two levels of hierarchy: a higher-level policy that picks a goal, and a lower-level policy that selects primitive actions to achieve that goal. 
Despite these advancements, training hierarchical policies remains challenging, as changes in a policy at one level of the hierarchy can alter the transition and reward functions at higher levels. This interdependence makes it difficult to learn policies at different levels jointly.
Previous work has attempted to solve this issue but they mostly lack generality or require expensive on-policy training \citep{vezhnevets2017feudal, sigaud2019policy, heess2016learning, florensa2017stochastic}.
To address these challenges, techniques such as
Hierarchical Reinforcement Learning with Off-Policy Correction (HIRO) \citep{nachum2018data} and
Hierarchical Actor-Critic (HAC) \citep{levy2017learning} have been proposed. For instance, HAC accelerates learning by enabling agents to learn multiple levels of policies simultaneously.

\textbf{Causal reinforcement learning:} 
Recently, many studies have focused on exploiting causal relationships in the environment to address some of the main challenges in RL \citep{mendez2024reinforcement, pitis2020counterfactual, mendez2020causal, yang2024boosting, seitzer2021causal, wang2022causal, ding2022generalizing, yu2023explainable, forestier2022intrinsically, florensa2018automatic, zhu2022offline, li2024causally}. For instance, \citet{seitzer2021causal} proposes a mutual information measure to detect the causal influence of an action on the subsequent state, given the current state. This measure can be leveraged to enhance exploration and training. Many model-based RL methods learn a world model that generalizes poorly to unseen states. 
\citet{wang2022causal} aims to address this challenge by learning a causal world model that removes unnecessary dependencies between state
variables and the action.
\citet{ding2022generalizing} leverage causal reasoning to address the challenge of goal-conditioned generalization, where the goal for the training and testing stages will be sampled from different distributions. 
Recent work \citet{yu2023explainable} utilized causal discovery to capture the influence of actions on environmental variables and explain the agent’s decisions. 
\citet{zhu2022offline} shows theoretically that an agent equipped with a causal world model outperforms the one with a conventional world model in offline RL. 


{

\section{Connections to Related Literature}
\textbf{Skill Discovery:} The goal of skill discovery \citep{wang2024skild} is to learn a diverse set of skills, which are later used to train a higher-level policy for a downstream task. In that context, a skill is conceptually similar to a subgoal in our work. However, there are key methodological differences in subgoal vs. skill discovery: 1- In skill discovery, the process of learning the skill set is often decoupled from the downstream task. In contrast, our work—framed within the context of HRL—conditions the learning process on a target goal. By leveraging the learned subgoal structure, our approach guides exploration toward only the relevant subgoals that contribute to achieving the target goal. In contrast, skill discovery methods often aim to learn as many diverse skills as possible, regardless of their relevance to the target goal in the downstream task. 2- In our framework, the lower-level policy is itself hierarchical and is trained according to defined hierarchical structure (where we defined it formally based on the discovered subgoal structure). This results in a significant reduction in sample complexity compared to standard policy training, which typically does not utilize such a structure. Beyond these methodological distinctions in subgoal discovery and training policies, to the best of our knowledge, our work is the first to rigorously study HRL within a causal framework. We formally defined the cost formulation, proposed subgoal discovery strategies (with our key Definition ECE \ref{def:ECE}) with performance guarantees (Theorem \ref{thm:complexity}), and provided theoretical bounds on the extent to which the subgoal structure can be learned in an HRL setting (Prop. \ref{prop:identifiablity}). Furthermore, we introduce a causal discovery algorithm tailored to this setting, with provable guarantees on its correctness (Theorem \ref{th:causal_graph}).

\textbf{Inductive Logic Programming (ILP):}
RL has also been studied in other areas. Inductive Logic Programming (ILP)-based methods \citep{jiang2019neural,kimura2021neuro} represent policies as logical rules to enhance interpretability. Our work considers a multi-level policy, with a focus on recovering the hierarchy among subgoals (with our explicit hierarchical structure definition) for training the multi-level policy more efficiently. Herein, we do not impose any structural limitations on our policy. For our theoretical analysis, we assumed that the causal mechanism of each subgoal follows an AND/OR structure. This assumption pertains to the environment rather than for representing the policy using logical rules.

\textbf{Reward Machines:}
Reward machine \citep{icarte2022reward} can be viewed as an alternative way to represent hierarchical structures in HRL and can be effectively used for training policies.

}

\section{Additional Preliminaries
and Definitions}

\label{apdx:sec:subgoal_based_MDP}
{We model the HRL problem of our interest through a subgoal-based Markov decision process. 
The subgoal-based MDP is formalized as a tuple $(\setS, \setG, \setA, T, R,  \gamma)$, where
     $\setS$ is the set of all possible states (in our setup, all possible values of $\Xl^t$),
     $\setG$ is the subgoal space, which contains intermediary objectives that guide the learning of the policy,
     $\setA$ is the set of actions available to the agent,
     $T: \setS \times \setA \times \setS \to [0, 1]$ is the transition probability function that denotes the probability of the transition from state $s$ to state $s'$ after taking an action $a$, and
     $R: \setS \times \setA \times \setG \to \mathbb{R}$ is the reward indicator function of the agent pertaining to whether a subgoal $g$ was achieved in state $s$ after taking action $a$. More specifically,
    $R(s, a, g) =  1, \text{if the subgoal } g \text{ is achieved, otherwise it is zero}$. $\gamma$ is the discount factor that quantifies the diminishing value of future rewards.
Given a state $s \in \setS$ and a subgoal $g \in \setG$, the objective in a subgoal-based MDP is to learn a subgoal-based policy $\pi(a|s, g):\setS \times \setG \to  \setA $ that maximizes value function defined as $V^\pi(s, g) = \E_{\pi} \left[ \sum_{t=0}^\infty \gamma^t R(s_t, a_t, g) \mid s_0 = s\right]$.
}

\subsection{Introductory Example}
\label{apdx:preliminary-example}

For the example provided in Section \ref{sc:preliminaries}, based on Definition \ref{def:hierarichial-structure}, we present two hierarchical structures of the subgoal structure in Figure \ref{fig:mini-craft-graph}  in the following.

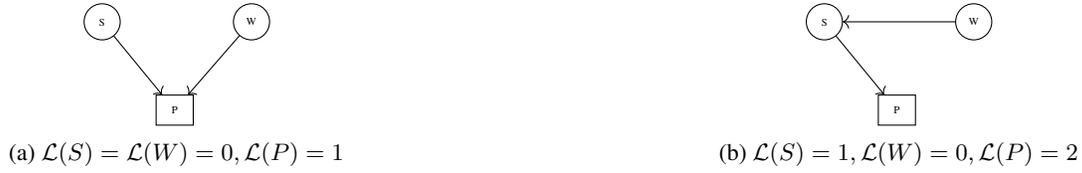
\begin{figure}[htp]
    \centering
\begin{subfigure}{.44\textwidth} 
\centering
\begin{tikzpicture}[node distance=0.8cm and 0.6cm, scale=0.5, every node/.style={scale=0.4}] 
\tikzset{round/.style={circle, draw, inner sep=0pt, minimum width=1.2cm}}
\tikzstyle{block} = [rectangle, draw, text width=1cm, text centered, minimum height=1cm, ]
\node[ round] (S) {S};
\node[right=1.5cm of S, round] (W) {W};
\node[below left=of W, block] (P) {P};
\draw[->] (S) -- (P);
\draw[->] (W) -- (P);
\end{tikzpicture}
\caption{$\Levelf(S)= \Levelf(W)=0, \Levelf(P)=1$}
\label{apdx:fig:example-hs1}
\end{subfigure}
\hfill
\begin{subfigure}{.44\textwidth} 
\centering
\begin{tikzpicture}[node distance=0.8cm and 0.6cm, scale=0.5, every node/.style={scale=0.4}] 
\tikzset{round/.style={circle, draw, inner sep=0pt, minimum width=1.2cm}}
\tikzstyle{block} = [rectangle, draw, text width=1cm, text centered, minimum height=1cm, ]
\node[ round] (S) {S};
\node[right=1.5cm of S, round] (W) {W};
\node[below left=of W, block] (P) {P};
\draw[->] (S) -- (P);
\draw[->] (W) -- (S);
\end{tikzpicture}
\caption{$\Levelf(S)=1, \Levelf(W)=0, \Levelf(P)=2$}
\label{apdx:fig:example-hs2}
\end{subfigure}

\caption{Two hierarchical structures of the subgoal structure given in Figure \ref{fig:mini-craft-graph}. S, W, and P denote stone, wood, and pickaxe, respectively. }
\label{apdx:fig:mini-craft-hs}
\end{figure}

In Figure \ref{apdx:fig:mini-craft-hs}, S, W, and P denote stone, wood, and pickaxe, respectively. Figure \ref{apdx:fig:example-hs1} represents a hierarchical structure $\hs=(\hierarchy, \Levelf)$ where $\hierarchy$ graph is the same as the real subgoal structure $\graph$ and level of S and W are zero and the level of P is 1. Figure \ref{apdx:fig:example-hs2} shows another $\hs=(\hierarchy, \Levelf)$ where $\hierarchy$ is not an exact estimate of $\graph$. The level of W is zero, S is 1 and P is 2.

\subsection{More Definitions}

\begin{definition}[{Structural Causal Model (SCM)}\citep{peters2017elements}]
\label{def:scm}

A Structural Causal Model is defined as a tuple $M = \langle U, V, F, P(U) \rangle$, where:

\begin{itemize}
    \item $U$ is a set of exogenous variables whose values are determined by factors outside the model.
    \item $V$ is a set of endogenous variables whose values are determined by variables within the model.
    \item $F$ is a set of structural functions $\{ f_v \}$, one for each $v \in V$, such that $v = f_v(pa(v), u_v)$, where $pa(v) \subseteq V \setminus \{v\}$ is the set of parents of $v$ and $u_v$ is the exogenous variable affecting $v$.
    \item $P(U)$ is a probability distribution over the exogenous variables $U$ which are required to be jointly independent.
\end{itemize}


\end{definition}

\section{HRC Algorithm}

\subsection{Intevention Sampling}
\label{apdx:interventionsampling}

Algorithm \ref{alg:InterventionSamplingRandom} collects interventional data $D_I$ for the subgoals in $\IS$ in $T$ trajectories. During each trajectory, it randomly intervenes on the subgoals in $\IS$ until the horizon $H$ is reached or all the subgoals in $\IS$ are achieved.
In line 1, it initializes an empty dataset to store the interventional data. Then at each iteration in the ``For'' loop, it records a trajectory $\tau_j$. The ``While'' loop in line 4 iterates until the horizon $H$ is reached or all the subgoals in $\IS$ are achieved.
In line 5, it randomly selects not already achieved subgoal $g_i$ from the set $\IS$ and in line 6 it performs intervention on it. This intervention can be accomplished by taking actions based on the subgoal-based policy $\pi(a \mid s, g_{i})$ until the subgoal $g_{i}$ is achieved at some time $t^*$  (see Remark \ref{rm:control-result}) or a predefined $max\_steps$ (the maximum number of steps that the policy interacts with the environment) is reached. Since all the subgoals in set $\IS$ are already controllable, with a high probability, the subgoal $g_i$ will be achieved. Next, in line 7, we append all the observed state-action pairs to the $\tau_j$ (along with explorational data if $g_i$ is achieved).
\begin{algorithm}[htp!]
\caption{Intervention Sampling}
\begin{algorithmic}[1]
\label{alg:InterventionSamplingRandom}
\REQUIRE Subgoal-based policy $\pi$, set of subgoals $\IS$, number of trajectories $T$, maximum horizon $H$
\ENSURE Collected interventional data $D_I$
\STATE $D_I \gets \{\}$ \COMMENT{Initialize empty interventional dataset}
\FOR{$j = 1$ to $T$}
    \STATE Initialize empty trajectory $\tau_j = \{\}$
    \WHILE{(len($\tau_j$) $<$ $H$) or (there exists an unachieved subgoal in $\IS$)}
        \STATE In $\IS$, randomly select a subgoal that is not already achieved, denoted as \(g _{i}\).
        \STATE Perform intervention on $g_i$
        \COMMENT{This can be accomplished by taking actions based on the subgoal-based policy $\pi(a \mid s, g_{i})$ until the subgoal $g_{i}$ is achieved at some time $t^*$  \ref{rm:control-result}}
        \STATE Add the observed state-action pairs and explorational data ($D_{\text{do}(X^{{t^*}}_i = 1)}$) (if any) to $\tau_j$
    \ENDWHILE
    \STATE Append trajectory \(\tau_j\) to $D_I$: $D_I \gets D_I \cup \tau_j$
\ENDFOR
\RETURN $D_I$
\end{algorithmic}
\end{algorithm}

\subsection{How to use a hierarchical structure and policy?}
\label{apdx:sc: hierarchical-policy}

The hierarchical structure $\hs$ determines which subgoals need to be achieved before achieving a specific subgoal $g_i$. These prerequisite subgoals are the ancestors of $g_i$, and they must be achieved by following a valid causal order. This causal order can be obtained using Algorithm \ref{alg:causalOrder}, which identifies the ancestors of $g_i$ and then sorts them by their level in the hierarchical structure. However, multiple valid causal orderings may exist, and it is the policy's role to choose a proper order in a given state. This policy can be modeled as a multi-level policy, which is defined in the following subsection.

\begin{algorithm}[ht]
\caption{Find Causal Order of Ancestors}
\label{alg:causalOrder}
\begin{algorithmic}[1]
\REQUIRE Hierarchical structure $\hs = (\hierarchy, \Levelf)$, node $g_i$
\ENSURE List of ancestors of $g_i$ sorted by causal order

\STATE Initialize empty list $A \gets []$ \COMMENT{List of ancestors}
\STATE Initialize stack $S \gets [g_i]$ \COMMENT{DFS stack to traverse ancestors}

\WHILE{$S$ is not empty}
    \STATE $g_j \gets$ pop element from stack $S$
    \IF{$g_j$ is not already in $A$}
        \STATE Add $g_j$ to $A$
        \STATE Push all parents of $g_j$ from  $\hierarchy$ into $S$
    \ENDIF
\ENDWHILE

\STATE Sort $A$ by increasing $\Levelf(g_j)$ for all $g_j \in A$
\RETURN $A$ \COMMENT{List of ancestors in causal order}
\end{algorithmic}
\end{algorithm}

\subsection{Multi-level Policy}
A multi-level policy $\pi_h$ is a collection of $k$ policies: $\pi_h = \{\pi_0, \pi_1, \dots, \pi_{k-1}\}
$, 
where $\pi_i$ is the policy at level $i$. The lowest-level policy is $\pi_0(a|s, g):\setS \times \setG \to  \setA $ which returns a primitive action given the current state $s$ and some subgoal $g$. Moreover, a policy $\pi_i(g'|s,g):\setS \times \setG \to  \setG $ at a higher level ($1\leq i \leq k$), returns a subgoal $g'$ given the current state $s$ and conditioned subgoal. 
Please note that the level of a policy is different than the level of a subgoal in the hierarchical structure (Refer to Definition \ref{def:hierarichial-structure} for the definition of levels of subgoals in a hierarchical structure).


Algorithm \ref{alg:HierarchicalPolicyEvaluation} gives a pseudo-code of executing the multi-levelpolicy recursively for a given state $s \in \setS$ and subgoal $g \in \setG$.
The input to the multi-level policy is a tuple of state, conditioned subgoal, and the recovered hierarchical structure $\mathcal{H}$.
Initially, in line 2, the algorithm determines the level of the subgoal (denoted by $i$) in the hierarchical structure, \(\Levelf(g)\). In line 3, it identifies the parents of this subgoal in the recovered subgoal structure and stores them in the $action\_ mask$. The output of policy is forced to be taken from the subgoals in $action\_ mask$ in line 5. The maximum number of actions that the policy at each level is allowed to take is equal to $max\_actions$. If the subgoal is a root in the subgoal structure (i.e., $i > 0$), the $action$ of $\pi_i$ is a subgoal, and the algorithm recursively calls the \textsc{execute\_$\pi_h$} function for this subgoal (line 7).
Otherwise, the subgoal is a root, and the agent takes the primitive actions directly in the environment (line 9).
After taking each action, the algorithm checks whether the goal has been achieved or if the trajectory is terminated. If either condition is met, the loop is terminated, and the algorithm returns the new state to the upper level of the multi-level policy.

\begin{algorithm}[ht]
\caption{Executing Hierarchical Policy }
\label{alg:HierarchicalPolicyEvaluation}
\begin{algorithmic}[1]
\REQUIRE state $s$, subgoal $g$, hierarchical structure $\hs$

\STATE \textbf{Function} \textsc{execute\_$\pi_h$}($s$, $g$, $\hs$):
    \STATE $i \gets \text{level of the $g$ in the hierarchical structure}$
    \STATE $action\_mask \gets $ parents of the $g$ from hierarchical structure
    
    \FOR{each step in $max\_actions$}
        \STATE Select action based on the $action\_mask$: $a \gets \pi_{i}(a \mid s, g)$
        
        \IF{$i > 0$}
            \STATE  $next\_state, done, goal\_achieved \gets$ \textsc{execute\_$\pi_h$}($s$, $a$, $\hs$)

        \ELSE
            \STATE Execute action in environment: \\
            $ next\_state, reward, done \gets env.step(a)$
        \ENDIF
        \STATE goal\_achieved $\gets$ Check if $goal$ is achieved
        \STATE $s \gets next\_state$
        \IF{$done$ \OR $g$ is achieved}
            \STATE \textbf{break}
        \ENDIF
    \ENDFOR

    \STATE \textbf{return} $next\_state$, $done$, $goal\_achieved$
\end{algorithmic}
\end{algorithm}

\subsection{Subgoal Training}
\label{apdx:ssc:subgoaltraining}

In Algorithm \ref{alg:SubgoalTraining}, in line 1, we first determine the maximum level of the reachable subgoals from the set $\CCS$. If the level number of these subgoals exceeds the current number of levels, $k$, in the multi-level policy $\pi_h$, we add a new level to $\pi_h$.
For each reachable subgoal $g_i$ in the set $\CCS$, we collect $T'$ trajectories. In each trajectory $\tau$, we iteratively generate a random number between 0 and 1 and randomly pick a subgoal in the set $\IS$ if the chosen number is less than $p$. Otherwise, we select $g_i$ for intervention. This ensures that the policy learns how to achieve $g_i$ not only from the initial state but also after having some interventions. In practice, we set $p = 0.1$. Note that if the final subgoal belongs to the reachable set $\CCS$, we only train on the final subgoal $g_n$. Additionally, for each subgoal $g$, we define a parameter called the \textit{success ratio}, which measures the proportion of times the subgoal $g$ is successfully achieved after interventions on it during Algorithm \ref{alg:SubgoalTraining}. We consider the subgoals that have a success ratio greater than a threshold $\phi_{\text{causal}}$ as controllable.

\begin{algorithm}[htp!]
\caption{Subgoal Training with Probabilistic Subgoal Selection}
\label{alg:SubgoalTraining}
\begin{algorithmic}[1]
\REQUIRE subgoal-based policy $\pi_h$ with $k$ levels, hierarchical structure $\hs$, intervention set $\IS$, reachable subgoals in $\CCS$, verification threshold $\phi_{\text{causal}}$
\IF{final subgoal $g_n$ belongs to the reachable set $\CCS$}
    \STATE Train only on the final subgoal $g_n: \CCS \gets \{g_n\}$
\ENDIF
\STATE Add a new level to $\pi_h$ if $\max_{g_i \in \CCS} \Levelf(g_i)>k$

\FOR{$g_i$ in $\CCS$}
    \FOR{$j = 1$ to $T'$}
        \STATE Initialize empty trajectory $\tau_j = \{\}$
         \WHILE{len($\tau_j$) $<$ $H$ or subgoal $g_i$ is not achieved}
         \STATE Select $r$ randomly from $[0,1]$
            \IF{$r\leq p$}
            \STATE In $\IS$, randomly select a not already achieved subgoal (denoted as \(g_j\))
            \ELSE
                \STATE Set \(g_j = g_i\)
            \ENDIF
            \STATE Call \textsc{execute\_$\pi_h$}($state$, $g_j$, $\hs$)

            \STATE Add the observed state-action pairs to $\tau_j$
        \ENDWHILE
        \STATE Append trajectory \(\tau_j\) to $D_U$: $D_U \gets D_U \cup \tau_j$
    \ENDFOR
    \STATE Train $\pi_h$ using $D_U$
\ENDFOR

\end{algorithmic}
\end{algorithm}

\section{Subgoal Discovery}
\label{apdx:sc:subgoal-discovery}

\subsection{Cost formulation proof}
\label{apdx:cost-formulation-proof}

We start by formulating the cost at each iteration \(t\):
\begin{align}
    \underbrace{\sum_{j=1}^T \big( \sum_{g' \in \IS_{t} }w_{\text{int},g'}^{\tau_j}+w_{\text{exp}}^{\tau_j}\big) }_{\text{InterventionSampling}}+\underbrace{\sum_{g''\in \CCS_t} \sum_{j=1}^{T'} \big(\sum_{g' \in \IS_{t}} w_{\text{int},g'}^{\tau_j}+w_{\text{train},g''}^{\tau_j}\big)}_{\text{SubgoalTraining}},
 \label{eq:raw-trans-cost}
\end{align}
where $T$ and $T'$ are the number of trajectories collected in Intervention Samping and Subgoal Training steps at iteration $t$, respectively. Furthermore, $\tau_j$ represents a trajectory and $w_{\text{int},g'}^{\tau_j}, w_{\text{exp}}^{\tau_j},$ and $w_{\text{train},g''}^{\tau_j}$ represent system probes corresponding to the different steps of the algorithm in trajectory $\tau_j$, which are further explained below.
In Intervention Sampling (Algorithm 
\ref{alg:InterventionSamplingRandom}), in each trajectory $\tau_j$, we randomly select a subgoal $g_i \in \IS_t$ and perform an intervention on it. 
{After each intervention, explorational data is gathered as specified in Definition \ref{def:intervention}.}
A trajectory is terminated when all subgoals in \( \IS_t \) have been achieved or when the horizon is reached. Note that, during an intervention on a randomly chosen subgoal \( g_i \), it may be necessary to first achieve other subgoals \( g' \in \IS_t \). Therefore, for each $\tau_j$, we consider the system probes obtained toward achieving each subgoal $g' \in \IS_t$ regardless of whether it was selected as the intervention subgoal ($g_i$) or not.
We denote this cost by $w_{\text{int},g'}^{\tau_j}$ and it is equal to the system probes between achieving two consecutive subgoals, with $g'$, being the second one.
Additionally, we need to consider collected explorational data during the trajectory $\tau_j$ to explore the environment and discover reachable subgoals. The number of system probes for exploration in trajectory $\tau_j$ is denoted by $w_{\text{exp}}^{\tau_j}$.

In Subgoal Training, for each reachable subgoal in $\CCS_t$, which is denoted by $g''$, we gather $T'$ trajectories to train the policy for achieving it. 
For each trajectory $\tau_j$, similar to the reasoning above, we consider the system probes obtained toward achieving each subgoal in $\IS_t$. Note that $w_{\text{train},g''}^{\tau}$ represents the portion of system probes towards achieving the subgoal $g''$, during the intervention on $g''$ by taking actions based on the subgoal-based policy $\pi(a \mid s, g'')$. 

As we mentioned in section \ref{sec:Formulating_cost} we define a recursive formula denoted by $C_{g_n}(\I)$ that represents the expected cost of achieving the final subgoal $g_n$ when the set $\I$ is intervened on:

\begin{align}
    C_{g_n}(\I) =  \sum_{\xsel \in \setG } p_{\I,\I\cup\{\xsel\}} \bigg[ C_{trans}(\I, \I\cup{\{\xsel\}})+ C_{g_n}(\I\cup{\{\xsel\}})\bigg],
    \label{eq:cost_combined}
\end{align}

where it has two terms: 1-the transition cost and 2-the cost-to-go. Specifically, the transition cost $C_{trans}(\I, \I\cup \{\xsel\})$ is the cost of transition from the current interventions set $\I$ to a new set by adding $\xsel$ to it ($\I \cup \{\xsel\}$). The cost-to-go $C_{g_n}(\I\cup \{\xsel\})$ shows the future costs from the new state onwards. Both components are weighted by the transition probability $p_{\I,\I\cup \{\xsel\}}$, which shows the probability of transition from state $\I$ to $\I\cup \{\xsel\}$ which is determined by the strategy. 
$C_{trans}(\I, \I\cup{\{\xsel\}})$ is indeed the expected of the cost formulated in \eqref{eq:raw-trans-cost}:

\begin{align}
\E\left[\sum_{j=1}^T \left( \sum_{g' \in \I \cup \{\xsel\}} w_{\text{int},g'}^{\tau_j} + w_{\text{exp}}^{\tau_j} \right) + \hspace{-0.5cm}\sum_{g'' \in N(\I \cup \{\xsel\})} \sum_{j=1}^{T'} \left( \sum_{g' \in \I \cup \{\xsel\}} w_{\text{int},g'}^{\tau_j} + w_{\text{train},g''}^{\tau_j} \right) \mid \I, \xsel \right],
\end{align}

where $N(\I\cup{\{\xsel\}})$ contains reachable subgoals having interventions on the subgoals in the set $\I\cup{\{\xsel\}}$. This set is equal to  $\CCS_t$ in Algorithm \ref{alg:HRC}.
We assume $\E[w_{\text{int},g'}^{\tau_j}]=\E[w_{\text{exp}}^{\tau_j}]=\E[w_{\text{train},g''}^{\tau_j}]=w$. Therefore, $C_{trans}(\I, \I\cup{\{\xsel\}})$  will be simplified to:
\begin{equation}
{T(|\I|+2)w} + {\sum_{g'' \in N(\I \cup \{\xsel\})} T'(|\I|+2)w}.
\label{apdx:eq:after-systemprobes-equal}
\end{equation}

Equation \eqref{eq:cost_combined} can be represented in matrix form:
\begin{equation}
    \mathbf{C_{g_n}} = (P \circ C_{\text{trans}}) \mathbf{1} + P \mathbf{C_{g_n}},
    \label{apdx:matrix-form}
\end{equation}
where \( \mathbf{C_{g_n}} \) is a \( 2^n \times 1 \) column vector, \( P \) and \( C_{\text{trans}} \) are \( 2^n \times 2^n \) matrices, and \( \mathbf{1} \) is a \( 2^n \times 1 \) column vector of ones.

We define \( R =(P \circ C_{\text{trans}}) \mathbf{1}\) where \( R \) is a \( 2^n \times 1 \) column vector:
\[
\begin{bmatrix}
R_{\mathcal{I}_1} \\
R_{\mathcal{I}_2} \\
\vdots \\
R_{\mathcal{I}_{2^n}}
\end{bmatrix},
\]
the elements of which are defined by:

\begin{equation} 
R_{\I_j} = 
\begin{cases}
\displaystyle\sum_{\substack{\xsel \in \setG\setminus \I_j}} p_{{\I_j},{\I_j}\cup\{\xsel\}} \left[T(|\I_j|+2)w + \displaystyle\sum_{\substack{g''\in N(\I_j\cup\{\xsel\})}} T'(|\I_j|+2)w \right], &  g_n \notin \I_j\\
0, &  g_n \in \I_j
\label{apdx:R-definition}
\end{cases}
\end{equation}

If we expand \eqref{apdx:matrix-form}:
\begin{align*}
    \mathbf{C_{g_n}} &= R +P \mathbf{C_{g_n}}
    \\
    &=R +P (R +P \mathbf{C_{g_n}})=R+PR+P^2\mathbf{C_{g_n}},
\end{align*}

which further expands to \((I + P + P^2 + \ldots)R\). Since \(P^i R = 0\) for \(i \geq n\), we can truncate the series at \(P^{n-1} R\). This truncation is justified because, beyond this point, all subgoals have been added to the set \(\I\), including $g_n$, and the algorithm terminates and results in no additional transition costs. Therefore,
\[
\mathbf{C_{g_n}}=(I+P+P^2+\ldots+P^{n-1})R
\]
\[
=\bigg[\begin{pmatrix}
1 & 0 & 0 & \dots \\
0 & 1 & 0 & \dots \\
0 & 0 & 1 & \dots \\
\vdots & \vdots & \vdots & \ddots
\end{pmatrix}+
\begin{pmatrix}
0 & q^{(1)}_{12} & q^{(1)}_{13} & \dots \\
0 & 0 & q^{(1)}_{23} & \dots \\
0 & 0 & 0 & \dots \\
\vdots & \vdots & \vdots & \ddots
\end{pmatrix}+
\ldots+
\begin{pmatrix}
0 & q^{(n-1)}_{12} & q^{(n-1)}_{13} & \dots \\
0 & 0 & q^{(n-1)}_{23} & \dots \\
0 & 0 & 0 & \dots \\
\vdots & \vdots & \vdots & \ddots
\end{pmatrix}\bigg]
\begin{pmatrix}
R_{\I_1} \\
R_{\I_2} \\
\vdots \\
R_{\I_{2^n}}
\end{pmatrix}.
\]
\( q^{(k)}_{ij} \) denote the \( ij \)-th entry of the \( k \)-th power of the transition probability matrix \( P \). In other words, \( (P^k)_{ij} = q^{(k)}_{ij} \). 
The expected cost of training is equal to the first entry in $\mathbf{C_{g_n}}$, where we start from primitive actions and $\mathcal{I}=\{\}$. Therefore,
\begin{equation}
    C_{g_n}[\{\}]= R_{\I_1} + \sum_{j} q_{1j}^{(1)}R_{\I_j}+\sum_{j} q_{1j}^{(2)}R_{\I_j}+\sum_{j} q_{1j}^{(3)}R_{\I_j}\ldots+ \sum_{j} q_{1j}^{(n-1)}R_{\I_j}.
    \label{eq:cost_tot}
\end{equation}
\begin{remark}
    The condition $\PA{g_i}^{\hat{\graph}_t} \subset \IS_t$ in line 8 of Algorithm \ref{alg:HRC}, is added because we assume that for a given $g_i \in \CH{\IS_t}^{\hat{\graph}_t}$, all of its recovered parents (i.e., $\PA{g_i}^{\hat{\graph}_t}$), are needed to be achieved before achieving $g_i$. Hence, this condition ensures that all of them are in set $\IS_t$. However, under Assumption \ref{ass:no-HRC-error}, we can remove this condition, as all necessary parents of $g_i \in \CH{\IS_t}^{\hat{\graph}_t}$ (i.e., for AND subgoal all of its parents in $\graph$ and for OR subgoal at least one of its parents in $\graph$) are already in $\IS_t$.
\end{remark}

\subsection{Analysis for the Tree}
\label{apdx:ssc:tree-graph}
In this section, we analyze the cost associated with a subgoal structure modeled as a tree \(G(n, b)\), where \(n\) represents the number of nodes and \(b\) the branching factor.
First we rewrite equation \eqref{apdx:R-definition}:

\[
R_{\I_j} = 
\begin{cases}
\displaystyle\sum_{\substack{\xsel \in \setG\setminus \I_j}} p_{{\I_j},{\I_j}\cup\{\xsel\}} \left[T(|\I_j|+2)w + \displaystyle\sum_{\substack{g''\in N(\I_j\cup\{\xsel\})}} T'(|\I_j|+2)w \right], &  g_n \notin \I_j\\
0, &  g_n \in \I_j
\end{cases}
\]
In the tree structure, each subgoal has only one parent and only becomes reachable by that. Hence:
\[R_{\I_j} = 
\begin{cases}
\sum_{\xsel \in (\setG\setminus \I_j)} p_{{\I_j},{\I_j}\cup\{\xsel\}} \left[T(|\I_j|+2)w + b T'(|\I_j|+2)w \right], &  g_n \notin \I_j,\\
0, &  g_n \in \I_j.
\end{cases}\]
Note that $\sum_{\xsel \in (\setG\setminus \I_j)} p_{{\I_j},{\I_j}\cup\{\xsel\}}=1$. Therefore,

\[R_{\I_j} = 
\begin{cases}
(|\I_j|+2)w(T+bT'), &  g_n \notin \I_j,\\
0, &  g_n \in \I_j.
\end{cases}\]

We assume that the final subgoal is positioned at the depth $D$. Consequently, Algorithm \ref{alg:HRC} is expected to perform at most \(D \leq \log_b(n)\) interventions before termination, where \(n\) is the total number of nodes and \(b\) is the branching factor.

In the \eqref{eq:cost_tot}, \(R_{\I_j}\) is zero if \(g_n \in \I_j\). This implies that for any \(k\), the term \(q_{1j}^k R_{\I_j}\) should not be included in the cost calculation if \(g_n\) is in \(\I_j\). Additionally, \(q_{1j}^k\) represents the probability of transition from the initial subgoal to the subset \(\I_j\) where \(|\I_j| = k\). Please note that the values of \(R_{\I_j}\) are the same for each term in the expression \(\sum_{j} q_{1j}^{(k)}R_{\I_j}\) for every \(j\) where \(g_n \notin \I_j\) as \(|\I_j| = k\). Thus:

\begin{align}
    \sum_{j} q_{1j}^{(k)}R_{\I_j} &= \sum_{j \text{ where } g_n \notin \I_j} q_{1j}^{(k)}R_{\I_j} + \sum_{j \text{ where } g_n \in \I_j} q_{1j}^{(k)}R_{\I_j} \\
    &= (k+2)w(T+bT')\sum_{j \text{ where } g_n \notin \I_j} q_{1j}^{(k)} = \Qng{k}\times(k+2)w(T+bT'), \label{apdx:tree-cost-term}
\end{align}
where $\Qng{k}=\sum_{j \text{ where } g_n \notin \I_j} q_{1j}^{(k)}$ is the
probability of reaching a state such as $\I_j$ where $|\I_j|=k$ and $g_n \notin \I_j$ after $k$ transitions from the initial state $\{\}$ in $\mathcal{M}$ (we call this sequence of transitions in $\M$ as a path). 
Note that for $k< D$, $\Qng{k}=1$ since the final subgoal cannot be intervened before all of its ancestors have been intervened.

Now, for $k \geq D$, we derive the probability that the final subgoal is in a set such as $\I_j$  where $|\I_j|=k$. In a tree structure, each node has only one parent and $b$ children. Thus, after adding each node to the intervention set, all of its children become controllable. Let \( f(h) \) represent the number of controllable subgoals having intervened on $h$ subgoals in the tree. It can be easily shown that:
\[
f(h) = hb-h+1=h(b-1)+1,
\]
as \( hb \) is the total number of controllable subgoals after intervening on \( h \) subgoals, \( -h \) accounts for the subgoals that have already been added to the intervention set, and \( +1 \) represents the root subgoal.

In order to calculate \( \Qng{k} \), since \( \Qng{k} = 1 - \Qg{k} \), we first an upper bound on \( \Qg{k} \). We know that there is only one path in the subgoal structure that leads from the root to the final subgoal. 
To add the final subgoal to the intervention set, we must select exactly one controllable node from the controllable set, at each iteration. 
The probability of making the correct selection is \( \frac{1}{f(h)} \).
On the other hand, for $k \geq D$, the final subgoal might be added to the intervention set in any position (iteration). Hence, the probability that the final subgoal is in a set $\I_j$ where $|\I_j|=k$ is as follows:
\begin{equation}
    \Qg{k} = \sum_{d=D}^k \prod_{h=1}^d \frac{1}{f(h)}.
 \label{apdx:tree-Qg}
\end{equation}

Note that the outer sum in the \eqref{apdx:tree-Qg}, is because $\prod_{h=1}^d \frac{1}{f(h)}$ is the probability that the final subgoal is added to set $\I_j$ at last (iteration $d$). Therefore, the sum considers all possible positions(iterations) where the final goal can be added to $\I_j$ (from $D$ to $k$). 
Based on this, we show that there exist a constant $\alpha$ where $\Qng{k}=(1-\Qg{k}) \geq \alpha>0$ for $k\geq D$:
\begin{align}
    \Qg{k} &= \sum_{d=D}^k \prod_{h=1}^d \frac{1}{h(b-1)+1}  \leq \sum_{d=D}^k \prod_{h=1}^d \frac{1}{h(b-1)}\\
    &=\sum_{d=D}^k \frac{1}{d!(b-1)^{d}}\\
    &\overset{(a)}\leq \sum_{d=D}^k \frac{1}{d!}\\
    &\overset{(b)} \leq \sum_{d=D}^{k} \frac{1}{2^{(d-1)}}\\
    &\overset{(c)} = \frac{2}{2^{D-1}} \left(1 - \frac{1}{2^{k-D+1}}\right)\\
    &\leq  \frac{2}{2^{D-1}}.
    \label{apdx:tree-upperbound-Qg}
\end{align}
$(a)$ $(b-1)^{d}\geq 1$ for $b>1$.\\
$(b)$ Because for any \( d \geq 1 \): $\frac{1}{d!} \leq \frac{1}{2^{(d-1)}}.
$\\
$(c)$ Sum of a geometric series with a first term \( \frac{1}{2^{(D-1)}} \) and ratio \( \frac{1}{2} \). \\

Therefore for $k \geq D$, 
\begin{equation}
    \Qng{k}=(1-\Qg{k})\geq (1- \frac{2}{2^{D-1}}) =\alpha.
    \label{apdx:tree-upperbound-Qng}
\end{equation}

Now, we calculate the total cost using the \eqref{eq:cost_tot}:

\begin{align}
    \mathbf{C_{g_n}[\{\}]}&=\underbrace{R_{\I_1} + \sum_{j} q_{1j}^{(1)}R_{\I_j}+\ldots+\sum_{j} q_{1j}^{(D-1)}R_{\I_j}}_{\text{``paths in $\M$ where the final subgoal cannot be reached"}}+\underbrace{\sum_{j} q_{1j}^{(D)}R_{\I_j}+\ldots+ \sum_{j} q_{1j}^{(n-1)}R_{\I_j}}_{\text{``paths in $\M$ where the final subgoal can be reached"}}\\
    & = \underbrace{2w(T+bT') + \sum_{k=1}^{D-1}\Qng{k}(k+2)w(T+bT')}_{\text{``paths in $\M$ where the final subgoal cannot be reached"}}
    +\underbrace{ \sum_{k=D}^{n-1}\Qng{k}(k+2)w(T+bT')}_{\text{``paths in $\M$ where the final subgoal can be reached"}}\\
    & \overset{(a)} \geq  \underbrace{2w(T+bT') + \sum_{k=1}^{D-1}(k+2)w(T+bT')}_{\text{``paths in $\M$ where the final subgoal cannot be reached"}}
    +\underbrace{ \sum_{k=D}^{n-1}\alpha(k+2)w(T+bT')}_{\text{``paths in $\M$ where the final subgoal can be reached"}}\\
     & \geq  2w(T+bT') + \sum_{k=1}^{n-1}\alpha(k+2)w(T+bT')\\
    &=  \frac{\alpha}{2}n(n+3)w(T+ bT').
\end{align}

where in $(a)$, we used \eqref{apdx:tree-upperbound-Qng}.
Therefore for \HRCB, the complexity is $\Omega(n^2b)$, while for the targeted strategies in Section \ref{sc:subgoal_discovery}, the sum ends on $k\leq D \leq \log_b(n) $ since they add only the ancestors of the final subgoal to the intervention set in the worst-case scenario (based on Assumption \ref{ass:no-heuristic-error}). Therefore, the complexity of our targeted strategies is \ $O(\log^2(n)b)$.

\subsection{Analysis for Random Graph}
\label{apdx:ssc:random-graph}
In this section, we analyze the cost of training in subgoal structures modeled by a random graph which we call semi-Erdős–Rényi graph. In particular, semi-Erdős–Rényi $G(n,p)$ is a  directed graph with the vertex set \( V = \{1, 2, \dots, n\} \) and the directed edge set \( E \). Each node $i \in V$, corresponds with a subgoal $g_i$.
The edges in the graph are generated as follows:
For each node \( i \in \{1, 2, \dots, n-1\} \), iterate over each node \( j \) such that \( i < j \) and add a directed edge from node \( i \) to node \( j \) with probability \( p \), where \( p \) is the edge probability. Formally, for each pair of nodes \( (i, j) \) such that \( i < j \):
\[
\text{Pr}((i, j) \in E) = p,
\]
where the existence of an edge \( (i, j) \) in \( E \) is determined independently from other edges.
In our analysis, we consider $p=\frac{c\log(n)}{n-1}$, where is a positive constant $c>0$.

\noindent\textbf{Lower bound for semi-Erdős–Rényi $G(n,p)$ for \HRCB algorithm}

From \eqref{eq:cost_tot}, the total expected of cost is:

\begin{equation}
\E\bigg[C_{g_n}[\{\}]\bigg]=\E\bigg[R_{\I_1}\bigg] + \E\bigg[\sum_{j} q_{1j}^{(1)}R_{\I_j}\bigg]+\E\bigg[\sum_{j} q_{1j}^{(2)}R_{\I_j}\bigg]+\ldots+ \E\bigg[\sum_{j} q_{1j}^{(n-1)}R_{\I_j}\bigg].
\label{apdx:expected-lower-cost-er}
\end{equation}

Based on the definition of $R_{\I_j}$, we know:
\begin{align}
    \sum_{j} q_{1j}^{(k)}R_{\I_j} &= \sum_{j \text{ where } g_n \notin \I_j} q_{1j}^{(k)}R_{\I_j} + \cancelto{0}{\sum_{j \text{ where } g_n \in \I_j} q_{1j}^{(k)}R_{\I_j}}.
\end{align}

Hence, we can rewrite  \eqref{apdx:expected-lower-cost-er} as follows:
\begin{equation}
\E\bigg[C_{g_n}[\{\}]\bigg]=\E\bigg[R_{\I_1}\bigg] + \sum_{k=1}^{n-1} \E\bigg[\sum_{j \text{ where } g_n \notin \I_j} q_{1j}^{(k)}R_{\I_j}\bigg]= \E\bigg[R_{\I_1}\bigg] + \sum_{k=1}^{n-1} \sum_{j \text{ where } g_n \notin \I_j} \E\bigg[q_{1j}^{(k)}R_{\I_j}\bigg].\label{apdx:eq:expected-lower2-cost-er}
\end{equation}


We drive a lower bound for $R_{\I_j}$ for $j$  that $g_n \notin \I_j$. Note that since we are looking for the lower bound for the cost, we consider particular subgoal structures where subgoals are all of ``OR'' type:

\begin{align}
R_{\I_j}&=\sum_{\xsel \in (\setG\setminus \I_j) } p_{{\I_j},{\I_j}\cup\{\xsel\}} \bigg[wT(|\I_j|+2)+ \sum_{g''\in N(\I_j\cup{\{\xsel\})}} 
 wT'(|\I_j|+2) \bigg] \\
 &\geq \sum_{\xsel \in (\setG\setminus \I_j) } p_{{\I_j},{\I_j}\cup\{\xsel\}}\bigg[ wT(|\I_j|+2)\bigg]  \\
 &= wT(|\I_j|+2)
 ,
 \label{apdx:lower-rj-er}
\end{align}


where the last equality is because the sum of the probabilities should equal $1$.

Now for each $k$:
\begin{align}
    &\sum_{j \text{ where } g_n \notin \I_j} \E\bigg[q_{1j}^{(k)}R_{\I_j}\bigg]\nonumber\\
    &\geq \sum_{j \text{ where } g_n \notin \I_j} \E[q_{1j}^{(k)}]wT(|\I_j|+2) \nonumber\\
    &= wT(k+2)\sum_{j \text{ where } g_n \notin \I_j} \E[q_{1j}^{(k)}],
    \label{apdx:eq:lowerbound-er-RIj}
\end{align}

 which means that we only need to compute $\sum_{j \text{ where } g_n \notin \I_j} \E[q_{1j}^{(k)}]$, 
 which is the sum of the expected probability of the paths (in MDP $\M$) that start from the state $\{\}$ and reach to a state $\I_j$ that does not include the final subgoal after $k$ transitions ($g_n \notin \I_j$ and $|\I_j|=k$). We know that,

\begin{equation}
    \sum_{j \text{ where } g_n \in \I_j} \E[q_{1j}^{(k)}]+\sum_{j \text{ where } g_n \notin \I_j} \E[q_{1j}^{(k)}]=1.
\end{equation}
 Now consider each $\I_j$ where $g_n \notin \I_j$ and $|\I_j|=k$. We can create a new set $\I_{j_i}$ by replacing $g_i \in \I_j$ with $g_n$ (see Figure \ref{apdx:fig:transformation}).  Let us denote this new set with $\I_{j_i}$ (the index of this new set is denoted by $j_i$).  

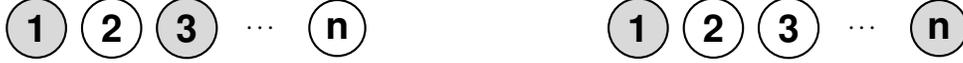
\begin{figure}[htp]
\begin{center}
\begin{tikzpicture}[->,shorten >=0.5pt,auto,node distance=1cm,
    thick,main node/.style={circle,draw,font=\sffamily\Large\bfseries},
    gray node/.style={circle,draw,fill=gray!30,font=\sffamily\Large\bfseries}]

\node[gray node] (1) {1}; 
\node[main node] (2) [right of=1] {2};
\node[gray node] (3) [right of=2] {3}; 
\node[main node] (n) [right of=3, xshift=1cm] {n};

\node at ($(3)!.5!(n)$) {\dots};

\node[gray node] (1r) at (8,0) {1};
\node[main node] (2r) [right of=1r] {2};
\node[main node] (3r) [right of=2r] {3};
\node[gray node] (nr) [right of=3r, xshift=1cm] {n}; 

\node at ($(3r)!.5!(nr)$) {\dots};

\end{tikzpicture}
\end{center} 
\caption{Left: The gray nodes represent elements of the set $\I_j$ with size \( k \), where subgoal \( g_n \notin \I_j \). Right: A transformation of the set $\I_j$ to a new set $\I_{j_i}$, where one element (in this case, node \( g_i=3 \)) in $\I_j$ is replaced by \( g_n \), as shown by the gray node \( g_n \) on the right. }
\label{apdx:fig:transformation}
\end{figure}

Based on Lemma \ref{apdx:lem:unbalanced-q1j} we have $\E[q_{1j_i}^{(k)}]\leq \E[q_{1j}^{(k)}]$.  Hence, for each such $\I_j$ there exist $k$ number of $\I_{j_i}$s, where $\E[q_{1j_i}^{(k)}]\leq \E[q_{1j}^{(k)}]$. Summing all of these inequalities gives:  
\begin{equation}
    \sum_{g_i \in \I_j }\E[q_{1j_i}^{(k)}]\leq k\E[q_{1j}^{(k)}].
    \label{apdx:eq:unbalance-exoectations}
\end{equation}
 Note that each $\I_{j_i}$ can be created from $(n-k)$ different $\I_{j}$s. Hence, summing all the inequalities from \ref{apdx:eq:unbalance-exoectations} for different $j$s, each $\E[q_{1j_i}^{(k)}]$ (or the corresponding $\E[q_{1j}^{(k)}]$ where $g_n \in \I_j$ ) will be counted $(n-k)$ times:

 \begin{equation}
     (n-k)\sum_{j \text{ where } g_n \in \I_j} \E[q_{1j}^{(k)}]\leq k\sum_{j \text{ where } g_n \notin \I_j} \E[q_{1j}^{(k)}].
 \end{equation}
By adding $\sum_{j \text{ where } g_n \in \I_j} \E[q_{1j}^{(k)}]$ to the both sides and rearranging the terms:
 \begin{equation}
     \sum_{j \text{ where } g_n \in \I_j} \E[q_{1j}^{(k)}]\leq \frac{k}{n-k}\sum_{j \text{ where } g_n \notin \I_j} \E[q_{1j}^{(k)}].
 \end{equation}

Therefore,
  \begin{equation}
     \sum_{j \text{ where } g_n \notin \I_j} \E[q_{1j}^{(k)}]+\sum_{j \text{ where } g_n \in \I_j} \E[q_{1j}^{(k)}]\leq \sum_{j \text{ where } g_n \notin \I_j} \E[q_{1j}^{(k)}]+ \frac{k}{n-k}\sum_{j \text{ where } g_n \notin \I_j} \E[q_{1j}^{(k)}].
 \end{equation}

 The left-hand side is $1$ and the right hand side is $\frac{n}{n-k}\sum_{j \text{ where } g_n \notin \I_j} \E[q_{1j}^{(k)}]$ , hence:
  \begin{equation}
     1 \leq \frac{n}{n-k}\sum_{j \text{ where } g_n \notin \I_j} \E[q_{1j}^{(k)}].
 \end{equation}

Thus, 

   \begin{equation}
      \sum_{j \text{ where } g_n \notin \I_j} \E[q_{1j}^{(k)}] \geq 1-\frac{k}{n} .
      \label{apdx:eq:lowerbound-er-Eq1j}
 \end{equation}
Now we compute the total cost using \eqref{apdx:eq:expected-lower2-cost-er} as follows:

\begin{align*}
    \E\bigg[C_{g_n}[\{\}]\bigg]=&\E\bigg[R_{\I_1}\bigg] + \sum_{k=1}^{n-1} \sum_{j \text{ where } g_n \notin \I_j} \E\bigg[q_{1j}^{(k)}R_{\I_j}\bigg]\\
    &\overset{(a)} \geq 2wT+ wcT'\log(n) + \sum_{k=1}^{n-1}\left[
    wT(k+2)\times\sum_{j \text{ where } g_n \notin \I_j} \E[q_{1j}^{(k)}]\right]
    \\
    & \overset{(b)}\geq 2wT+ wcT'\log(n) + \sum_{k=1}^{n-1} \left[wT(k+2)(1-\frac{k}{n})\right]\\
    &\geq 2wT+ wcT'\log(n) + \frac{wT}{n}\frac{n(n-1)(3n + 5)}{6}.\\
\end{align*}
$(a)$ We used inequality in \ref{apdx:eq:lowerbound-er-RIj} in the third term.\\
$(b)$ We used 
inequality in \ref{apdx:eq:lowerbound-er-Eq1j} in the third term.\\

Therefore, the complexity of \HRCB algorithm is $\Omega(n^2)$. In the following, we analyze the complexity of our suggested targeted strategies for this type of random graph.

\textbf{Upper bound for semi-Erdős–Rényi $G(n,p)$ where $p=\frac{c\log(n)}{(n-1)}$ for \HRCH}
Due to the assumptions made in Section \ref{sc:costanalysis}, our ranking rules in Section \ref{sc:subgoal_discovery} intervene on all ancestors of the final subgoal in the worst-case scenario. Hence, we first obtain an upper bound on the expected number of nodes that are ancestors of the final subgoal. 

For a given node $i$, we define the event $A_i$ as the probability that $i$ is the ancestor of node $n$. The probability of $A_i$ is as follows:

\begin{align}
    P(A_i) & = \underbrace{P(i \text{  connected to } n)}_{(a)}+\underbrace{P(i \text{ not connected to } n)}_{(b)} \underbrace{P\bigg(\bigcup_{j=i+1}^{n-1} (j \text{ is a child of } i) \cap  A_j \bigg)}_{(c)} \\
    &\leq {p}+{(1-p)}\underbrace{\sum_{j=i+1}^{n-1} p\times P(A_j)}_{(d)} \\
    &=p+(1-p)p\sum_{j=i+1}^{n-1} P(A_j),
\end{align}

where $(a)$, represents the probability that node $i$ is directly connected to node $n$ (which is $p$), and $(b)$ is the probability that node $i$ is not connected to node $n$ (which is $1-p$). If $i$ is not directly connected to $n$, it must be connected to at least one node ($j>i+1$) that is itself an ancestor of $n$. This probability is represented in $(c)$ and upper-bounded by union bound in term $(d)$. We define $q_i=\sum_{j=i}^{n-1} P(A_j)$. The goal is to find an upper bound on $q_1=\sum_{j=i}^{n-1} P(A_j)$ which is an upper bound of the expected number of ancestors of $n$. Therefore, we can rewrite the above equation in a recurrence form as follows:
\begin{equation}
        q_i-q_{i+1} \leq p+(1-p)p q_{i+1}.
\end{equation}

Let \( a = p - p^2 + 1 \). The recurrence then becomes:

\[
q_i \leq p + aq_{i+1}.
\]

To solve the recurrence, let us begin with \( i = n - 1 \). For this case, we have:

\[
q_{n-1} = \sum_{j=n-1}^{n-1} P(A_j) = P(A_{n-1})= p,
\]

where the last equality holds because node \( n-1 \) is an ancestor of node \( n \) if and only if \( n \) is the direct child of \( n-1 \). In general, for any \( q_i \), $1\leq i \leq n-1$, we can derive:

\[
q_i \leq p \sum_{k=0}^{n-1-i} a^k.
\]

To find \( q_1 \), we sum the series from \( k = 0 \) to \( k = n-2 \):

\[
q_1 \leq p \frac{a^{n-1} - 1}{a - 1}.
\]

Finally, we substitute \( a = p - p^2 + 1 \) and \( p = \frac{c \log n}{n-1} \) into the above equation:

\begin{equation}
q_1 \leq  \frac{\left(\frac{c \log n}{n-1} - \left(\frac{c \log n}{n-1}\right)^2 + 1\right)^{n-1} - 1}{ \left(1 - \frac{c \log n}{n-1}\right)}
\leq  \frac{\left(\frac{c \log n}{n-1} + 1\right)^{n-1} - 1}{ \left(1 - \frac{c \log n}{n-1}\right)}
\leq  \frac{\left(\frac{c \log n}{n-1} + 1\right)^{n-1}}{ \left(1 - \frac{c \log n}{n-1}\right)},
\label{apdx:upperbound_ancestors}
\end{equation}
where the last two inequalities are because the terms \( - \left(\frac{c \log n}{n-1}\right)^2 \) and $-1$ are negative. The term \( \left(\frac{c \log n}{n-1}+1\right)^{n-1} \) is upper bounded as follows:

\[
\left( \frac{c \log n}{n-1}+1\right)^{n-1} \leq e^{c\log n} = n^c.
\]

Substituting back into the \eqref{apdx:upperbound_ancestors}:

\[
q_1 \leq \frac{n^c}{1 - \frac{c \log n}{n-1}}.
\]

For \( n > 4\), \( c < 1\) , we have: \( \frac{c \log n}{n-1} < \frac{1}{2} \). Hence:

\[
q_1 \leq 2n^c.
\]

As we mentioned earlier, $q_1$ is the upper bound on the expected number of the nodes that are ancestors of the node $n$. Hence:

\begin{equation}
\E[\ANS(G)] \leq \sum_{j=1}^{n-1} P(A_j)=q_1 \leq 2n^c,
\label{apdx:upper-E-ANS}
\end{equation}
where \(G\) is the generated random graph and \(\ANS(G)\) represents the number of ancestors of the node \(n\) in the graph \(G\). The total cost is then computed as the expected sum of the transition costs:
\begin{equation}
\E\left[\E\left[\sum_{k=1}^{\ANS(G)} \bigg[T(k+2)w + N_k^G \times T'(k+2)w \bigg] \bigg| G \right]\right],
\end{equation}
where the inner expectation is because of over the orders of adding ancestors to the interventions. Let \(N_k^G\) represent the set of reachable subgoals when the \(k\)-th ancestor is added to the intervention set. We define \(N(G)\) as the maximum degree of the graph $G$. Hence, for every $k$, $N_k^G\leq N(G)$. With further simplification:
\begin{equation}
\E\left[\E\left[\sum_{k=1}^{\ANS(G)} \bigg[T(k+2)w + N_k^G \times T'(k+2)w \bigg] \biggm| g \right]\right] \leq \E\left[\sum_{k=1}^{\ANS(G)} \left[T(k+2)w + N(G) T'(k+2)w \right]\right].
\end{equation}
We can partition the last expectation into two terms:
\begin{equation}
\E\left[\sum_{k=1}^{\ANS(G)} \left[T(k+2)w + N(G) T'(k+2)w \right]\right]=\E\left[\sum_{k=1}^{\ANS(G)} T(k+2)w\right] + \E\left[N(G) \sum_{k=1}^{\ANS(G)} T'(k+2)w\right].
\end{equation}
We now analyze the second term as it is the dominant term. We partition it into two expectations based on the following event $N(G)< (1+\delta)c\log(n)$:
\begin{align}
&\E\left[N(G) \sum_{k=1}^{\ANS(G)} T'(k+2)w\right] \\
&= \E\left[\bigg(N(G) \sum_{k=1}^{\ANS(G)} T'(k+2)w \bigg)  \mathbbm{1}_{N(G) < (1+\delta)c\log(n)}\right]\nonumber \\
&\quad + \E\left[\bigg(N(G) \sum_{k=1}^{\ANS(G)} T'(k+2)w \bigg)  \mathbbm{1}_{N(G) \geq (1+\delta)c\log(n)}\right]\\
&\overset{(a)}\leq (1+\delta)c\log(n) \E\left[\bigg(\sum_{k=1}^{\ANS(G)} T'(k+2)w \bigg)  \mathbbm{1}_{N(G) < (1+\delta)c\log(n)}\right]\nonumber \\
&\quad + n \E\left[\bigg(\sum_{k=1}^{\ANS(G)} T'(k+2)w \bigg)  \mathbbm{1}_{N(G) \geq (1+\delta)c\log(n)}\right]\\
&\overset{(b)}\leq (1+\delta)c\log(n) \E\left[\bigg(\sum_{k=1}^{\ANS(G)} T'(k+2)w \bigg) \right]\nonumber \\
&\quad + n \E\left[\bigg(\sum_{k=1}^{\ANS(G)} T'(k+2)w \bigg)  \mathbbm{1}_{N(G) \geq (1+\delta)c\log(n)}\right]\\
&\overset{(c)}{=}(1+\delta)c\log(n) \E\left[\bigg(\sum_{k=1}^{\ANS(G)} T'(k+2)w \bigg) \right]\nonumber \\
&\quad + n\E\left[ \sum_{k=1}^{\ANS(G)} T'(k+2)w \biggm | N(G) \geq (1+\delta)c\log(n) \right] P\bigg(N(G) \geq (1+\delta)c\log(n)\bigg)\\
&\overset{(d)} = wT'(1+\delta)c\log(n)\E\left[ \sum_{k=1}^{\ANS(G)} (k+2) \right] \nonumber\\
&\quad + wT'n\E\left[ \sum_{k=1}^{\ANS(G)} (k+2) \biggm | N(G) \geq (1+\delta)c\log(n) \right] P\bigg(N(G) \geq (1+\delta)c\log(n)\bigg)\\
&\overset{(e)} \leq wT'(1+\delta)c\log(n)\E\left[ \sum_{k=1}^{\ANS(G)} (k+2) \right] \nonumber\\
&\quad + wT'n(\frac{n^2+5n}{2}) P\bigg(N(G) \geq (1+\delta)c\log(n)\bigg)\label{apdx:upper-ans-cost-total}
\end{align}
$(a)$ We move the max of $N(G)$ outside of the expectations.\\
$(b)$ Because $\E\left[\bigg(\sum_{k=1}^{\ANS(G)} T'(k+2)w \bigg)  \mathbbm{1}_{N(G) < (1+\delta)c\log(n)}\right] \leq \E\left[\bigg(\sum_{k=1}^{\ANS(G)} T'(k+2)w \bigg) \right]$.\\
$(c)$ Expanding the second expectation based on the law of total expectation.\\
$(d)$ Moving constants out of the expectations.\\
$(e)$ Replacing $\ANS(G)$ with its max value ($n$) and computing the expectation.

Now we should compute an upper bound on $\E\left[ \sum_{k=1}^{\ANS(G)} (k+2) \right]$ and $P\bigg(N(G) \geq (1+\delta)c\log(n)\bigg)$. First, we obtain an upper bound on the term $\E\left[ \sum_{k=1}^{\ANS(G)} (k+2) \right]$:

\[\E\left[ \sum_{k=1}^{\ANS(G)} (k+2) \right]=\E\left[ \frac{\ANS^2(G)+5\ANS(G)}{2} \right]=\frac{\E[ \ANS^2(G)]}{2}+\frac{5}{2}\E[\ANS(G)] .
\]
We derived an upper bound on $\E[\ANS(G)]$ in \eqref{apdx:upper-E-ANS}. Hence we only need to find an upper bound on $\E[\ANS^2(G)]$:

\begin{align*}
\E\bigg[\ANS^2(G)\bigg] &= \E\bigg[ \ANS^2(G) \mid \ANS(G) \geq h \E[\ANS(G)] \bigg]P\bigg(\ANS(G) \geq h \E[\ANS(G)]\bigg) \\
&\quad + \E\bigg[\ANS^2(G) \mid \ANS(G) < h \E[\ANS(G)] \bigg]P\bigg(\ANS(G) < h \E[\ANS(G)]\bigg)\\
&\overset{(a)}\leq \E\bigg[ \ANS^2(G) \mid \ANS(G) \geq h \E[\ANS(G)] \bigg]\frac{1}{h} \\
&\quad + \E\bigg[\ANS^2(G) \mid \ANS(G) < h \E[\ANS(G)] \bigg]\\
&\overset{(b)} \leq \frac{n^2}{h} + h^2\E[\ANS(G)]^2\\
&\leq \frac{n^2}{h} + 4h^2n^{2c}.
\end{align*}
$(a)$ Using Markov's inequality $P\bigg(\ANS(G) > h \E[\ANS(G)]\bigg)\leq \frac{1}{h}$.\\
$(b)$ For each expectation we replace $\ANS(G)$ with its maximum value and take it outside the expectation.

The critical point for $\frac{n^2}{h} + 4h^2n^{2c}$ is \( h = \left(\frac{n^{2-2c}}{8}\right)^{1/3} \). By substituting this into the above inequality we obtain:

\[
\E[\ANS^2(G)] \leq \frac{n^2}{\left(\frac{n^{2-2c}}{8}\right)^{1/3}} + 4\left(\frac{n^{2-2c}}{8}\right)^{2/3} n^{2c}=
 {3n^{\frac{4}{3}+\frac{2}{3}c}}.\\
\]

Thus, $\E\left[ \sum_{k=1}^{\ANS(G)} (k+2) \right]$ can be upper bounded by:

\begin{align}
&\E\left[ \sum_{k=1}^{\ANS(G)} (k+2) \right]=\frac{\E[ \ANS^2(G)]}{2}+\frac{5}{2}\E[\ANS(G)] \nonumber\\
& \leq \frac{3}{2}n^{\frac{4}{3}+\frac{2}{3}c} + 5n^c.
\label{apdx:upper-square-ans}
\end{align}

Now, we only need to find the upper bound on the probability $P\bigg(N(G) \geq (1+\delta)c\log(n)\bigg)$ in the \eqref{apdx:upper-ans-cost-total}.
To obtain an upper bound, we analyze the case in which each node \( i \) can have edges to all other \( n - 1 \) nodes.
Let \( D_i \) denote the out-degree of node \( i \). Therefore, each \( D_i \) follows \( \text{Binomial}(n - 1, p) \) with expected value:
\[
\mu_i = \mathbb{E}[D_i] = (n - 1)p=c\log(n).
\]
Using Chernoff bound in Lemma \ref{apdx:chernoff-lemma}:
\[
P\bigg( D_i \geq (1 + \delta)c\log(n) \bigg) \leq \exp\left(-\frac{\delta^2 c\log(n)}{2 + \delta}\right)=\exp\left(-\frac{\delta^2 c\log(n)}{2 + \delta}\right)=\frac{1}{n^{\frac{\delta^2}{2 + \delta}c}}.  
\]

The event that the maximum out-degree exceeds \( (1 + \delta)c\log(n) \) is:
\[
\left\{ N(G) \geq (1 + \delta)c\log(n) \right\} = \bigcup_{i=1}^n \left\{ D_i \geq (1 + \delta)c\log(n) \right\}.
\]
Thus using union bound:
\begin{equation}
P\bigg( N(G) \geq (1 + \delta)c\log(n) \bigg) = P\left( \bigcup_{i=1}^n \left\{ D_i \geq (1 + \delta)c\log(n) \right\} \right) \leq \sum_{i=1}^n P\bigg( D_i \geq (1 + \delta)c\log(n) \bigg)=\frac{n}{n^{\frac{\delta^2}{2 + \delta}c}}.
\label{apdx:upper-degree-prbability} 
\end{equation}
  
We substitute equations \eqref{apdx:upper-square-ans} and \eqref{apdx:upper-degree-prbability} into equation \eqref{apdx:upper-ans-cost-total}:

\begin{align}
&wT'(1+\delta)c\log(n)\left[\frac{3}{2}n^{\frac{4}{3}+\frac{2}{3}c} + 5n^c \right] + wT'n(\frac{n^2+5n}{2}) P\bigg(N(G) \geq (1+\delta)c\log(n)\bigg)\\
&\leq wT'(1+\delta)c\log(n)\left[\frac{3}{2}n^{\frac{4}{3}+\frac{2}{3}c} + 5n^c \right] + wT'n\left[ \frac{n^2+5n}{2}  \right] \frac{n}{n^{\frac{\delta^2}{2 + \delta}c}}.
\end{align}

For any $0< c < 1$, the first term in the above equation above is of order $n^{\frac{4}{3}+\frac{2}{3}c}\log(n)$ and remains less than $n^2$. 
For the second term, the exponent of $n$ is $4-\frac{\delta^2}{2 + \delta}c$, which we desire to be less than ${\frac{4}{3}+\frac{2}{3}c}$. For this purpose, $\delta$ should satisfy the following equation:
\[
4 - \frac{\delta^2}{2 + \delta}c \leq \frac{4}{3} + \frac{2c}{3},
\]
which holds when

\[
\delta \geq \frac{8 - 2c + \sqrt{-44c^2 + 160c + 64}}{6c}.
\]
Note that for \(0 < c < 1\), the expression under the square root, \(-44c^2 + 160c + 64\), is positive. This ensures that $\delta$ has real solutions. Consequently, the dominant term in the above equation will be $n^{\frac{4}{3}+\frac{2}{3}c}\log(n)$. Therefore, the complexity of our ranking rules will be $O(n^{\frac{4}{3}+\frac{2}{3}c}\log(n))$. 

\subsection{Auxiliary Lemma}
\begin{lemma}[Chernoff Bound, see \citep{mitzenmacher2017probability}]
Let $X = \sum_{i=1}^{n} X_i$ be the sum of $n$ independent Bernoulli random variables, where each $X_i$ takes value $1$ with probability $p$ and $0$ otherwise. Let $\mu = \mathbb{E}[X] = np$. Then, for any $\delta > 0$,
\[
\Pr(X \geq (1 + \delta)\mu) \leq \exp\left(-\frac{\delta^2 \mu}{2 + \delta}\right).
\]
\label{apdx:chernoff-lemma}
\end{lemma}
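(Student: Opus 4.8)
The plan is to prove this standard multiplicative Chernoff bound by the exponential-moment method, concluding with a short calculus argument that converts the classical bound into the cleaner stated exponent. First I would fix an arbitrary parameter $t>0$ and apply Markov's inequality to the nonnegative random variable $e^{tX}$. Since the event $\{X \geq (1+\delta)\mu\}$ coincides with $\{e^{tX} \geq e^{t(1+\delta)\mu}\}$, we obtain
\[
\Pr\bigl(X \geq (1+\delta)\mu\bigr) \leq \frac{\E[e^{tX}]}{e^{t(1+\delta)\mu}}.
\]
By independence of the $X_i$, the moment generating function factorizes as $\E[e^{tX}] = \prod_{i=1}^{n} \E[e^{tX_i}]$, and each factor equals $1-p+pe^{t} = 1 + p(e^{t}-1)$. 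Applying the elementary inequality $1+x \leq e^{x}$ with $x = p(e^{t}-1)$ yields $\E[e^{tX_i}] \leq \exp\bigl(p(e^{t}-1)\bigr)$, hence $\E[e^{tX}] \leq \exp\bigl(\mu(e^{t}-1)\bigr)$ using $\mu = np$.

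Next I would optimize over the free parameter. Combining the two estimates gives
\[
\Pr\bigl(X \geq (1+\delta)\mu\bigr) \leq \exp\Bigl(\mu\bigl(e^{t} - 1 - t(1+\delta)\bigr)\Bigr),
\]
and minimizing the exponent over $t>0$ yields the minimizer $t = \ln(1+\delta)$, which is positive because $\delta>0$. Substituting this value produces the classical form of the bound,
\[
\Pr\bigl(X \geq (1+\delta)\mu\bigr) \leq \left(\frac{e^{\delta}}{(1+\delta)^{1+\delta}}\right)^{\mu}.
\]

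The final step, which I expect to be the main obstacle, is to show that this classical bound is dominated by the stated expression, i.e. that $\left(\tfrac{e^{\delta}}{(1+\delta)^{1+\delta}}\right)^{\mu} \leq \exp\bigl(-\tfrac{\delta^{2}}{2+\delta}\mu\bigr)$. Taking logarithms and dividing by $\mu>0$, this reduces to the purely analytic inequality
\[
h(\delta) := (1+\delta)\ln(1+\delta) - \delta - \frac{\delta^{2}}{2+\delta} \geq 0, \qquad \delta \geq 0.
\]
I would establish this by differentiating twice: one checks $h(0)=0$, computes $h'(\delta) = \ln(1+\delta) - \frac{\delta(4+\delta)}{(2+\delta)^{2}}$ with $h'(0)=0$, and then $h''(\delta) = \frac{1}{1+\delta} - \frac{8}{(2+\delta)^{3}}$. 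The sign of $h''$ is controlled by $(2+\delta)^{3} - 8(1+\delta) = \delta(\delta^{2}+6\delta+4)$, which is nonnegative for all $\delta \geq 0$. Hence $h'' \geq 0$, so $h'$ is nondecreasing and therefore nonnegative from $h'(0)=0$, so $h$ is nondecreasing and therefore nonnegative from $h(0)=0$. This proves the analytic inequality and completes the argument. The only delicate point is verifying the sign of $h''$, which rests entirely on the cubic factorization above; every preceding step is the textbook Chernoff template.
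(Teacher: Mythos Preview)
Your proof is correct and follows the standard textbook derivation; the paper itself does not supply a proof but merely cites \cite{mitzenmacher2017probability}, whose argument is exactly the exponential-moment method plus the analytic simplification you carried out. There is nothing to compare beyond noting that your treatment matches the cited reference.
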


\begin{lemma}
\label{apdx:lem:unbalanced-q1j}
Under the setting outlined in Section \ref{apdx:cost-formulation-proof} and \ref{apdx:ssc:random-graph}, let  \( G(n, p) \) be a semi-Erdős–Rényi graph where \( p = \frac{c \log(n)}{n-1} \) and \( 0 < c < 1 \).  For each $\I_j$ where $g_n \notin \I_j$ and $|\I_j|=k$, we create a new set $\I_{j_i}$ by replacing $g_i \in \I_j$ with $g_n$ (see Figure \ref{apdx:fig:transformation}).  Let us denote this new set with $\I_{j_i}$. Then: 
\[\E[q_{1j_i}^{(k)}]\leq \E[q_{1j}^{(k)}].\]

\end{lemma}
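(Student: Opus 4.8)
The plan is to expand $\E[q^{(k)}_{1j}]$ and $\E[q^{(k)}_{1j_i}]$ as expectations of sums of path probabilities in the MDP $\M$ and to compare them through the map that replaces the occurrence of $g_n$ by $g_i$ (the replacement depicted in Figure~\ref{apdx:fig:transformation}). Fix $Q:=\I_j\setminus\{g_i\}=\I_{j_i}\setminus\{g_n\}$, a $(k-1)$-element set with $g_i,g_n\notin Q$. A length-$k$ path from $\{\}$ to a size-$k$ set $\I$ is an ordering $(v_1,\dots,v_k)$ of $\I$ in which each $v_l$ is controllable at the prefix $B_{l-1}:=\{v_1,\dots,v_{l-1}\}$; for a fixed graph its probability is $W(v_1,\dots,v_k)=\prod_{l=1}^{k}\mathbbm{1}\{v_l\in\CS(B_{l-1})\}/|\CS(B_{l-1})|$, where—under Assumption~\ref{ass:no-HRC-error} and in the all-OR regime relevant to the lower bound—$\CS(B)=\{g\notin B:\ g\text{ is a root or has a parent in }B\}$ depends only on $B$ and the graph, not on the target. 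Every ordering $\omega'$ of $\I_{j_i}$ puts $g_n$ in some position $m$; replacing that entry by $g_i$ yields an ordering $\omega$ of $\I_j$, and $\omega'\mapsto\omega$ is an injection. Since $q^{(k)}_{1j}=\sum_{\omega}W(\omega)$, the sum ranging over \emph{all} orderings of $\I_j$, while $q^{(k)}_{1j_i}=\sum_{\omega'}W(\omega')$, the lemma follows once we show $\E[W(\omega)]\ge\E[W(\omega')]$ for every matched pair.

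For a matched pair the first $m-1$ factors coincide, so $W(\omega)=A\cdot T_\omega$ and $W(\omega')=A\cdot T_{\omega'}$ with a common factor $A$; past position $m$ the only change is that $g_n$ appearing inside the prefixes of $\omega'$ is replaced by $g_i$ inside those of $\omega$. Since $g_n$ is the last node and hence has no outgoing edges, (i)~for each $l>m$ the numerator $\mathbbm{1}\{v_l\in\CS(\cdot)\}$ cannot decrease under the swap (adding $g_i$ to a prefix may create parents, adding $g_n$ never does), and (ii)~writing $R$ for the part of the prefix common to both, the denominator changes from $|\CS(R)|-\mathbbm{1}\{g_n\in\CS(R)\}$ to $|\CS(R)|-\mathbbm{1}\{g_i\in\CS(R)\}+\Delta$, where $\Delta\ge 0$ is the number of nodes made controllable by $g_i$ but not by $R$ alone; the position-$m$ factors have a common denominator and differ only through $\mathbbm{1}\{g_n\in\CS(B)\}$ versus $\mathbbm{1}\{g_i\in\CS(B)\}$ for a prefix $B\subseteq Q$.

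Taking expectations over the random graph, the inequality reduces to showing that these $g_i$-versus-$g_n$ indicator discrepancies, together with the nonnegative $\Delta$-corrections, are on balance in favour of $g_i$. The structure of the model is what drives this: $g_n$ has all of $\{g_1,\dots,g_{n-1}\}$ as potential parents, exactly $n-k$ of which lie outside $\I_j$, so the relevant controllability comparison is between $(1-p)^{a}\bigl(1-(1-p)^{b}\bigr)$ for $g_i$ and $(1-p)^{k-1}\bigl(1-(1-p)^{n-k}\bigr)$ for $g_n$, with $a$ (resp.\ $b$) the number of potential parents of $g_i$ inside (resp.\ outside) $Q$ and $b\le n-1-k$ forced by $|Q|=k-1$; moreover $p=\tfrac{c\log n}{n-1}$ is small, so each $\Delta$-type term is an $O(p)$ perturbation and the event that $g_n$ is a root (the only way $g_n$ can be permanently controllable) has probability $(1-p)^{n-1}=O(n^{-c})$. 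I expect the main obstacle to be precisely that $A$ and $\Delta$ are correlated with these indicators through the edges they share, so the expectation does not factor; the cleanest way around it is to condition on the edges pointing into $g_i$ and into $g_n$, observe that—given the other edges—$A$ and the probability of reaching $Q$ are monotone \emph{decreasing} in those edge indicators (such an edge only enlarges some $|\CS(\cdot)|$), and then estimate the resulting low-dimensional expectations, with the root event absorbed into the error.
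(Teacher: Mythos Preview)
Your path-matching setup—the bijection $\omega'\mapsto\omega$ that swaps the occurrence of $g_n$ for $g_i$, together with the observation that $g_n$ has no out-edges so placing it in a prefix can never enlarge $\CS(\cdot)$—is a reasonable alternative to the paper's route, and up to that point the outline is sound. The genuine gap is in how you propose to handle the correlation you yourself flag. You want to condition on the edges into $g_i$ and $g_n$ and then use that ``$A$ and the probability of reaching $Q$ are monotone decreasing in those edge indicators (such an edge only enlarges some $|\CS(\cdot)|$).'' That monotonicity is false under your own definition $\CS(B)=\{g\notin B:\ g\text{ is a root or has a parent in }B\}$: if $g_i$ has no incoming edge it is a root and lies in \emph{every} $\CS(B)$, and adding a single edge $(r,i)$ with $r\notin B$ turns it into a non-root with no parent in $B$, so $g_i$ leaves $\CS(B)$ and $|\CS(B)|$ drops. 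Toggling an edge into $g_i$ can therefore move the denominators in $A$ in either direction, the monotone coupling you need does not exist, and the closing ``estimate the resulting low-dimensional expectations, with the root event absorbed into the error'' is a hope rather than an argument.

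The paper never attempts the swap $g_i\mapsto g_n$ in one shot. It proves the inequality for a single \emph{adjacent} transposition $g_r\mapsto g_{r+1}$ and then chains such transpositions (carry the largest index of $\I_j$ up to $n$, shift the next one into the vacated slot, and so on) until $\I_{j_i}$ is obtained. Adjacency is what makes the comparison tractable: $g_r$ and $g_{r+1}$ have identical potential in- and out-neighbour sets \emph{except} for the single edge $(r,r{+}1)$. Conditioning on that one edge makes the two nodes exchangeable when it is absent, and when it is present a short four-case analysis (based on whether $g_r$ and $g_{r+1}$ receive an edge from $\I_j\setminus\{g_r\}$) finishes the job. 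Your direct swap entangles all of the edges $(i,i{+}1),\dots,(n{-}1,n)$ at once, which is precisely why the correlations resist the monotonicity shortcut; decomposing the move into adjacent transpositions is the missing idea.
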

\begin{proof} 
We define $\I_{j^{(i)}}$ as a new set by replacing $g_i\in \I_j$ with the node $g_{i+1}$. If $g_{i+1} \in \I_j$, $\E[q_{1j^{(i)}}^{(k)}]= \E[q_{1j}^{(k)}]$. Otherwise, we prove that 

\begin{equation}
    \E[q_{1j^{(i)}}^{(k)}]\leq \E[q_{1j}^{(k)}].
    \label{eq:singleswap}
\end{equation}

Let \( e \) denote the event that the edge \( (i, i+1) \) exists in the graph (i.e., there is an edge from \( g_{i} \) to \( g_{i+1} \)), which occurs with probability \( p \), and let \( \neg e \) denote the event that it does not exist, which occurs with probability \( 1 - p \). Then, we can write:
\[
\E\left[q_{1j^{(i)}}^{(k)}\right] = (1 - p) \times \E\left[q_{1j^{(i)}}^{(k)} \mid \neg e\right]+ p \times \E\left[q_{1j^{(i)}}^{(k)} \mid e\right] .
\]

For the first term, under the event \( \neg e \), from the perspective of other nodes, \( g_{i} \) and \( g_{i+1} \) are the same. Therefore:
\begin{equation}
\E\left[q_{1j^{(i)}}^{(k)} \mid \neg e\right] = \E\left[q_{1j}^{(k)} \mid \neg e\right].
\label{apdx:eq:equalq1js}
\end{equation}

For the second term, under the event \( e \), we consider additional events. Let \( e_{i} \) be the event that there is at least one edge from the set \( \I_j \setminus \{g_{i}\} \) to \( g_{i} \), and \( e_{i+1} \) be the event that there is at least one edge from the set \( \I_j \setminus \{g_{i}\} \) to \( g_{i+1} \). Moreover, consider that $k'=|\{g_r | g_r \in \I_j \text{ and } r<i\}|$. 

We can partition $\E\left[q_{1j^{(i)}}^{(k)} \mid e\right]$ into four cases based on \( e_{i} \) and \( e_{i+1} \):

Using the law of total expectation, we rewrite \(\E\left[q_{1j^{(i)}}^{(k)} \mid e\right]\) as follows:

\begin{align}
\E\left[q_{1j^{(i)}}^{(k)} \mid e\right] &= P(\neg e_{i+1}, e_{i}) \times \E\left[q_{1j^{(i)}}^{(k)} \mid e, \neg e_{i+1}, e_{i}\right]\nonumber\\
\quad &+ P(e_{i+1}, \neg e_{i}) \times \E\left[q_{1j^{(i)}}^{(k)} \mid e, e_{i+1}, \neg e_{i}\right]\nonumber\\
\quad &+ P(\neg e_{i+1}, \neg e_{i}) \times \E\left[q_{1j^{(i)}}^{(k)} \mid e,  \neg e_{i+1}, \neg e_{i}\right]\nonumber\\ 
\quad &+ P(e_{i+1}, e_{i}) \times \E\left[q_{1j^{(i)}}^{(k)} \mid e, e_{i+1}, e_{i}\right]
 \label{eq:expanded-q1n}
\end{align}
For each of the above cases, we show that the expectation is less than or equal to the corresponding expectation for  \( \I_j \).

Case 1: \( e_{i} \) occurs and \( e_{i+1} \) does not occur. In this case, there is at least one edge from \( \I_j \setminus \{g_{i}\} \) to \( g_{i} \), but there are no edges from \( \I_j \setminus \{g_{i}\} \) to \( g_{i+1} \). In this case, \( P(\neg e_{i+1}, e_{i}) = \left[1 - (1 - p)^{k'}\right] \cdot (1 - p)^{k'} \).
To add the node \( g_{i+1} \) to the intervention set, \( g_{i} \) must first be added. Hence,
$\E\left[q_{1j^{(i)}}^{(k)} \mid e, \neg e_{i+1}, e_{i}\right]=0$. Therefore,
\[
\E\left[q_{1j^{(i)}}^{(k)} \mid e, \neg e_{i+1}, e_{i}\right]=0 \leq \E\left[q_{1j}^{(k)} \mid e, \neg e_{i+1}, e_{i}\right].
\]

Case 2: \( e_{i} \) does not occur and \( e_{i+1} \) occurs. Here, there is at least one edge from \( \I_j \setminus \{g_{i}\} \) to \( g_{i+1} \), but no edges from \( \I_j \setminus \{g_{i}\} \) to \( g_{i} \). In this case, \( P(e_{i+1}, \neg e_{i}) = (1 - p)^{k'} \cdot \left[1 - (1 - p)^{k'}\right] \).
If we merge nodes $g_i$ and $g_{i+1}$ into a dummy node, from the perspective of the other nodes, the dummy node in cases 1 and 2 can be treated similarly except that in case 1, paths in $\M$ that go to \( g_{i+1} \) must include \( g_{i} \), (thus $\E[q_{1j^{(i)}}^{(k)} \mid e, \neg e_{i+1}, e_{i}]=0$) while in case 2, the opposite does not hold for \( g_{i} \) and $\E[q_{1j}^{(k)} \mid e,  e_{i+1}, \neg e_{i}]\neq 0$. Therefore, the expectation of visiting the set $\I_j$ in case 1 is bigger than the expectation of visiting $\I_{j^{(i)}}$ in case 2.
\[
\E\left[q_{1j^{(i)}}^{(k)} \mid e, e_{i+1}, \neg e_{i}\right] \leq \E\left[q_{1j}^{(k)} \mid e, \neg e_{i+1}, e_{i}\right].
\]
Note that this comparison is possible because \( P(\neg e_{i+1}, e_{i}) = P(e_{i+1}, \neg e_{i}) \).

Case 3: Neither \( e_{i} \) nor \( e_{i+1} \) occur. In this case, there are no edges from \( \I_j \setminus \{g_{i}\} \) to either \( g_{i} \) or \( g_{i+1} \). In this case, \( P(\neg e_{i+1}, \neg e_{i}) = (1 - p)^{2k'} \).

As in Case 1, to add \( g_{i+1} \) to the intervention set, \( g_{i} \) must first be added. Hence,
\[
\E\left[q_{1j^{(i)}}^{(k)} \mid e, \neg e_{i+1}, \neg e_{i}\right] = 0.
\]
Therefore,
\[
\E\left[q_{1j^{(i)}}^{(k)} \mid e, \neg e_{i+1}, \neg e_{i}\right]=0 \leq \E\left[q_{1j}^{(k)} \mid e, \neg e_{i+1}, \neg e_{i}\right].
\]

Case 4: Both \( e_{i} \) and \( e_{i+1} \) occur.
 Here, there is at least one edge from \( \I_j \setminus \{g_{i}\} \) to both \( g_{i} \) and \( g_{i+1} \). In this case, \( P(e_{i+1}, e_{i}) = \left[1 - (1 - p)^{k'}\right]^2 \) and both \( g_{i} \) and \( g_{i+1} \) have incoming edges from other nodes. Similar to case 2 we merge nodes $g_i$ and $g_{i+1}$ into a single dummy node. 

However, some paths in $\M$ with length $k$ that include \( g_{i+1} \), include \( g_{i} \) too, thus will not be counted in $\E[q_{1j^{(i)}}^{(k)} \mid e, e_{i+1}, e_{i}]$ ) while the paths leading to \( \I_j \) do not rely on \( g_{i+1} \). Hence,

\[
\E\left[q_{1j^{(i)}}^{(k)} \mid e, e_{i+1}, e_{i}\right] \leq \E\left[q_{1j}^{(k)} \mid e, e_{i+1}, e_{i}\right].
\]

If we use all the inequality achieved in all the above cases,

\begin{align*}
\E\left[q_{1j^{(i)}}^{(k)} \mid e\right] &= \underbrace{P(\neg e_{i+1}, e_{i}) \times 0}_{\text{case 1}}
+ \underbrace{P(e_{i+1}, \neg e_{i}) \times \E\left[q_{1j^{(i)}}^{(k)} \mid e, e_{i+1}, \neg e_{i}\right]}_{\text{case 2}}\\
 &+ \underbrace{P(\neg e_{i+1}, \neg e_{i}) \times 0}_{\text{case 3}}+ \underbrace{P(e_{i+1}, e_{i}) \times \E\left[q_{1j^{(i)}}^{(k)} \mid e, e_{i+1}, e_{i}\right]}_{\text{case 4}}\\
 &\leq \underbrace{P(\neg e_{i+1}, e_{i}) \times \E\left[q_{1j}^{(k)} \mid e, \neg e_{i+1}, e_{i}\right]}_{\text{case 1 and 2}}
+ P(e_{i+1}, \neg e_{i}) \times \E\left[q_{1j}^{(k)} \mid e, e_{i+1}, \neg e_{i}\right]\\
 &+ \underbrace{P(\neg e_{i+1}, \neg e_{i}) \times \E\left[q_{1j}^{(k)} \mid e, \neg e_{i+1}, \neg e_{i}\right]}_{\text{case 3}}+ \underbrace{P(e_{i+1}, e_{i}) \times \E\left[q_{1j}^{(k)} \mid e, e_{i+1}, e_{i}\right]}_{\text{case 4}}=\E\left[q_{1j}^{(k)} \mid e\right].\\
\end{align*}

Hence, the expectations in \eqref{eq:expanded-q1n} are less than or equal to the corresponding expectations for \( \I_j \). This, along with the \eqref{apdx:eq:equalq1js}, results in:

\[
\E\left[q_{1j^{(i)}}^{(k)} \right] \leq \E\left[q_{1j}^{(k)} \right].
\]

To construct the set $\I_{j_i}$, we perform a sequence of successive single swaps. We begin with the set $\I_{j^{(r)}}$, where $r = \max\{z : g_z \in \I_j\}$ represents the maximum index among nodes in set $\I_j$. The construction process is as follows:
\begin{enumerate}
    \item First, we replace node $g_r$ with $g_{r+1}$ and construct $\I_{j^{(r)}}$.
    \item Then replace $g_{r+1}$ in $\I_{j^{(r)}}$ by $g_{r+2}$.
    \item Continue this process until $g_n$ is in the set. 
    
\end{enumerate}

At this point, we have constructed $\I_{j_r}$. If $i=r$, the process is complete. Otherwise, we continue a similar replacement process starting from set $\I_{j_r}$. In $\I_{j_r}$, we replace the node $g_{r'}$ where $r'$ is the second maximum index among nodes in set $\I_j$ ($r' = \max\{z : g_z \in \I_j \setminus \{g_r\}\}$) with its successive node $g_{r'+1}$ until we reach the node $r$. If $i=r'$ the process is complete, otherwise we repeat the process similarly.

Through this sequence of single node replacements, we can successfully construct the desired set $\I_{j_i}$. 
Since each replacement operation results in a set with lower expectation, when we construct $\I_{j_i}$, the following inequality holds:

$
\E\left[q_{1j_{i}}^{(k)} \mid e\right] \leq \E\left[q_{1j}^{(k)} \mid e\right]
$
\end{proof}

\section{Ranking Rules}

\subsection{Graph search algorithm}
\label{apdx:sec:graph-search}
In the following, we describe the graph search algorithm (Algorithm \ref{alg:HRC-graph-search}) and also refer to the corresponding parts in HRC algorithm. The graph search algorithm initiates with empty sets \(\CS\) and \(\IS\), along with $root\_subgoals$, which are the subgoals with no parents in the subgoal structure $\graph$. 
Sets $\IS$ and $\CS$ represent intervention and controllable sets, respectively. 
In line 1 (corresponding to line 2 of Algorithm \ref{alg:HRC}), we add the $root\_subgoals$ to the controllable set $\CS$.
At each iteration, a subgoal $\xsel$, is selected from \(\CS\) (corresponding to line 4 of Algorithm \ref{alg:HRC}). Then, in line 4, $\xsel$ will be added to $\IS$ and removed from $\CS$ (corresponding to line 5 of Algorithm \ref{alg:HRC}). Once the children of the selected subgoal $\xsel$ are determined using the estimated subgoal structure $\hat{\graph}$ (corresponding to lines 5 to 8 of Algorithm \ref{alg:HRC}), set \(\CS\) is updated in line 7 to include these reachable subgoals (corresponding to lines 9 to 11 of Algorithm \ref{alg:HRC}).

\begin{algorithm}[htp!]
\caption{Graph Search Algorithm}
\begin{algorithmic}[1]
\label{alg:HRC-graph-search}
\REQUIRE  Final subgoal $g_n$, empty set $\IS=\{\}$ and $\CS=\{\}$, $root\_subgoals$.

\STATE $\CS \gets root\_subgoals$
\WHILE{$g_n \notin \IS$ }
    \STATE Select a subgoal $\xsel$ from $\CS$
    \STATE Remove $\xsel$ from $\CS$ and add it to $\IS$
    \FORALL{subgoals $g' \in (\CH{\xsel }^{\hat{\graph}}\setminus(\IS \cup \CS))$} \STATE\COMMENT{reachable subgoals become controllable}
            \STATE Add $g'$ to list $\CS$
    \ENDFOR
\ENDWHILE
\STATE \textbf{return} $2*length(\IS)+length(\CS)$
\end{algorithmic}
\end{algorithm}

We define the cost of the graph search algorithm to be proportional to the training cost of Algorithm \ref{alg:HRC}. In Section \ref{apdx:cost-formulation-proof}, equation \eqref{apdx:eq:after-systemprobes-equal} shows the expected transition cost if $\E[w_{\text{int},g'}^{\tau_j}]=\E[w_{\text{exp}}^{\tau_j}]=\E[w_{\text{train},g''}^{\tau_j}]=w$. 
In this equation, if we assume $|\I|=0$ (i.e., there is no cost to intervene on subgoals in $\I$), $T=T'$, and $U={2wT}$, the equation will be as follows:
\begin{equation}
U + {\sum_{g'' \in N(\I \cup \{\xsel\})} U}.
\end{equation}

If we consider $U$ as a unit of cost, during each transition, the above equation counts 1 unit for Intervention Sampling and $N(\I \cup \xsel\})$ units for Subgoal Training steps. In other words, this shows the number of additions to the intervention set and controllable set, during the run of the algorithm \ref{alg:HRC}, hence we
define the cost of running the graph search algorithm after the termination as follows:
\begin{equation}
    c = 2 \times |\IS| + |\CS|,
    \label{eq:heuristic-cost}
\end{equation}
The cost is calculated as the total number of additions to sets \(\IS\) and \(\CS\), during executing the graph search algorithm, which is equal to twice the number of nodes in the intervention set $\IS$ plus the number of nodes in the controllable set \(\CS\), at the time of termination.
To guarantee the validity of our ranking rules in sections \ref{sc:subgoal_discovery}, we assume that Assumption \ref{ass:no-HRC-error} holds and that there are no errors in estimating $ECE_{t^*}^{\Delta}$ (in Section \ref{sc:subgoal_discovery}) and $\funcform{h}$ is perfect heuristic according to \citep{pearl1984heuristics} (in Section \ref{sc:subgoal_discovery}). It is important to note that while these assumptions are necessary to provide formal guarantees, the ranking rules can still function without them.



\subsection{A* Search for Weighted Shortest Path}
\label{apdx:A*-search}
The A* search algorithm finds the shortest path from a start node to a goal node in a weighted graph. The description of A* search is as follows:

\begin{enumerate}
    \item Initialize two node sets: a closed set (already evaluated nodes) and an open set (yet to be evaluated, starting with the initial node). We define a function \(\funcform{parent\_pointer}: \setG \to \setG\), which assigns the corresponding parent of a node $g_i$.
    Nodes are evaluated based on three key functions:
    \begin{itemize}
        \item \( \funcform{g}(g_i) \): the cost from the start node to node \( g_i \),
        \item \( \funcform{h}(g_i) \): an admissible heuristic estimating the cost to the goal from \( g_i \),
        \item \( \funcform{f}(g_i) = \funcform{g}(g_i) + \funcform{h}(g_i) \): the total estimated cost through node \( g_i \).
    \end{itemize}
    \item While the open set is not empty:
    \begin{itemize}

        \item Select node \( g_i \) with the lowest \( \funcform{f}(g_i) \) from the open set.
        \item If \( g_i \) is the goal, backtrack to construct the path.
        \item Move \( g_i \) to the closed set and evaluate its neighbors:
        \begin{itemize}
            \item For each neighbor \( g_m \) of \( g_i \), if \( g_m \) is not in the closed set, add it to the open set if not already present.
            \item Calculate \( \text{tentative } \funcform{g}(g_m) = \funcform{g}(g_i) + \text{weight}(g_i, g_m) \).
            \item If \( \text{tentative } \funcform{g}(g_m) \) improves upon the known \( \funcform{g}(g_m) \), update \( \funcform{g}(g_m) \), set the $\funcform{parent\_pointer}(g_m)=g_i$, and recalculate \( \funcform{f}(g_m) \).
        \end{itemize}
    \end{itemize}
    \item If no path to the goal is found and the open set is empty, conclude no path exists.
\end{enumerate}

\subsection{Backtrack function}
We define the function $\funcform{back\_track}(g_i)$ that backtracks through the parent pointers of the node \( g_i \) and returns the set of ancestors of \( g_i \) during the backtracking process:

\[
\funcform{back\_track}(g_i) =
\begin{cases}
\{ g_i \} & \text{if } \funcform{parent\_pointer}(g_i) = \varnothing, \\
\funcform{back\_track}(\funcform{parent\_pointer}(g_i)) \cup \{ g_i \} & \text{otherwise}.
\end{cases}
\]

\subsection{Admissible Heuristic}
\label{apdx:admissible-heuristic}
A heuristic is admissible if, for every node \( g_i \), the heuristic estimate \( \funcform{h}(g_i) \) is less than or equal to the minimum cost from \( g_i \) to the goal. Mathematically, this is expressed as:
  \[
  \funcform{h}(g_i) \leq \funcform{h}^*(g_i),
  \]
  where \( \funcform{h}^*(g_i) \) is the actual minimum cost from \( g_i \) to the goal.

\subsection{Consistent Heuristic} 
This is a stronger condition than admissibility. A heuristic is consistent if, for every node \( g_i \) and each of its children \( g_i' \), the following condition holds:
  \[
  \funcform{h}(i) \leq w(g_i, g_i') + \funcform{h}(g_i'),
  \]
  where \( w(g_i, g_i') \) is the weight from \( g_i \) to \( g_i' \).
\subsection{Conditions for Optimality Guarantee}
If a heuristic is admissible (and we allow revisiting closed nodes) or consistent, A* search is guaranteed to find the optimal path to the goal. 
Using an admissible heuristic ensures that A* prioritizes paths that are potentially closer to the goal, and explores them first. If the heuristic is also consistent, it preserves the order of the nodes as they are expanded, and ensures that the first time a node $i$ is expanded (moved to closed set), it is guaranteed to have the shortest possible cost $\funcform{g}(i)$.

\subsection{Shortest Path Ranking Rule}
\label{apdx:ssc:heuristic2}

After the Causal Discovery step, we define the set \( E \)  of the children of $\xselt$  in $\hat{\graph}$ as follows:
    \[
     E = \left\{ g_i \mid g_i \notin \IS_t, \  \ g_i \in \CH{\xselt}^{\hat{\graph}_t} \right\}.
    \]
    Then, for each \( g_i \in E \), we update the cost of getting from the start node (root subgoal) to it, i.e., \( \funcform{g}(g_i) \), as follows: 
    \[
    \funcform{g}(g_i) \leftarrow \min \left( \funcform{g}(g_i), \funcform{g}(\xselt) + \left| \CH{\xselt}^{\hat{\graph}_t} \setminus \funcform{back\_track}(\xselt) \right| + 1\right),
    \] 
    where $\funcform{back\_track}(\xselt)$ is the set of subgoals on the path from root subgoals to $\xselt$ (see the exact definition in Appendix \ref{apdx:A*-search}). 
    In the above equation, 
    the term $|\CH{\xselt}^{\hat{\graph}_t} \setminus \funcform{back\_track}(\xselt) |+1$, shows the expansion cost of the node $g_j$ in the context of A$^*$ algorithm.  Specifically, the term $|\CH{\xselt}^{\hat{\graph}_t}\setminus \funcform{back\_track}(\xselt)|$ represents the number of new subgoals needed to be trained in Subgoal Training step and $1$ denotes the number of subgoals added to the intervention set (i.e., intervening on it in Intervention Sampling).
    


To update \(\funcform{h}(g_i)\), for each \( g_i \in E \),  we propose a dynamic version of Dijkstra's algorithm that computes the weighted shortest path from \( g_i \) to \( g_n \) within \( \hat{\graph}_{t} \). In our approach, the edge weights in \( \hat{\graph}_{t} \) are determined during the algorithm's execution. For each \( g_i \in E \), perform the following:

\begin{enumerate}
    \item \textbf{Initialization:} Set the tentative distance \( \funcform{dist}(g_i) = 0 \) and \( \funcform{dist}(g) = \infty \) (the distance of $g_i$ to $g$) for all \( g \in E \setminus \{g_i\} \). We define a function \(\funcform{parent\_pointer}: \setG \to \setG\), which assigns the corresponding parent of a node $g_i$.
    \item \textbf{Priority Queue:} Insert all nodes into a priority queue \( Q \) based on their tentative distances.
    \item \textbf{Algorithm Execution:}
    \begin{itemize}
        \item While \( Q \) is not empty:
        \begin{itemize}
            \item Extract the node \( g_u \) with the smallest \( \funcform{dist}(g_u) \) from \( Q \).
            \item For each neighbor \( g_v \) of \( g_u \) in \( \hat{\graph}_{t} \):
            \begin{itemize}
            \item Set $w(g_u, g_v) =| \CH{g_u}^{\hat{\graph}_t} \setminus \funcform{back\_track}(g_u) |+ 1.$
         
            \item If  $\funcform{dist}(g_v)$ > $\funcform{dist}(g_u)$ + $w(g_u, g_v)$, \text{ then:}
             update $\funcform{dist}(g_v)$ = $\funcform{dist}(g_u)$ + $w(g_u, g_v)$ and $\funcform{parent\_pointer}(g_v)=g_u$.
            \end{itemize}
        \end{itemize}
    \end{itemize}
    \item \textbf{Heuristic Update:} After computing the shortest path, set
    \[
    \funcform{h}(g_i) = \funcform{dist}(g_n).
    \]
\end{enumerate}

\subsection{Intuitive Examples}
\label{apdx:ssc:subgoalandor_examples}
In Figure \ref{fig:AND_type_example}, to achieve the final subgoal \(g_8\), all the green subgoals must be achieved and these are precisely the subgoals with a non-zero causal effect on  \(g_8\).
Similarly in Figure \ref{fig:OR_type_example}, we achieve the final subgoal $g_8$ if we only achieve the green subgoals during the execution of the algorithm. Under the conditions explained in Section \ref{apdx:sec:graph-search}, Causal Effect and Shortest Path ranking rules will yield the minimum training cost in scenarios of Figure \ref{fig:AND_type_example} and Figure \ref{fig:OR_type_example}, respectively.

\begin{figure}[htp]

    \centering
    \begin{subfigure}{.49\textwidth}
    \centering
    \begin{tikzpicture}[node distance=0.4cm and 0.4cm, scale=0.4, every node/.style={scale=0.4, font=\fontsize{18pt}{18pt}\selectfont}] 
    \tikzset{round/.style={circle, draw, inner sep=0pt, minimum width=1.2cm}}
    \tikzstyle{block} = [rectangle, draw, text width=1cm, text centered, minimum height=1cm, ]

    \node [block,fill=green] (A) {$g_1$};
    \node[below left=of A, block, fill=lightblue ] (B) {$g_2$};
    \node[below =of A, block, fill= green] (C) {$g_3$};
    \node[below right=of A, block, fill= green] (D) {$g_4$};
    \node[below left=of C, block, fill= lightblue] (H) {$g_5$};
    \node[below =of C, block, fill= green] (J) {$g_6$};
    \node[below =of D, block, fill= green] (K) {$g_7$};
    \node[below =of J, block, fill= green] (G) {$g_8$};
    
    \draw[->] (D) -- (K);
    \draw[->] (C) -- (J);
    \draw[->] (D) -- (J);
    \draw[->] (K) -- (G);
    \draw[->] (A) -- (B);
    \draw[->] (A) -- (C);
    \draw[->] (A) -- (D);
    \draw[->] (J) -- (G);
    \draw[->] (C) -- (H);

    \end{tikzpicture}
    \caption{AND subgoals}
    \label{fig:AND_type_example}
    \end{subfigure}
    \hfill
    \begin{subfigure}{.49\textwidth}
    \centering
    \begin{tikzpicture}[node distance=0.4cm and 0.4cm, scale=0.4, every node/.style={scale=0.4, font=\fontsize{18}{18}\selectfont}] 
    \tikzset{round/.style={circle, draw, inner sep=0pt, minimum width=1.2cm}}
    \tikzstyle{block} = [rectangle, draw, text width=1cm, text centered, minimum height=1cm, ]

    \node [round,fill=green] (A) {$g_1$};
    \node[below left=of A, round, fill=lightblue ] (B) {$g_2$};
    \node[below =of A, round, fill=lightblue] (C) {$g_3$};
    \node[below right=of A, round, fill= green] (D) {$g_4$};
    \node[below left=of C, round, ] (H) {$g_5$};
    \node[below =of C, round, ] (J) {$g_6$};
    \node[below =of D, round, fill= green] (K) {$g_7$};
    \node[below =of J, round, fill= green] (G) {$g_8$};

    \draw[->] (D) -- (K);
    \draw[->] (C) -- (J);
    \draw[->] (K) -- (G);
    \draw[->] (A) -- (B);
    \draw[->] (A) -- (C);
    \draw[->] (A) -- (D);
    \draw[->] (J) -- (G);
    \draw[->] (C) -- (H);

    \end{tikzpicture}
    \caption{OR subgoals}
    \label{fig:OR_type_example}
    \end{subfigure}
    \caption{
    The causal effect and shortest path ranking rules aim to include only the green nodes to the intervention set, as shown in Figures (a) and (b), respectively.
    }
    
\end{figure}
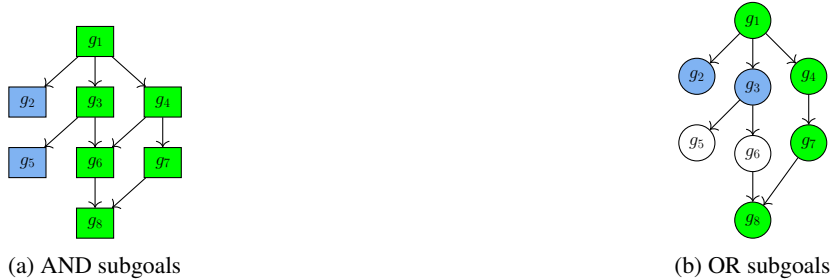

\subsection{Hybrid Ranking Rule}
\label{heuristic3}

In this rule, we integrate the two ranking rules described in Section \ref{sc:subgoal_discovery}. In Initialization step of Algorithm \ref{alg:HRC}, we initialize a list $selected\_subgoals$ that keeps the subgoal to be selected as $\xselt$ on top of it. At each iteration, if $selected\_subgoals$ is not empty we pop the top of $selected\_subgoals$ and choose the returned subgoal as $\xselt$, otherwise, we do the following steps before popping the top of $selected\_subgoals$:
    \begin{enumerate}
        \item[(i)] For each subset $S \subseteq \CS_{t-1}$, we calculate the $\widehat{ECE}_{t^*}^{\Delta}(S, \CS_{t-1}\setminus S, g_n)$. This measures the impact of $S$ on the final subgoal $g_n$. We keep subsets, where the corresponding $\widehat{ECE}_{t^*}^{\Delta}$ is not zero, and put them into a collection $\mathcal{S}$.
        \item[(ii)] Next, for every $ S\in \mathcal{S}$, we define the function \(F(S)\) as follows:
        \[
        F(S) = G(S) + H(S),
        \]
    where:
    \begin{itemize}
        \item \(G(S)\) represents the cumulative sum of the costs of all subgoals in \(S\) to be added to $\CS$. It is defined as follows:
        \[
          G(S) =  \sum_{g_j \in \PA{S}^{\hat{\graph}_{t-1}}} \funcform{g}(g_j)+\left| \CH{g_j}^{\hat{\graph}_{t-1}} \setminus \IS_{t-1} \right|+1,
        \]
where,
        \[
        \PA{S}^{\hat{\graph}_{t-1}} = \bigcup_{g_i \in S} \PA{g_i}^{\hat{\graph}_{t-1}}.
        \]
         \(\PA{S}^{\hat{\graph}_{t-1}}\) denotes the set of parents of the nodes in \(S\) in the graph \(\hat{\graph}_{t-1}\).

        \item \(H(S)\) is the estimated cost from \(S\) to the final subgoal \(g_n\). The heuristic function $H(S)$ is computed as the number of distinct nodes that appear on the paths from each $g_i \in S$ to the final subgoal \(g_n\), based on the adjacency matrix of $\hat{\graph}_{t-1}$, i.e., 
        \[
        H(S) = \sum_{g_j \in \setG} \mathbf{1}_{g_j \in 
         \text{path}_{\hat{\graph}_{t-1}}(S \rightarrow g_n)},
        \]
        where \({\text{path}_{\hat{\graph}_{t-1}}(S \rightarrow g_n)}\) represents all the paths from any subgoal \(g_i \in S \) to \(g_n\) including $g_i$ and $g_n$ based on $\hat{\graph}_{t-1}$.
    \end{itemize}
     We select the subset $S$ that minimizes $F(S)$. Then, we add all the subgoals in $S$ to the $selected\_subgoals$ in an arbitrary order.

    \end{enumerate}

 \begin{figure}[htp!]
    \centering
    \begin{tikzpicture}[node distance=0.4cm and 1.cm, scale=0.5, every node/.style={scale=0.5}] 
    \tikzset{round/.style={circle, draw, inner sep=0pt, minimum width=1.2cm}}
    \tikzstyle{block} = [rectangle, draw, text width=1cm, text centered, minimum height=1cm, ]
    \node [round,fill=green] (A) {$g_1$};
    \node[below left=of A, round, fill= lightblue] (B) {$g_2$};
    \node[below right=of A, round, fill= lightblue] (C) {$g_4$};
    \node[ right =of B, round, fill= lightblue] (D) {$g_3$};
    \node[ right =of C, round, fill= lightblue] (E) {$g_5$};
    \node[ below =of B, block] (F) {$g_6$};
    \node[ below=of D, round] (G) {$g_7$};
    \node[ below  =of C, block] (H) {$g_8$};
    \node[ below  =of G, round] (I) {$g_9$};
    \node[ below left  =of I, block] (J) {$g_{10}$};
    \node[ below right =of J, round] (K) {$g_{11}$};
    
    \draw[->] (A) -- (B);
    \draw[->] (A) -- (C);
    \draw[->] (A) -- (E);
    \draw[->] (A) -- (D);
    \draw[->] (B) -- (F);
    \draw[->] (D) -- (F);
    \draw[->] (D) -- (G);
    \draw[->] (E) -- (H);
    \draw[->] (C) -- (H);
    \draw[->] (F) -- (I);
    \draw[->] (F) -- (J);
    \draw[->] (I) -- (J);
    \draw[->] (J) -- (K);
    \draw[->] (H) -- (K);
    \end{tikzpicture}
    \caption{A stage of the algorithm where $g_1$ is in the intervention set and $g_2, g_3, g_4, g_5$ are controllable. Our Hybrid heuristic aims to select $g_4$ and $g_5$ as $\xsel$ for the next steps.}
    \label{fig:AND/OR_type_example}
\end{figure}
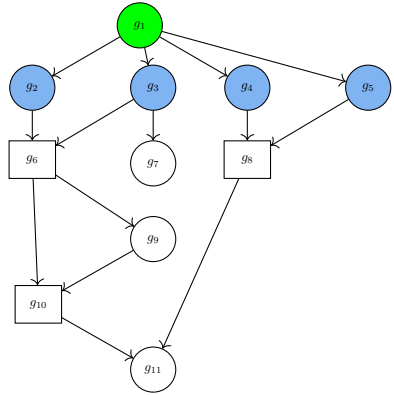
    
    For instance, in Figure \ref{fig:AND/OR_type_example}, consider that we are in line 4 of Algorithm \ref{alg:HRC} at iteration $t$ where $\IS_{t-1} = \{g_1\}$ and $\CS_{t-1} = \{g_2, g_3, g_4, g_5\}$ are the intervention set (represented by green nodes) and controllable set (represented by blue nodes) at the end of the iteration $t-1$, respectively. Suppose that $\widehat{ECE}_{t^*}^{\Delta}$ is accurate in this example, meaning that if $\widehat{ECE}_{t^*}^{\Delta}$ is zero, the corresponding $ECE_{t^*}^{\Delta}$ is also zero, and vice versa. Now apply the steps of Hybrid heuristic: (i) We compute the estimate of the expected causal effect ($\widehat{ECE}_{t^*}^{\Delta}$) for every subset of $\CS$.
    Subsets with non-zero $\widehat{ECE}_{t^*}^{\Delta}$ will be added to the collection $\mathcal{S}$. For instance, if we select subset $\{g_2, g_3\}$ from $\CS_{t-1}$, $\widehat{ECE}_{t^*}^{\Delta}(\{g_2, g_3\}, \{g_4, g_5\}, g_8)$ will be non-zero. Furthermore, if we select $\{g_3, g_4\}$, $\widehat{ECE}_{t^*}^{\Delta}(\{g_3, g_4\}, \{g_2, g_5\},g_8)$ will be zero. With further evaluations, we identify three candidate sets with non-zero $\widehat{ECE}_{t^*}^{\Delta}$s: $S_1=\{g_2, g_3\}$, $S_2=\{g_4, g_5\}$, and $S_3=\{g_2, g_3, g_4, g_5\}$, resulting in the collection $\mathcal{S}=\{S_1, S_2, S_3\}$. 
    (ii) In this simple example, $G(S_1)=G(S_2)=G(S_3)=4+1=5$. 
    Now, we compute the $H$ function for each of these sets: $H(S_1)=|\{g_2, g_3, g_6, g_9, g_{10}\}|=5$, $H(S_2)=|\{g_4, g_5, g_8\}|=3$ and $H(S_3)=|\{g_2, g_3, g_6, g_9, g_{10}, g_4, g_5, g_8\}|=8$. Therefore, $F(S_2)<F(S_1)$ and $F(S_2)<F(S_3)$, and we choose $S_2=\{g_4, g_5\}$ for the next selections of $\xsel$. Therefore, we add all the subgoals in $S_2$ to the $selected\_subgoals$.

\subsection{Key differences with CDHRL}
\label{apdx:sec:CDHRL-diff}
\label{rm:cdhrl-diff}
If HRC uses the random strategy in line 4 and applies the causal discovery method described in \citet{ke2019learning} in line 7, it becomes similar to CDHRL framework in \citet{hu2022causality}. CDHRL does not consider the concept of $\xsel$ and simply appends the entire controllable set to the intervention set ($\IS_t = \CS_{t-1} \cup \CS_{t-1}$). In contrast, we prioritize the controllable subgoals (explained in the next sections) to guide the exploration of the subgoal structure. Another difference is that CDHRL treats the action variable as a subgoal and initializes the intervention set with the action variable. In contrast, we make no such assumption and allow the intervention set to start by any root subgoal.
Moreover, we design a new causal discovery algorithm with theoretical guarantees while CDHRL applies the causal discovery in \citet{ke2019learning} where there is no guarantee on what it can learn.


\section{Causal Discovery}

\subsection{Proof of the proposition \ref{prop:identifiablity}}

The SCM that satisfies Assumption \ref{assump:const-subgoal} can be written as follows:

\[X^{t+1}_i = f_i(\Xl^t, \X^t, A^t, \epsilon^{t+1}_i)=
\begin{cases}
1, & \text{if } X^t_i=1, \\[2ex]
0 & \text{if } X^t_i=0 \text{ and } \theta_i(\X^t)=0, \\[2ex]
h_i(\Xl^t\backslash\X^t, A^t, \epsilon^{t+1}_i), & \text{if }  X^t_i=0 \text{ and } \theta_i(\X^t)=1,
\end{cases}  \qquad 1 \leq i \leq n,
\]

where,
\[
\theta_i(\X^t) = 
\begin{cases}
\displaystyle\bigwedge_{g_j \in \PA{g_i}} X^t_j, & \text{if } g_i \text{ is an AND subgoal}, \\[2ex]
\displaystyle\bigvee_{g_j \in \PA{g_i}} X^t_j, & \text{if } g_i \text{ is an OR subgoal},
\end{cases}
\]
$h_i$ is the causal mechanism of $X_i$ when $\theta_i(\X_t)=1$, and $\Xl^t\backslash\X^t$ represents the vector of all the variables in $\spaceXl \setminus \spaceX$ at time $t$.
Now, we show that we can identify the subgoal structure up to the discoverable parents. 

\textbf{OR Subgoal:}

Suppose $g_i$ is an OR subgoal, and $g_j \in \PA{g_i}$ is an \textit{undiscoverable} parent. If $g_j$ is not discoverable one of the following occurs:
\begin{enumerate}
    \item There is no valid assignment $\X \in \{0,1\}^n$ such that
    \[
    \forall\, g_k \in \PA{g_i}, \quad X_k = 0.
    \]
    This implies that in all possible assignments, at least one parent of $g_i$ is always 1. Therefore, the OR operation can be replaced by 1:
    \[
    \theta_i(\X^t) = \bigvee_{g_j \in \PA{g_i}} X^t_j = 1.
    \]
    Thus, the SCM simplifies to:
    \[
    X^{t+1}_i = f_i(\Xl^t, \X^t, A^t, \epsilon^{t+1}_i)=h_i(\Xl^t\backslash\X^t, A^t, \epsilon^{t+1}_i) \vee X_i^0.
    \]
    
    Now consider an alternative SCM where $g_i$ has no parents and is only dependent on initial state and causal mechanism $h_i$: 

    \[
    X^{t+1}_i = h_i(\Xl^t\backslash\X^t, A^t, \epsilon^{t+1}_i) \vee X_i^0.
    \]
    Therefore, the original SCM is the same as the alternative one, making them indistinguishable.

    \item If there is no valid assignment $\X \in \{0,1\}^n$ such that
    \[
    X_j = 1, \quad \forall\, g_k \in \PA{g_i} \setminus \{g_j\}, \quad X_k = 0.
    \]
    This means that in the collected data, if $X_j$ is one, at least of some other parent $X_k$ is also one. 
    
    Now consider an alternative SCM where $g_j$ is not a parent of $g_i$:
    \[
    \theta'_i(\X^t) = \bigvee_{g_k \in \PA{g_i} \setminus \{g_j\}} X^t_k.
    \]
    Both the original SCM and the alternative have the same structural assignments on $\Xl^t$ given the condition that $X_j$ and at least some other parent of $X_i$ are one at all time $t$. 
    Therefore, from the collected data, we cannot distinguish between these two models and $g_j$ is not discoverable.  
\end{enumerate}
\textbf{AND Subgoal:}

Suppose $g_i$ is an AND subgoal, and $g_j$ is an undiscoverable parent. This means that one of the following situations occurs:
\begin{enumerate}

    \item There is no valid assignment $\X \in \{0,1\}^n$ such that
    \[
    \forall\, g_k \in \PA{g_i}, \quad X_k = 1.
    \]
    Thus, the AND operator equals to 0:
    \[
    \theta_i(\X^t) = \bigwedge_{g_j \in \PA{g_i}} X^t_j = 0.
    \]
    The SCM simplifies to:
    \[
    X^{t+1}_i =X^{0}_i.
    \]
    Now consider an alternative SCM where $g_i$ is isolated and only dependent on its initial state, thus it has no parents and no dependency to $\Xl^t\backslash\X^t, A^t, \text{and }\epsilon^{t+1}_i$ through mechanism $h_i$ (i.e., $h_i(\Xl^t\backslash\X^t, A^t,\epsilon^{t+1}_i)=0$). Hence: 
    \[
    X^{t+1}_i =X^{0}_i.
    \]
    Again, both SCMs have the same structural assignment on $\Xl^t$ given the condition that the parents of $g_i$ are not achieved at all time $t$.
    
    \item There is no valid assignment where
    \[
    X_j = 0, \quad \forall\, g_k \in \PA{g_i} \setminus \{g_j\}, \quad X_k = 1.
    \]
    
    This means that if $X_j$ is zero, at least of some other parent $X_k$ is one.
    Now consider an alternative SCM where $g_j$ is not a parent of $g_i$:
    \[
    \theta'_i(\mathbf{X}^t) = \bigwedge_{g_k \in \PA{g_i} \setminus \{g_j\}} X^t_k.
    \]
    Both the original SCM and the alternative one have the same structural assignments on $\Xl^t$ given the condition that at least some parent (other than $g_j$) is one at all time $t$.

\end{enumerate}

This shows that when a parent is undiscoverable, alternative SCMs without that parent produce the same observable behavior, making them indistinguishable. This justifies the claim in Proposition \ref{prop:identifiablity} that the subgoal structure is identifiable up to discoverable parents.

\subsection{Example of Undiscoverable Parents}
\label{apdx:faithfullness-example}

In this section, we provide an example to illustrate how certain edges in the subgoal structure may not be detectable.




Consider the subgoal structure depicted in Figure \ref{apdx:fig:faithfulness-summary} corresponding to the time series with the one-step causal relationships shown in Figure \ref{apdx:fig:faithfulness-graph}. $X_1$ and $X_2$ are OR subgoals, and $X_3$ is an AND subgoal. Both $X_1$ and $X_2$ are parents of $X_3$, and additionally, $X_1$ is a parent of $X_2$. Subgoal $X_3$ cannot be achieved at some time step $t$ unless both $X_1$ and $X_2$ have been achieved at some prior time step $t' <t$. Moreover, to achieve $X_2$, $X_1$ must be achieved at an earlier time step.  Now, if Assumption \ref{assump:const-subgoal} holds, $X_1$ is always one when $X_2$ is equal to one. Therefore, we could not find two valid assignments in the observed time series satisfying the condition in Definition \ref{def:discoverable}.

In this simple example, the following conditional independence holds in distribution \( P \) over the time series data \( X_i^t \): 
\[
(X_3^{t+1} \perp X_1^t \mid X_2^t = 1).
\]
Therefore, constraint-based causal discovery methods cannot detect the edge from $X_1^t$ to $X_3^{t+1}$ in \ref{apdx:fig:faithfulness-graph}

\begin{figure}[htp!]
    \centering
        \begin{subfigure}{.35\textwidth}
        \centering
        \includegraphics[width=0.9\linewidth]{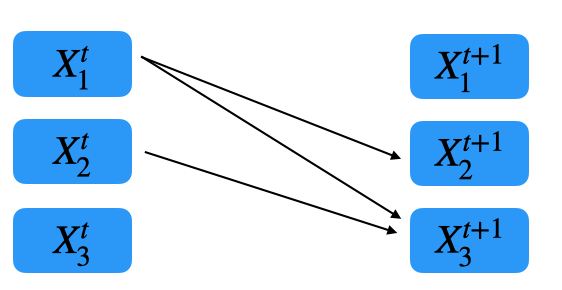}
        \caption{The one-step causal relationships in time series.}
        \label{apdx:fig:faithfulness-graph}
    \end{subfigure}
    \hfill
    \begin{subfigure}{.5\textwidth}
        \centering
        \includegraphics[width=0.35\linewidth]{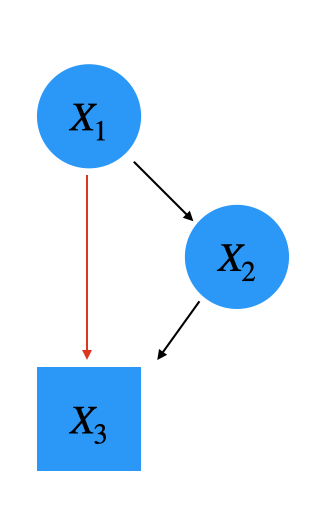}
        \caption{The subgoal structure.}
        \label{apdx:fig:faithfulness-summary}
    \end{subfigure}
    \caption{An example of undiscoverable parent ($X_1$). The subgoal structure of the time series shows the red edge cannot be detected.}

\end{figure}

\subsection{Proof of Theorem \ref{th:causal_graph} }

\begin{theorem}
\label{apdx:th:causal_graph}
\textbf{(Formal Statement of Theorem \ref{th:causal_graph})}

Consider an SCM where the value of $X^{t+1}_i$ is determined as a function of the variables in the system at time \(t\) and an error term \(\epsilon^{t+1}_i\):
\[
X^{t+1}_i =  \theta(\X^t) \oplus \epsilon^{t+1}_i,
\]
where \(g_i\) is either AND or an OR operation defined as
\begin{equation*}
\theta_i(\X^t)=\begin{cases}
\bigwedge_{X_j \in \PA{X_i}} X^t_j \quad \text{if AND operation},\\
\bigvee_{X_j \in \PA{X_i}} X^t_j \quad \text{if OR operation},   
\end{cases} 
\end{equation*}

and \(\oplus\) denotes the XOR operation. Moreover, the error term \(\epsilon^{t+1}_i\) has Bernoulli distribution with parameter $\rho <1/2$.

For a variable $X^{t+1}_i$ and $\Bet\in \mathbb{R}^n$, consider \(S\) as:
\begin{equation}
S(\X^t,\Bet) = \sum_{j} \beta_j X^t_j +\beta_0.
\end{equation}
Let   $\hat{X}^{t+1}_i=\mathbbm{1}\left\{S(\X^t,\Bet) > 0\right\}$ be an estimate of $X_i^{t+1}$. For any vector $\Bet$, consider the following loss function:
\begin{equation}
    \mathcal{L}(\Bet) = \E[(\hat{X}_i^{t+1} - X_i^{t+1})^2] + \lambda \| \Bet \|_0.
    \label{apdx:eq: loss}
\end{equation}

There exists a $\lambda >0$ such that for any optimal solution \(\Bet^*\) minimizing the loss function in \eqref{apdx:eq: loss}, the positive coefficients in \(\Bet^*\) correspond to the parents of $X^{t+1}_i$ in $\X^t$.

\end{theorem}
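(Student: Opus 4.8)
The plan is to reduce the problem to a Bayes‑risk computation and then to characterize the minimal‑$\ell_0$ linear‑threshold representation of $\theta_i$. First I would note that, since $\hat X_i^{t+1}$ and $X_i^{t+1}$ are both $\{0,1\}$‑valued, $(\hat X_i^{t+1}-X_i^{t+1})^2=\mathbbm{1}\{\hat X_i^{t+1}\neq X_i^{t+1}\}$, so the first term of $\mathcal{L}$ equals the misclassification probability $E(\Bet):=\mathbb{P}(\hat X_i^{t+1}\neq X_i^{t+1})$. Because $\hat X_i^{t+1}$ is a deterministic function of $\X^t$ while $X_i^{t+1}=\theta_i(\X^t)\oplus\epsilon_i^{t+1}$ with $\epsilon_i^{t+1}\sim\mathrm{Bernoulli}(\rho)$ independent of $\X^t$, conditioning on $\X^t$ gives $E(\Bet)=\rho+(1-2\rho)\,\mathbb{P}\big(\hat X_i^{t+1}(\X^t)\neq\theta_i(\X^t)\big)$. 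Since $\rho<1/2$, this is minimized (at value $\rho$) exactly when $\hat X_i^{t+1}=\theta_i$ almost surely, and any disagreement on a set of positive probability inflates $E(\Bet)$ by at least $(1-2\rho)p_{\min}$, where $p_{\min}$ is the smallest positive atom of the distribution of $\X^t$ (well defined and positive, since that distribution — e.g.\ the full‑support stationary distribution of the finite chain on $\{0,1\}^n$ induced by \eqref{eq:a-SCM} — charges every configuration).

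Next I would exhibit an explicit representer: set $\beta_j^\dagger=1$ for $g_j\in\PA{g_i}$, $\beta_j^\dagger=0$ otherwise, and $\beta_0^\dagger=-|\PA{g_i}|+\tfrac12$ for an AND subgoal (resp.\ $\beta_0^\dagger=-\tfrac12$ for an OR subgoal). One checks directly that $\mathbbm{1}\{S(\X^t,\Bet^\dagger)>0\}=\theta_i(\X^t)$ identically, so $\mathcal{L}(\Bet^\dagger)=\rho+\lambda\|\Bet^\dagger\|_0\le\rho+\lambda(n+1)$, and the positive coefficients of $\Bet^\dagger$ are precisely $\PA{g_i}$.

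The core step is to show that \emph{every} minimal‑$\ell_0$ linear‑threshold representer of $\theta_i$ has positive coefficients exactly on $\PA{g_i}$. I would argue in three parts. (i) Each $g_j\in\PA{g_i}$ is \emph{pivotal}: there is a pair of configurations differing only in coordinate $j$, both of positive probability, across which $\theta_i$ flips (for AND, fix all other parents to $1$; for OR, fix all other parents to $0$) — the analogue of the discoverability condition behind Proposition~\ref{prop:identifiablity}. Hence a representer $\Bet$ with $\beta_j=0$ would assign the same score, and thus the same $\hat X_i^{t+1}$, to both configurations, a contradiction; so $\beta_j\neq0$. (ii) Since $\theta_i$ is monotone nondecreasing in each parent while $\hat X_i^{t+1}$ is monotone in $X_j$ with the sign of $\beta_j$, the $0\to1$ flip of $\theta_i$ at the pivotal pair forces $\beta_j>0$; evaluating at the all‑zeros configuration (where $\theta_i=0$) forces $\beta_0\le0$. (iii) A short case check shows no non‑parent coordinate can occur in a minimal‑$\ell_0$ representer: for OR the intercept is unnecessary, so $m^*=|\PA{g_i}|$ and any extra coordinate would exceed $m^*$; for AND a nonzero (negative) intercept is provably unavoidable, and a spurious non‑parent coordinate cannot substitute for it, because its coefficient is forced $\le0$ by the all‑zeros configuration while every parent coefficient must be $>0$. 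Combining (i)–(iii), a minimal‑$\ell_0$ representer has variable‑support exactly $\PA{g_i}$ with strictly positive coefficients there; call the minimal value $m^*\ (\le n+1)$.

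Finally I would pick any $\lambda$ with $0<\lambda<(1-2\rho)p_{\min}/(n+1)$. Then any $\Bet$ disagreeing with $\theta_i$ on a positive‑probability set has $\mathcal{L}(\Bet)\ge\rho+(1-2\rho)p_{\min}>\rho+\lambda(n+1)\ge\mathcal{L}(\Bet^\dagger)$ and hence is not optimal; so every optimal $\Bet^*$ satisfies $\hat X_i^{t+1}=\theta_i$ a.s., whereupon $\mathcal{L}(\Bet^*)=\rho+\lambda\|\Bet^*\|_0$ forces $\|\Bet^*\|_0=m^*$, i.e.\ $\Bet^*$ is a minimal‑$\ell_0$ representer, and by the previous paragraph its positive coefficients are exactly $\PA{g_i}$. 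The main obstacle I anticipate is part (i)–(iii): making pivotality precise requires a positivity/regularity assumption on the data distribution (so that each parent is genuinely active, as in the discoverable‑parent discussion), and excluding spurious non‑parents from a minimal representer needs the AND/OR case analysis; once those are settled, the choice of $\lambda$ and the remaining steps are routine.
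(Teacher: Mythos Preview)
Your proposal is correct and reaches the same conclusion as the paper, but it is organized more transparently. The paper's proof posits the target conditions ($\beta_j^*=0$ for non-parents, $\beta_j^*>0$ for parents, plus sum constraints in the AND case) and then runs a three-way case analysis on how a candidate $\Bet$ might violate them, each time constructing an explicit improvement $\Bet^*(\Bet)$ and lower-bounding the loss drop by a term of the form $P(\tilde\X)(1-2\rho)-\lambda|Z_y|$. Your route instead cleanly decouples the problem: the Bayes-risk identity $E(\Bet)=\rho+(1-2\rho)\,\mathbb{P}(\hat X\neq\theta_i)$ shows that for $\lambda<(1-2\rho)p_{\min}/(n+1)$ every minimizer must be an \emph{exact} linear-threshold representer of $\theta_i$; then, among exact representers, a pivotality-plus-counting argument pins down the minimal-$\ell_0$ support as exactly $\PA{g_i}$ with positive weights. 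Both proofs rely on the same positivity hypothesis (the paper invokes it explicitly in its AND analysis) and arrive at essentially the same $\lambda$ threshold; your two-step reduction is shorter and avoids the intertwined case bookkeeping, while the paper's version makes the competing-$\Bet$ comparisons more explicit.

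Two small remarks. First, in the paper's statement $\Bet\in\mathbb{R}^n$, so the intercept $\beta_0$ is a free, unpenalized parameter not counted in $\|\Bet\|_0$; this makes your step~(iii) simpler than you wrote---once (i) gives $\|\Bet\|_0\ge|\PA{g_i}|$ for every exact representer and your $\Bet^\dagger$ achieves equality, no discussion of whether a non-parent can ``substitute'' for the intercept is needed. Second, your appeal to a full-support stationary law for the A-SCM chain is valid (the Bernoulli($\rho$) noise with $\rho\in(0,1/2)$ makes every transition positive), but note that neither the theorem nor the paper's proof specifies the distribution underlying the expectation in $\mathcal{L}$; both arguments tacitly assume positivity on the pivotal configurations.
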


\textit{Proof of Theorem \ref{apdx:th:causal_graph}}:
In the proof, for ease of representation, we denote $X_i^{t+1}$ by $y$ and drop superscript $t$ in $\X^t$ and $t+1$ in $\epsilon_i^{t+1}$. 
Also, we might use the notation of environment variable and its associated subgoal interchangeably. 
Moreover, for any given $\Bet$, we also denote $\hat{X}_i^{t+1}$  by $\hat{y}_{\Bet}$.

Let $\PA{y}$ be the parent set of \(y\). We define \(P_y = \{j | X_j \in \PA{y}\} \). 

\textbf{Proof for OR subgoal:}
We show that any optimal solution $\Bet^*$ should satisfy the following conditions:
\begin{enumerate}
    \item For $j \notin P_y$, $\beta^*_j = 0$,
    \item For $j \in P_y$, $\beta^*_j> 0 $.
\end{enumerate}



Let us rewrite the loss function in \eqref{apdx:eq: loss} for any \(\Bet\) satisfying the above conditions as follows:

\begin{align}
    \mathcal{L}(\Bet) &= \E[(y - \hat{y}_{\Bet})^2] + \lambda \| \Bet \|_0, \\
    &\overset{(a)}{=} \sum_{\X \in \{0, 1\}^{n}}P(\X)\E[(y - \hat{y}_{\Bet})^2|\X] + \lambda|P_y|\\
    &\overset{(b)}{=}  \sum_{\X \in \{0, 1\}^{n}}P(\X)\rho + \lambda|P_y|\\
    &=\rho + \lambda|P_y|.\label{eq:cost_optimal}
\end{align}   
\\$(a)$ We marginalize over the set of all possible $\X$s. $|P_y|$ denotes the cardinality of the set $P_y$.
\\$(b)$ $|y-\hat{y}_{\Bet}|$ is equal to one if $\epsilon_i=1$ which occurs with probability $\rho$. Otherwise, there is no error in prediction.


Now, we show that if any vector $\Bet$ fails to satisfy either of the aforementioned conditions, it will result in an increase in loss compared to \ref{eq:cost_optimal}. To prove this, we consider the following three cases: 1- $\Bet$ does not satisfy condition 1 but it satisfies condition 2. 2- $\Bet$ satisfies condition 1 but does not condition 2. 3- $\Bet$ does not satisfy both conditions 1 and 2.


Case 1: As \(\Bet\) does not satisfy condition 1, the non-parent variables might also have non-zero coefficients in $\Bet$. We form the vector $\Bet^*(\Bet)$ by setting zero to the entries of $\Bet$ corresponding to the non-parent variables. Moreover, we define $A_y$ as a set of indices of non-parent variables that have non-zero coefficients in $\Bet$, i.e.,
    \(A_y = \{j | j \notin P_y, \beta_j \neq 0\} \). 
    
Now we compute the difference in the losses for $\Bet$ and $\Bet^*(\Bet)$:
\begin{align}
    \mathcal{L}(\Bet)- \mathcal{L}(\Bet^*(\Bet)) &= {\bigg[\E[(y -  \hat{y}_{\Bet})^2] -\E[(y -  \hat{y}_{\Bet^*(\Bet)})^2]\bigg]} + {\lambda (\| \Bet \|_0-\| \Bet^*(\Bet) \|_0)}, \\
    &= {\big[\E[(y -  \hat{y}_{\Bet})^2] -\rho\big]} + {\lambda (\| \Bet \|_0-|P_y|)}, \\
    &\overset{(a)} \geq {\lambda (\| \Bet \|_0-|P_y|)},\\
    & \overset{(b)} = \lambda(|P_y| +|A_y| - |P_y|),\\
    & = \lambda|A_y| \overset{(c)} >0.
\end{align}
\\$(a)$ It is due to the fact that for any $\Bet\in \mathbb{R}^n$, $\E[(y - \hat{y}_{\Bet})^2]\geq \rho$.
\\$(b)$  As condition 2 holds $\|\Bet \|_0 = |P_y| +|A_y|$. 
\\$(c)$  We assume that $|A_y|>0$.

This indicates that \({\Bet}\) has a higher cost compared with \(\Bet^*(\Bet)\) and it cannot be an optimal solution.

Case 2: Now consider \(\Bet\) fails to satisfy condition 2 (but it satisfies condition 1). This implies that there exists \(j\) belonging to the parent node set \(P_y\) such that its corresponding coefficient is less than or equal to 0. In particular, we define the set \(M_y = \{j | j \in P_y, \beta_j \leq 0\} \). Please note that we also suppose that $|M_y|>0$. We form the vector $\Bet^*(\Bet)$ by setting the entries of $\Bet$ corresponding to the set $M_y$ to a value greater than zero. Moreover, we define $Z_y$, \(D_{M_y}\), and \(D_{Z_y}\) as follows:


\begin{itemize}
    \item \({Z}_y = \{j | j \in P_y, \beta_j = 0\} \) 
    \item $D_{M_y} = \left\{ \X \in \{0, 1\}^{n} \mid \exists j \in M_y,  X_j = 1  \text{ and } X_i = 0 \text{ for all } i \in P_y \text{ s.t. } i \neq j \right\}$,
\end{itemize}  
where \(D_{M_y}\) corresponds to the samples, where only one parent in $M_y$ is equal to one and other parents in the system are zero. Let us compare the loss function for $\Bet$ and $\Bet^*(\Bet)$:

 \begin{align}
    \mathcal{L}(\Bet)- \mathcal{L}(\Bet^*(\Bet)) &= {\bigg[\E[(y -  \hat{y}_{\Bet})^2] -\E[(y -  \hat{y}_{\Bet^*(\Bet)})^2]\bigg]} + {\lambda (\| \Bet \|_0-\| \Bet^*(\Bet) \|_0)}, \\
    &=\sum_{\X\in D_{M_y}}P(\X){\bigg[\E[(y -  \mathbbm{1}_{S(\X,\Bet)})^2|\X] -\E[(y -  \mathbbm{1}_{S(\X,\Bet^*(\Bet))})^2|\X]\bigg]}\\
    &+ \sum_{\X\notin D_{M_y}}P(\X){\bigg[\E[(y -  \mathbbm{1}_{S(\X,\Bet)})^2|\X] -\E[(y -  \mathbbm{1}_{S(\X,\Bet^*(\Bet))})^2|\X]\bigg]}\\ 
    &+{\lambda (\| \Bet \|_0-\| \Bet^*(\Bet) \|_0)}, \\
    &\overset{(a)}\geq \sum_{\X\in D_{M_y}}P(\X)(1-2\rho)+ \lambda (\| \Bet \|_0-\| \Bet^*(\Bet) \|_0)\\
    &\overset{(b)}=   \sum_{\X\in D_{M_y}}P(\X)(1-2\rho) - \lambda|Z_y| >0, 
\end{align}
\\$(a)$ For $\X \in D_{M_y}$, \(S(\X, \Bet) < S(\X, \Bet^*(\Bet))\) and $\hat{y}_{\Bet}$ is set to 0 while $\hat{y}_{\Bet^*(\Bet)}$ is set to 1 since we assigned values greater than zero for the coefficients in $\Bet^*(\Bet)$ corresponding to parent nodes. Thus, for any $\X\in D_{M_y}$, the model with coefficient vectors $\Bet$ and $\Bet^*(\Bet)$ predicts $y$ correctly with probability $1-\rho$ and $\rho$, respectively.
Consequently,  
\begin{align}
&\sum_{\X\in D_{M_y}}P(\X){\bigg[\E[(y -  \mathbbm{1}_{S(\X,\Bet)})^2|\X] -\E[(y -  \mathbbm{1}_{S(\X,\Bet^*(\Bet))})^2|\X]\bigg]}\\
&=\sum_{\X\in D_{M_y}}P(\X){\big[(1-\rho) - \rho\big]}\\
&=\sum_{\X\in D_{M_y}}P(\X){(1-2\rho)}.
\end{align}

Moreover, the term 
\[
\sum_{\X\notin D_{M_y}}P(\X){\bigg[\E[(y -  \mathbbm{1}_{S(\X,\Bet)})^2|\X] -\E[(y -  \mathbbm{1}_{S(\X,\Bet^*(\Bet))})^2|\X]\bigg]}\geq 0,
\]
since $\E[(y -  \mathbbm{1}_{S(\X,\Bet^*(\Bet))})^2|\X]=\rho$ and for any $\Bet\in \mathbb{R}^n$, $\E[(y -  \mathbbm{1}_{S(\X,\Bet)})^2|\X]\geq \rho$.\\
\\$(b)$ There are $|Z_y|$ more nonzero terms in $\Bet^*(\Bet)$ compared with $\Bet$. Moreover, $\sum_{\X\in D_{M_y}}P(\X)(1-2\rho)\geq|D_{M_y}| \times\min_{\X \in D_{M_y}}[P(\X)(1-2\rho)]\geq |Z_y| \times\min_{\X \in D_{M_y}}[P(\X)(1-2\rho)]$. Therefore, for
$\lambda< \min_{\X \in D_{M_y}}[P(\X)(1-2\rho)]$, the loss of $\Bet$ is greater than $\Bet^*(\Bet)$.


Case 3: In this case, \(\Bet\) fails to satisfy both conditions 1 and 2. This implies that some non-parent variables have non-zero coefficients in $\Bet$. Moreover, $\beta_0$ is not necessarily equal to zero, and for any \(j\) belonging to the parent node set \(P_y\), their corresponding coefficient can be less than or equal to 0. In particular, first, we form the vector $\Bet'(\Bet)$ by setting the entries of $\Bet$ corresponding to the set $M_y$ to a value bigger than zero, and second, we form $\Bet^*(\Bet)$ by setting zero the entries of $\Bet'(\Bet)$ corresponding to the non-parent variables. The reasoning about the loss change $\mathcal{L}(\Bet)- \mathcal{L}(\Bet'(\Bet))$ is the same as case 2, except that for $\X \notin D_{M_y}$, one can claim that having \(S(\X, \Bet) < S(\X, \Bet'(\Bet))\) might lead to a decrease in the $\E[(y - \hat{y}_{\Bet})^2|\X]$. 
This means that for a given \(\X\), $\hat{y}_{\Bet'(\Bet)}$ is wrongly set to 1 while $\hat{y}_{\Bet}$ is correctly set to zero resulting in a lower squared loss in $\E[(y - \hat{y}_{\Bet})^2|\X]$ compared to $\E[(y - \hat{y}_{\Bet'(\Bet)})^2|\X]$. Since the actual class of $y$ is zero, then all \(X_j \in \PA{y}\) must be equal to 0. Hence, changing the corresponding coefficients of the parents would not affect the value of \(S\). This implies that for this specific $\X$, \(S(\X, \Bet) = S(\X, \Bet'(\Bet))\), which contradicts the given assumption that \(S(\X, \Bet) < S(\X, \Bet'(\Bet))\).
Therefore,
\[
\sum_{\X\notin D_{M_y}}P(\X){\bigg[\E[(y -  \mathbbm{1}_{S(\X,\Bet)})^2|\X] -\E[(y -  \mathbbm{1}_{S(\X,\Bet'(\Bet))})^2|\X]\bigg]}\geq 0.
\]
The reasoning about the change in loss, $\mathcal{L}(\Bet'(\Bet))- \mathcal{L}(\Bet^*(\Bet))$, is the same as case 1. Therefore, the total loss change would be as follows:

\begin{align}
    \mathcal{L}(\Bet)- \mathcal{L}(\Bet^*(\Bet)) &= {\bigg[\E[(y -  \hat{y}_{\Bet})^2] -\E[(y -  \hat{y}_{\Bet^*(\Bet)})^2]\bigg]} + {\lambda (\| \Bet \|_0-\| \Bet^*(\Bet) \|_0)}, \\
    &=\sum_{\X\in D_{M_y}}P(\X){\bigg[\E[(y -  \mathbbm{1}_{S(\X,\Bet)})^2|\X] -\E[(y -  \mathbbm{1}_{S(\X,\Bet^*(\Bet))})^2|\X]\bigg]}\\
    &+ \sum_{\X\notin D_{M_y}}P(\X){\bigg[\E[(y -  \mathbbm{1}_{S(\X,\Bet)})^2|\X] -\E[(y -  \mathbbm{1}_{S(\X,\Bet^*(\Bet))})^2|\X]\bigg]}\\ 
    &+{\lambda (\| \Bet \|_0-\| \Bet^*(\Bet) \|_0)}, \\
    & \geq \sum_{\X\in D_{M_y}}P(\X)(1-2\rho) +\lambda(|P_y|-|Z_y|+|A_y|  - |P_y|),\\
    & = \sum_{\X\in D_{M_y}}P(\X)(1-2\rho) +\lambda(-|Z_y|+|A_y|).
\end{align}

Similar to case 2, we select \(\lambda < \min_{\X \in D_{M_y}}[P(\X)(1-2\rho)]\) to ensure that the calculated difference remains positive. It is important to note that the term \(P(\X)(1-2\rho)\) may not be applicable to all \(\X \) in \(D_{M_y}\). Specifically, for those \(\X\) where \(\sum_{j \notin P_y} \beta_j X_j\) is positive within the score \(S(\X, \Bet)\). However, the cost \(P(\X')(1-2\rho)\) applies to the sample \(\X'\), which is identical to \(\X\) except that the corresponding parent value is \(0\). Therefore, we can conclude that any optimal solution to the loss function given in \eqref{apdx:eq: loss} must satisfy both conditions 1 and 2, thereby completing the proof.

\textbf{Proof for AND subgoal:}
We show that any optimal solution $\Bet^*$ should satisfy the following conditions (we consider that $\beta_0<0$):
\begin{enumerate}
    
    \item For $j \notin P_y, \beta^*_j=0.$
    \item $\forall U \subsetneq P_y:$
    \label{eq:AND:case2}
    \begin{align}
    &\sum_{j \in U} \beta^*_j \leq -\beta_0 \label{eq:AND:condition2}\\
    &\sum_{j \in P_y} \beta^*_j>-\beta_0 \label{eq:AND:condition3},
    \end{align}

\end{enumerate}

The value of the loss function in \eqref{apdx:eq: loss} for any \(\Bet\) satisfying the above conditions is equal to \ref{eq:cost_optimal}. Now, we show that if some $\Bet$ fails to satisfy either of the aforementioned conditions, it will result in an increase in loss compared to \ref{eq:cost_optimal}. To prove this, we consider the following three cases: 1- $\Bet$ does not satisfy condition 1 but it satisfies condition 2. 2- $\Bet$ satisfies condition 1 but does not condition 2. 3- $\Bet$ does not satisfy both conditions 1 and 2.

1- This part is similar to OR subgoal, case 1.

2- Consider \(\Bet\) satisfies condition 1 but fails to satisfy condition 2. In this case, we first show that if there exists some \(j\) belonging to the parent node set \(P_y\) such that its corresponding coefficient is less than or equal to 0, at least one of the conditions in  \eqref{eq:AND:condition2} or \eqref{eq:AND:condition3} will be violated which in turn results in a higher loss.  To prove this, we define the set \(M_y = \{j | j \in P_y, \beta_j \leq 0\} \) and \(Z_y = \{j | j \in P_y, \beta_j = 0\} \) and form the vector $\Bet^*(\Bet)$ by setting the entries of $\Bet$ corresponding to the set $M_y$ to a value greater than zero such that both \eqref{eq:AND:condition2} and \eqref{eq:AND:condition3} are satisfied. Note that this operation is feasible since we can adjust the coefficients corresponding to non-parent nodes and parent nodes independently. Now we show that the  $\mathcal{L}(\Bet)$ is greater than or equal to $\mathcal{L}(\Bet^*(\Bet))$. 
We can define the following sample set:
 \[
D_{M_y} = \left\{ \X_{j,0} \mid  j \in M_y  \right\},
 \]
 where $ [\X_{j,0}]_i  = \begin{cases} 
1 & \text{if } i \in P_y \text{ and } i \neq j,\\ 
0 & \text{if }  i = j 
\end{cases}$. We also define the vector 
$\X_1 \text{ where } [\X_1]_i = 
1$ for  $i \in P_y$. 
We can compute the difference in the losses for $\Bet$ and $\Bet^*(\Bet)$ as follows:

     \begin{align}
        \mathcal{L}(\Bet)- \mathcal{L}(\Bet^*(\Bet)) &= {\bigg[\E[(y -  \hat{y}_{\Bet})^2] -\E[(y -  \hat{y}_{\Bet^*(\Bet)})^2]\bigg]} + {\lambda (\| \Bet \|_0-\| \Bet^*(\Bet) \|_0)}, \\
        &=\sum_{\X\in D_{M_y}\cup \{\X_1\}} P(\X){\bigg[\E[(y -  \mathbbm{1}_{S(\X,\Bet)})^2|\X] -\E[(y -  \mathbbm{1}_{S(\X,\Bet^*(\Bet))})^2|\X]\bigg]}\nonumber\\
        &+ \sum_{\X\notin D_{M_y}\cup \{\X_1\}}P(\X){\bigg[\E[(y -  \mathbbm{1}_{S(\X,\Bet)})^2|\X] -\E[(y -  \mathbbm{1}_{S(\X,\Bet^*(\Bet))})^2|\X]\bigg]}\nonumber\\ 
        &+{\lambda (\| \Bet \|_0-\| \Bet^*(\Bet) \|_0)}. \label{apdx:And:devision}
    \end{align}
 
Now, we derive a lower bound on \eqref{apdx:And:devision}. It is clear that $S(\X_{1},\Bet)\leq S(\X_{j_0},\Bet)$. We show that at least one of the terms corresponding to $\X_{j_0}$ or $\X_{1}$ results in an increase in the loss compared to the one of $\Bet^*(\Bet)$. In particular, if $S(\X_{j_0},\Bet)\leq0$, then $S(\X_{1},\Bet)\leq 0$ and as a result, $\hat{y}_{\Bet}$ is set to 0 which yields the expected loss of $1-\rho$ given observing the sample $\X_{1}$. Moreover, if $S(\X_{j_0},\Bet)>0$, then $\hat{y}_{\Bet}$ is set to 1 which also yields the expected loss of $1-\rho$ given the sample $\X_{j_0}$. Thus, we have an increase of either $P(\X_{j_0})(1-2\rho)$ or $P(\X_{1})(1-2\rho)$ in the loss compared with $\mathcal{L}(\Bet^*(\Bet))$. 
Therefore there exists a sample in $D_{M_y}\cup \{\X_1\}$ for which ${\bigg[\E[(y -  \mathbbm{1}_{S(\X,\Bet)})^2] -\E[(y -  \mathbbm{1}_{S(\X,\Bet^*(\Bet))})^2]\bigg]}$ is $(1-2\rho)$ and we denote it by $\Tilde{\X}$. 

Note that \[\sum_{\X\in D_{M_y}\cup \{\X_1\}} P(\X){\bigg[\E[(y -  \mathbbm{1}_{S(\X,\Bet)})^2] -\E[(y -  \mathbbm{1}_{S(\X,\Bet^*(\Bet))})^2]\bigg]}\geq 0,\] since $\E[(y -  \mathbbm{1}_{S(\X,\Bet^*(\Bet))})^2]=\rho$ and $\E[(y -  \mathbbm{1}_{S(\X,\Bet)})^2]$ cannot be less than $\rho$. Hence,  \eqref{apdx:And:devision} can be lower bounded as follows:

     \begin{align}
        \mathcal{L}(\Bet)- \mathcal{L}(\Bet^*(\Bet)) &= {\bigg[\E[(y -  \hat{y}_{\Bet})^2] -\E[(y -  \hat{y}_{\Bet^*(\Bet)})^2]\bigg]} + {\lambda (\| \Bet \|_0-\| \Bet^*(\Bet) \|_0)}, \\
        &=\sum_{\X\in D_{M_y}\cup \{\X_1\}} P(\X){\bigg[\E[(y -  \mathbbm{1}_{S(\X,\Bet)})^2|\X] -\E[(y -  \mathbbm{1}_{S(\X,\Bet^*(\Bet))})^2|\X]\bigg]}\\
        &+ \sum_{\X\notin D_{M_y}\cup \{\X_1\}}P(\X){\bigg[\E[(y -  \mathbbm{1}_{S(\X,\Bet)})^2|\X] -\E[(y -  \mathbbm{1}_{S(\X,\Bet^*(\Bet))})^2|\X]\bigg]}\\ 
        &+{\lambda (\| \Bet \|_0-\| \Bet^*(\Bet) \|_0)}, \\
        &\geq P(\Tilde{\X})(1-2\rho)+ \lambda (\| \Bet \|_0-\| \Bet^*(\Bet) \|_0)\\
        & \overset{(i)} =   P(\Tilde{\X})(1-2\rho) - \lambda|Z_y| >0, 
    \end{align}
where $(i)$ is due to the fact that there are $|Z_y|$ additional nonzero entries in $\Bet^*(\Bet)$ compared to $\Bet$. To ensure that the last inequality holds, we need to choose a value for $\lambda$ such that it is less than $\frac{1}{|Z_y|} P(\Tilde{\X})(1-2\rho)$.
We observe that: \[\min\left[\min_{j \in P_y}(P(\X_{j_0})(1-2\rho)),P(\X_{1})(1-2\rho)\right]\leq P(\Tilde{\X})(1-2\rho),\] and \[|Z_y|<n.\] Which result in: \[\frac{1}{n}\min[\min_{j \in P_y}(P(\X_{j_0})(1-2\rho)),P(\X_{1})(1-2\rho)]\leq \frac{1}{|Z_y|} P(\Tilde{\X})(1-2\rho).\] Therefore, if we choose $\lambda$  such that 
$\lambda < \frac{1}{n}\min[\min_{j \in P_y}(P(\X_{j_0})(1-2\rho)),P(\X_{1})(1-2\rho)]$, we guarantee that the loss of $\Bet$ is greater than $\Bet^*(\Bet)$.

Now we study the cases where $\beta_j > 0$ for all $j \in P_y$, yet \eqref{eq:AND:condition2} or \eqref{eq:AND:condition3} are not satisfied. In the following, we will cover these cases and show that there is an increase in loss. It is noteworthy that in these cases, there is no change in the regularization term $\|\Bet\|_0$.

\begin{itemize}
  \item (a) Consider the case,

    \begin{align}
    & \exists  U \subsetneq P_y, \sum_{j \in U} \beta_j > -\beta_0 \\
    &\sum_{j \in P_y} \beta_j>-\beta_0.
    \end{align}

    In this case, if we have positivity assumption, there exists a sample such as $\Tilde{\X} \text{ where } [\Tilde{\X}]_i = \begin{cases}
1 & \text{if } i \in U \\
0 & \text{if } i \notin U
\end{cases}$ which wrongly be classified as 1 and the loss will increase at least $P(\Tilde{\X})(1-2\rho)$ compared to the $\Bet^*(\Bet)$.
    
 \item (b) Consider the case,
    \begin{align}
    & \forall U \subsetneq P_y, \sum_{j \in U} \beta_j \leq -\beta_0 \\
    &\sum_{j \in P_y} \beta_j \leq -\beta_0.
    \end{align}

    In this case, if we have positivity assumption, there exists a sample such as $\Tilde{\X} \text{ where } [\Tilde{\X}]_i = \begin{cases}
1 & \text{if } i \in P_y \\
0 & \text{if } i \notin P_y
\end{cases}$, which will wrongly be classified as 0 and the loss will increase at least $P(\Tilde{\X})(1-2\rho)$ compared to the $\Bet^*(\Bet)$.
    
  \item (c) Consider the case,
    \begin{align}
    &\exists U \subsetneq P_y, \sum_{j \in U} \beta_j > -\beta_0 \\
    &\sum_{j \in P_y} \beta_j \leq -\beta_0 .
    \end{align}

In this case, both arguments made in (a) and (b) can be used to show that there is an increase in loss (at least $P(\Tilde{\X})(1-2\rho)$) compared to the $\Bet^*(\Bet)$.
   
\end{itemize}

3- In this case, \(\Bet\) fails to satisfy both conditions 1 and 2. Similar to the OR subgoals, we form the vector $\Bet^*(\Bet)$ by setting the entries of $\Bet$ corresponding to the set $M_y$ to a value bigger than zero such that condition 2 is satisfied. Moreover, we set the entries corresponding to the non-parent variables to zero. The difference in the loss for $\Bet$ and $\Bet^*(\Bet)$ is as follows:

     \begin{align}
        \mathcal{L}(\Bet)- \mathcal{L}(\Bet^*(\Bet)) &= {\bigg[\E[(y -  \hat{y}_{\Bet})^2] -\E[(y -  \hat{y}_{\Bet^*(\Bet)})^2]\bigg]} + {\lambda (\| \Bet \|_0-\| \Bet^*(\Bet) \|_0)}, \\
        &=\sum_{\X\in D_{M_y}\cup \{\X_1\}} P(\X){\bigg[\E[(y -  \mathbbm{1}_{S(\X,\Bet)})^2|\X] -\E[(y -  \mathbbm{1}_{S(\X,\Bet^*(\Bet))})^2|\X]\bigg]}\\
        &+ \sum_{\X\notin D_{M_y}\cup \{\X_1\}}P(\X){\bigg[\E[(y -  \mathbbm{1}_{S(\X,\Bet)})^2|\X] -\E[(y -  \mathbbm{1}_{S(\X,\Bet^*(\Bet))})^2|\X]\bigg]}\\ 
        &+{\lambda (\| \Bet \|_0-\| \Bet^*(\Bet) \|_0)}, \\
        &\geq P(\Tilde{\X})(1-2\rho)+ \lambda (\| \Bet \|_0-\| \Bet^*(\Bet) \|_0)\\
        &=   P(\Tilde{\X})(1-2\rho) + \lambda(|A_y|-|Z_y|) >0.
    \end{align}

with a similar argument, $|A_y|$ appears in cases 1 and $\Tilde{\X}$ and $\lambda $ are chosen based on the arguments made in case 2. Therefore, the last inequality is greater than zero, implying that any optimal solution of the loss function \eqref{apdx:eq: loss} should satisfy both conditions 1 and 2, and the proof is complete.

\newpage

\section{Details of Experiments}
\label{apdx:sc:experimentdetails}

\subsection{Experiments setup for synthetic dataset}
Figures \ref{fig:semis}(a), \ref{fig:semis}(b), and \ref{fig:semis}(c) illustrate the costs for semi-Erdős–Rényi graph \( G(n, p) \) for different values of \( c \). For each semi-Erdős–Rényi graph, we randomly choose a node (subgoal) to be either AND or OR. 
For each node size illustrated, we generated 100 graphs, and for each graph, we conducted 10 trials of the methods. Each point in the plots represents the mean cost of these runs.
Figure \ref{fig:semis}(d) illustrates the cost of \HRCC for a tree \( G(n, b) \) where the branching factor is set to \(3 \).
Each point in the plot of Figure \ref{fig:semis}(d) shows the logarithm of the mean cost over 100 runs on the generated graph.
In our experiments with synthetic data, we considered a relaxed version of Assumption \ref{ass:no-HRC-error} with a specific form of error (\( \frac{1}{1+t} \)) in recovering new edges where $t$ is the iteration number. We considered a 95\% confidence interval for all the experiments.

\subsection{Experiments setup for Minecraft}
CDHRL is the most relevant method, proposed by \citet{hu2022causality}, which utilizes the causal discovery method introduced in \citet{ke2019learning}. Hierarchical Actor-Critic (HAC) is a well-known hierarchical reinforcement learning (HRL) framework \citep{levy2017learning} that enables hierarchical agents to jointly learn a hierarchy of policies, thereby significantly accelerating learning. However, compared to our targeted strategy, HAC exhibits poorer performance. Similar to the setup in \citet{hu2022causality}, Oracle HRL (OHRL) is implemented as a two-level Deep Q-Network (DQN) with Hindsight Experience Replay (HER) introduced in \citet{andrychowicz2017hindsight} and an ``oracle goal space''. In order to imitate the oracle goal space, we manually remove a subset of $\setG$ that is useless in achieving the final subgoal. We conducted experiments in Minecraft using a two-level Deep Q-Network (DQN) with Hindsight Experience Replay (HER) \citep{andrychowicz2017hindsight}.
HER allows sample-efficient learning by reusing the unsuccessful trajectories to achieve the goal by relabeling goals in the previous trajectory with different goals.
Additionally, we experimented in Minecraft using VACERL, introduced in \citet{nguyen2024variable}, by using their publicly available code. However, in our experiments, we were unable to train this method successfully. The success rate and reward remained consistently at zero, similar to the PPO curve in Figure \ref{fig:success-mc} even after 4 million system probes. Due to its poor performance, we excluded it from Figure \ref{fig:success-mc}. 
There are other methods in the literature that showed poorer performance compared to those we included (see CDHRL in \citet{hu2022causality}) in our comparison; therefore, we did not include them in our experiments.

\subsection{Expected causal effect}
\label{apsx:ece}
Consider an estimate of the causal model. This model takes the input $\X$ and returns $\X$ at the next time step. 
In practice, to calculate an estimate of $\E[X_n^{t^*+\Delta} \mid \text{do}(X^{t^*}_A = \alpha), \text{do}(X^{t^*}_B = 0)]$ in Definition \ref{def:ECE}, we set $t^*=0$ and $\Delta=20$. 
We start by initializing $\X$ where the $X_i$s associated with the subgoals in $A$ are set to $\alpha$ and those in $B$ to 0 as the input to the causal model. All the other EVs are initialized with zero, except those associated with the subgoals in the intervention set $\IS$.  
Then we take the output and use it as the next input. 
Each time we feed the causal model with $\X$, we ensure that the $X_i$s  associated with the subgoals in $A$ are set to $\alpha$ and those in $B$ to 0. 
We repeat this process $\Delta$ times and then record the outcome for the final subgoal ($X_n$). We do this procedure multiple times to estimate the average of $X_n$.

\subsection{Sensitivity Analysis}
\label{apdx:sec:sensitivity_analysis}

In Figure \ref{fig:success-mc-latent} we show the performance of HRC algorithm under missing ratio of 20$\%$ of EVs. As you can see, the performance of \HRCH(SSD) is slightly affected by the missing EVs but it still shows relatively robust and stable performance. \HRCB(SSD) is more affected by missing EVs. This result indicates that our targeted strategy was able to adapt to missing information very well. 

\begin{figure}[htp!]
    \centering
    \includegraphics[width=0.7\linewidth]{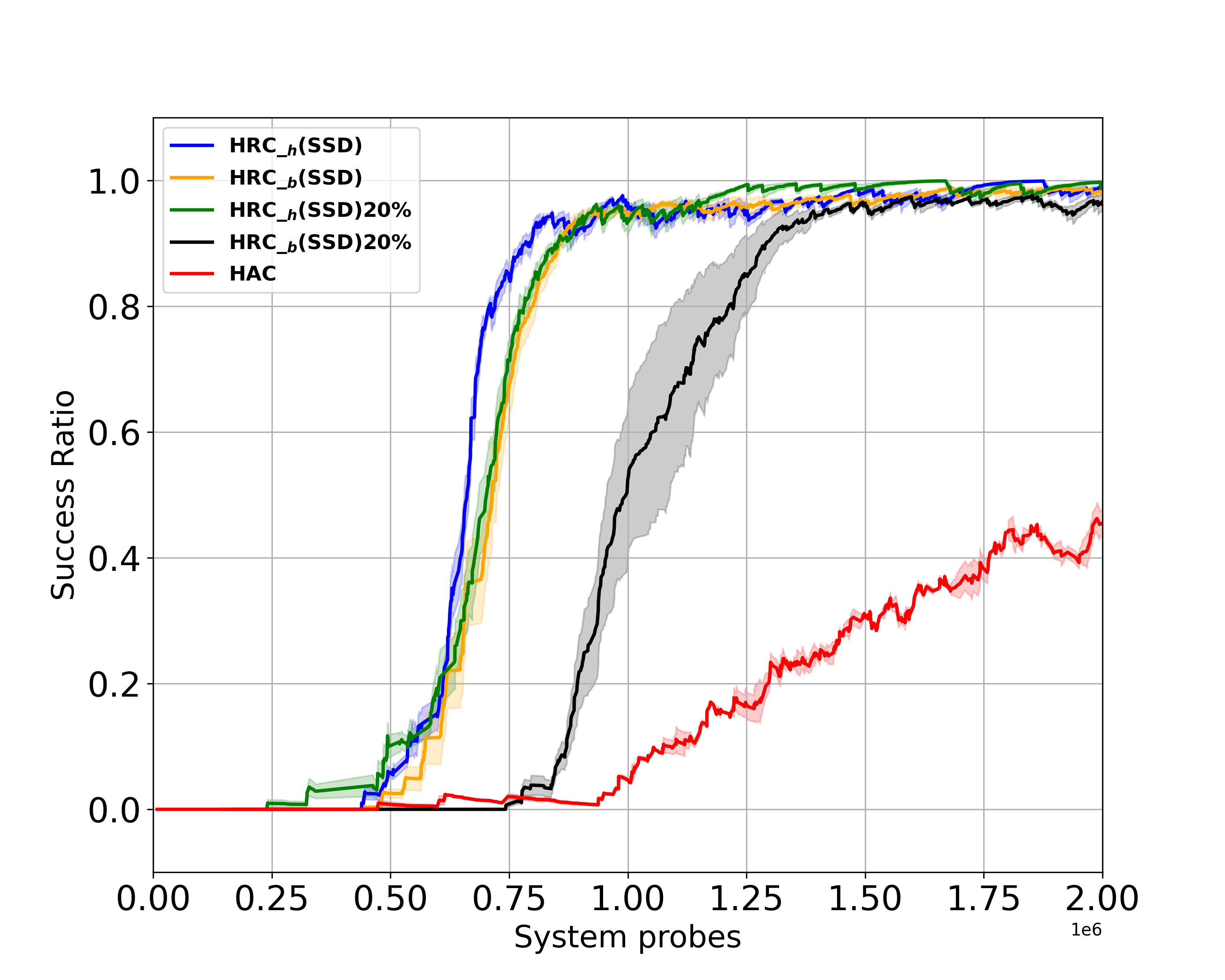}
    \caption{Sensitivity analysis under missing ratio of  20$\%$ of EVs.}
    \label{fig:success-mc-latent}

\end{figure}

\subsection{Hyperparameters}

We utilized a Linux server with Intel Xeon CPU E5-2680 v3 (24 cores) operating at 2.50GHz with 377 GB DDR4 of memory and Nvidia Titan X Pascal GPU. The computation was distributed over 48 threads to ensure a relatively efficient run time.
In the following table, we provide the fine-tuned parameters for each algorithm. Batch sizes are considered the same for all algorithms. For hyperparameter tuning, we performed a grid search, systematically exploring a predefined range of values for each parameter. To ensure robustness, we repeated the process multiple times with different random seeds. Table \ref{tab:hyperparameters} shows the key hyperparameters used in our setup. Additional hyperparameter values can be found in the supplementary materials.

\begin{table}[htp]
\centering
\renewcommand{\arraystretch}{1.5} 
\caption{More hyperparameters of different methods used in the experiment.}
\vskip 0.15in
\begin{center}
\begin{small}
\begin{sc}
\begin{tabular}{p{2cm}|p{2cm}|p{2.5cm}|p{2cm}|p{2cm}|p{2cm}}
\hline
\textbf{Method} &  \makecell{\textbf{Success} \\ \textbf{ Ratio}\\ \textbf{Threshold}} & \makecell{\textbf{Intervention} \\ \textbf{Sampling}\\ \textbf{Batch Size}}  &  \makecell{\textbf{HRL lr}} & \makecell{ \textbf{Training} \\ \textbf{Batch} \\ \textbf{Size}} & \makecell{\textbf{Lasso} \\ \textbf{l1-ratio}\\ \textbf{/others}} \\ \hline
\textbf{\HRCH(SSD)} & 0.5 & 32 & 0.0001 & 128 & 0.0001\\ 
\textbf{\HRCB(SSD)} & 0.5 & 32 & 0.0001 & 128 & 0.0001\\ 
\textbf{HAC} & - & - & 0.0001 & 128 & -\\ 
\textbf{CDHRL} & 0.5 & 32 & 0.0001 & 128 & \textnormal{func-lr=5e-3
 struct-lr=5e-2}\\ 
\textbf{OHRL/ HER} & - & - & 0.0001 & 128 & -\\ 
\textbf{PPO} & - & - & 3e-4 & 128 & entropy coef=0.005\\ \hline

\end{tabular}
\end{sc}
\end{small}
\end{center}
\vskip -0.1in

\label{tab:hyperparameters}
\end{table}

{\subsection{Computational Complexity}

Regarding computational complexity, our experiments indicate that although our method introduces some additional computation compared to some HRL baselines, this is offset by a significant reduction in the number of system probes and overall training cost. Below, we provide a runtime comparison between various methods to reach a success rate of 0.5 in Figure \ref{fig:success-mc}(a):

\begin{table}[h]
    \centering
    \caption{Runtime comparison (minutes) required to reach a success rate of 0.5.}
    \label{tab:runtime_comparison}
    \begin{tabular}{l c}
        \toprule
        \textbf{Method} & \textbf{Average Time (min)}\\
        \midrule
        HER           & 249.1 \\
        HAC           & 185.5 \\
        OHRL          & 133.8 \\
        CDHRL         & 352.3 \\
        HRC\textsubscript{h} (ours) & 33.7 \\
        HRC\textsubscript{b} (ours) & 38.1 \\
        \bottomrule
    \end{tabular}
\end{table}}

\subsection{More Experimental Comparisons}

For more empirical comparison, we conducted experiments on the CraftWorld environment (non-binary) provided by \cite{wang2024skild} (SkiLD). In Figure 9 of their paper, they report achieving a maximum success rate of approximately 0.6 after training the downstream task (note that they need pretraining in their method, and environment steps for pretraining are not plotted in this Figure). However, when we attempted to reproduce their results using the provided code, we were unable to match their reported performance. Additionally, the code ran extremely slowly on our GPU server, requiring about 10 hours to complete just 100k environment steps. Due to time constraints, we contacted the authors to clarify the number of pretraining steps. We received two responses—10 million and 20 million steps—but without a definitive confirmation. Based on Figure 9 of their paper, it appears that at least 12 million environment steps are needed to reach a success rate of 0.5.
In contrast, the performance of our method on this environment is shown in Figure \ref{fig:success-mc-craftworld}. Each unit on the x-axis represents 10 million environment steps. As illustrated, our method surpasses a 0.5 success rate with just 5 million environment steps, whereas SkiLD requires at least 12 million environment steps to reach the same level. Furthermore, our approach achieves a maximum success rate of approximately 0.8 , compared to about 0.6 for SkiLD.

\begin{figure}[htp!]
    \centering
    \includegraphics[width=0.7\linewidth]{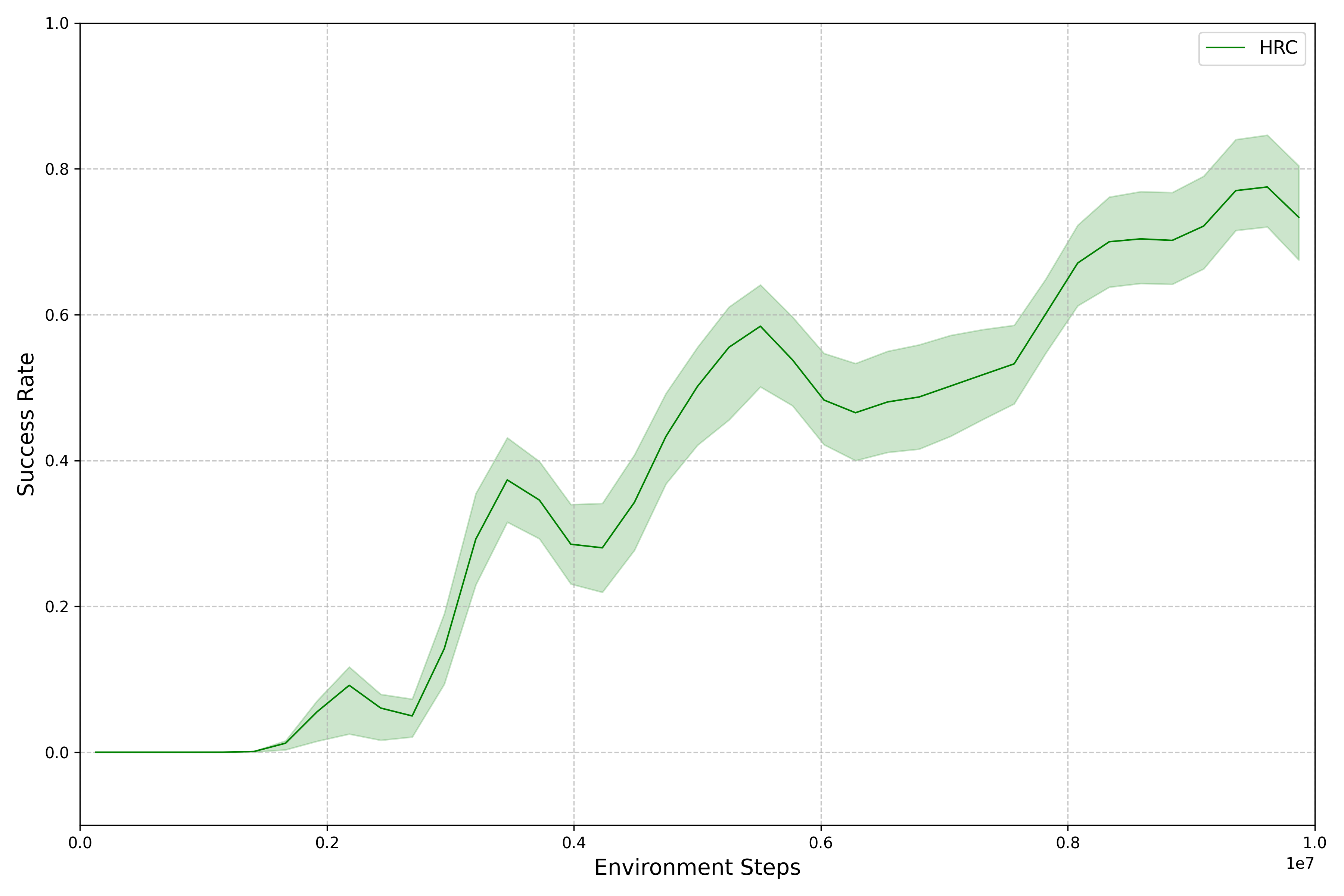}
    \caption{The performance of HRC on CraftWorld environment}
    \label{fig:success-mc-craftworld}

\end{figure}

\end{document}